\documentclass{article}

\usepackage{microtype}
\usepackage{graphicx}
\usepackage{subcaption}
\usepackage{booktabs} 
\usepackage{tcolorbox}
\usepackage{etoc}
\usepackage{titletoc}   

\tcbuselibrary{skins, breakable}

\usepackage{hyperref}

\usepackage[accepted]{icml2026}
\makeatletter
\newcommand{\icmlEnableAppendixToc}{%
  \ifcsname hyper@anchor\endcsname
    \def\addcontentsline##1##2##3{%
      \protected@write\@auxout{%
        \let\label\@gobble \let\index\@gobble \let\glossary\@gobble
      }{%
        \string\@writefile{##1}{%
          \protect\contentsline{##2}{##3}{\thepage}{\@currentHref}%
        }%
      }%
    }%
  \else
    \def\addcontentsline##1##2##3{%
      \protected@write\@auxout{%
        \let\label\@gobble \let\index\@gobble \let\glossary\@gobble
      }{%
        \string\@writefile{##1}{%
          \protect\contentsline{##2}{##3}{\thepage}%
        }%
      }%
    }%
  \fi
}
\makeatother

\usepackage{amsmath}
\usepackage{amssymb}
\usepackage{mathtools}
\usepackage{amsthm}

\usepackage[capitalize,noabbrev]{cleveref}

\theoremstyle{plain}
\newtheorem{theorem}{Theorem}[section]

\newtheorem{lemma}[theorem]{Lemma}

\theoremstyle{definition}

\newtheorem{assumption}[theorem]{Assumption}
\theoremstyle{remark}

\usepackage[textsize=tiny]{todonotes}

\usepackage{multirow}
\usepackage{enumitem}
\usepackage{pifont}
\usepackage{dsfont}
\DeclareMathOperator{\supp}{supp}
\usepackage{cancel}
\newcommand\hcancel[2][black]{\setbox0=\hbox{$#2$}%
\rlap{\raisebox{.45\ht0}{\textcolor{#1}{\rule{\wd0}{1pt}}}}#2}
\definecolor{darkgreen}{HTML}{005e19}
\definecolor{darkblue}{HTML}{240394}
\newcommand{\code}[1]{\texttt{#1}}
\newcommand{\exampleg}[1]{\textcolor{darkgreen}{\textbf{\code{#1}}}}

\newcommand{\cites}[1]{\citeauthor{#1}'s \citeyearpar{#1}}

\newcommand{\resadd}{\mathbin{\oplus}}

\icmltitlerunning{All Circuits Lead to Rome: Rethinking Functional Anisotropy in Circuit and Sheaf Discovery for LLMs}

\begin{document}

\twocolumn[
  \icmltitle{All Circuits Lead to Rome: \\ Rethinking Functional Anisotropy in Circuit and Sheaf Discovery for LLMs}

  \icmlsetsymbol{equal}{*}
  \icmlsetsymbol{prev}{$\dagger$}

  \begin{icmlauthorlist}
    \icmlauthor{Xi Chen}{equal,aff5,aff4}
    \icmlauthor{Mingyu Jin}{equal,aff1}
    \icmlauthor{Jingcheng Niu}{equal,prev,aff2}
    \icmlauthor{Yutong Yin}{aff3}
    \icmlauthor{Jinman Zhao}{aff5}
    \icmlauthor{Bangwei Guo}{aff1}\\
    \icmlauthor{Dimitris N. Metaxas}{aff1}
    \icmlauthor{Zhaoran Wang}{aff3}
    \icmlauthor{Yutao Yue}{aff4}
    \icmlauthor{Gerald Penn}{aff5}
\end{icmlauthorlist}

\icmlaffiliation{aff1}{Rutgers University}
\icmlaffiliation{aff2}{TU Darmstadt}
\icmlaffiliation{aff3}{Northwestern University}
\icmlaffiliation{aff4}{The Hong Kong University of Science and Technology (Guangzhou)}
\icmlaffiliation{aff5}{University of Toronto}

\icmlcorrespondingauthor{Yutao Yue}{yutaoyue@hkust-gz.edu.cn}
\icmlcorrespondingauthor{Gerald Penn}{gpenn@cs.toronto.edu}

  \icmlkeywords{Machine Learning, ICML}

  \vskip 0.3in
]

\printAffiliationsAndNotice{\icmlEqualContribution $^\dagger$Work done while at University of Toronto}


\begin{abstract}
  In this paper, we present empirical and theoretical evidence against a central but largely implicit assumption in circuit and sheaf discovery (CSD), which we term the \emph{Functional Anisotropy Hypothesis}: the idea that functions in large language models (LLMs) are localised to a unique or near-unique internal mechanism. We show that a single LLM task can instead be supported by multiple, structurally distinct circuits or sheaves that are simultaneously faithful, sparse, and complete. To systematically uncover such competing mechanisms, we introduce Overlap-Aware Sheaf Repulsion, a method that augments the CSD objective with an explicit penalty on structural overlap across multiple discovery runs, enabling the discovery of circuits or sheaves with strong task performance but minimal shared structure across a plethora of common CSD benchmarks. We find that this phenomenon becomes increasingly pronounced as the number of discovered sheaves grows and persists robustly across major CSD methods. We further identify an ultra-sparse three-edge sheaf and show that none of its edges is individually indispensable, undermining even weakened notions of canonical or essential components. To explain these findings, we propose a \emph{Distributive Dense Circuit Hypothesis} and provide a theoretical analysis demonstrating that non-unique, low-overlap circuit explanations arise naturally from high-dimensional superposition under mild assumptions. Together, our results suggest that mechanistic explanations in LLMs are inherently non-canonical and call for a rethinking of how CSD results should be interpreted and evaluated.\setcounter{footnote}{0}\footnote{Codes are available at: \href{https://github.com/TonyXiChen/OASR}{https://github.com/TonyXiChen/OASR}.}
\end{abstract}
\section{Introduction}
Nowadays, circuit and sheaf discovery \citep[CSD;][\textit{inter alia}]{wangInterpretabilityWildCircuit2022,conmyAutomatedCircuitDiscovery2023,syedAttributionPatchingOutperforms2024,yuSheafDiscoveryJoint2025} have emerged as promising directions in mechanistic interpretability, which aim to uncover the internal mechanisms and computational processes that underlie how language models (LMs) perform different tasks and exhibit various capabilities. These approaches aim to pry open the ``black box'' of large language models (LLMs) by grounding model behaviour in explicit and interpretable computational mechanisms, thereby providing principled explanations of their strong performance and offering insight into how they can be improved when failures arise.

\begin{figure}
    \centering
    \includegraphics[width=\linewidth]{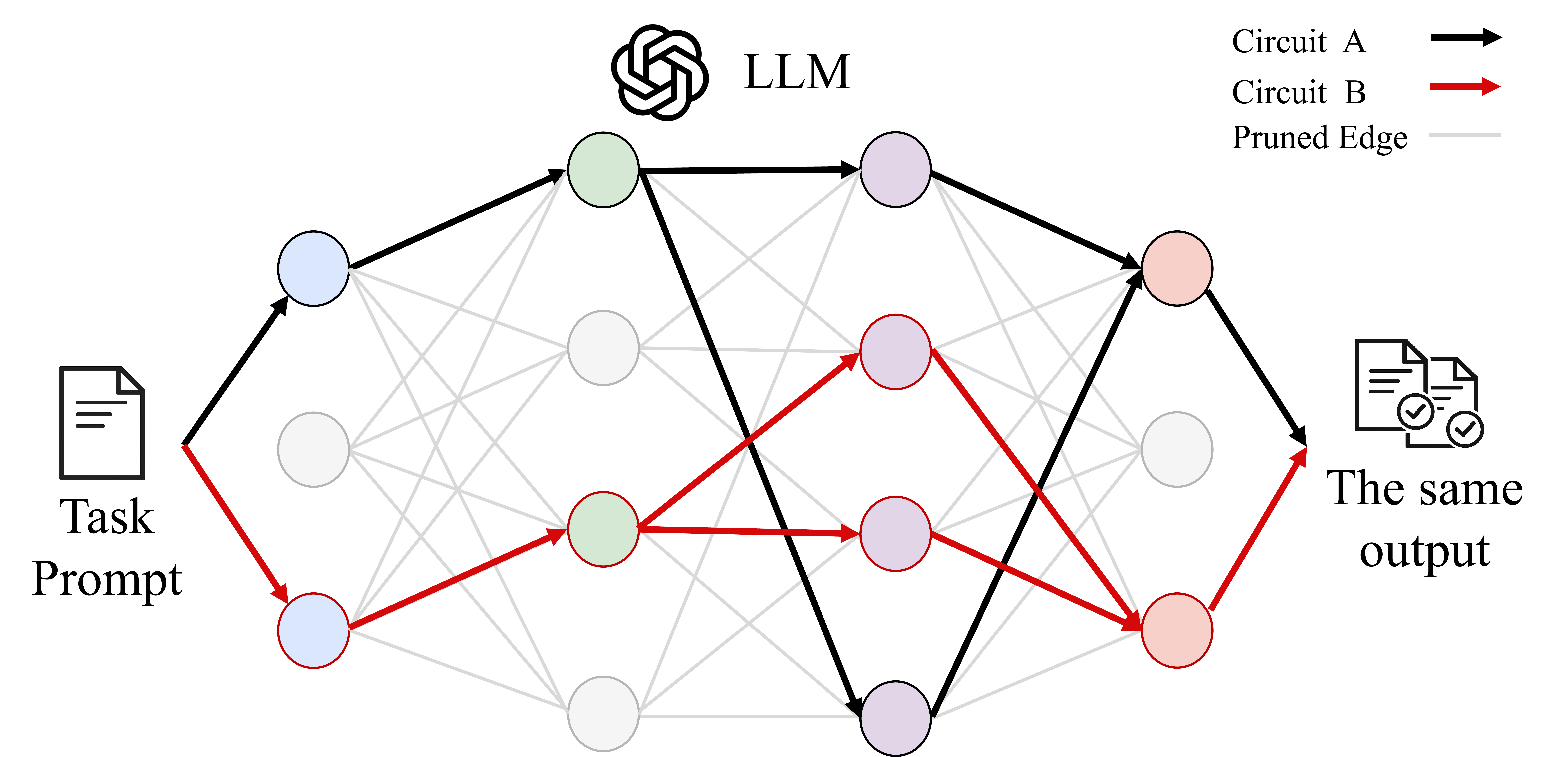}
    \caption{\textbf{Illustration of our findings.} We can identify multiple distinct circuits or sheaves that perform the same LLM task.}
    \label{fig:open_figure}
\end{figure}

A central but largely implicit assumption underlying prior CSD work is that LLM capabilities are supported by a single, well-defined internal mechanism. We refer to this assumption as the \emph{Functional Anisotropy Hypothesis}. Under this view, model functions are assumed to be anisotropically distributed across the architecture, such that each capability corresponds to a unique circuit or sheaf.\footnote{Circuits capture causally relevant subgraphs within the full model, whereas sheaves additionally require functional sufficiency in isolation; see \S\ref{sub:csd} for a detailed discussion. We therefore focus on sheaves, though our findings apply to circuits as well.} This assumption underlies the dominant evaluation paradigms in CSD. Existing approaches either attempt to recover a hand-crafted ground-truth circuit, for example using Tracr \citep{lindnerTracrCompiledTransformers2023}
and related benchmarks \citep{guptaInterpBenchSemiSyntheticTransformers2024},
or aim to extract the smallest possible circuit that preserves task
performance \citep{muellerMIBMechanisticInterpretability2025}. Both paradigms implicitly presume that a single minimal mechanism provides a sufficient explanation of how LLMs perform the task.

In this paper, we make a surprising observation: \textbf{multiple almost non-overlapping sheaves (or circuits) can each faithfully perform the same task independently} (illustrated in \autoref{fig:open_figure}). We introduce a novel \emph{overlap-aware sheaf repulsion} process to systematically uncover such cases across tasks and settings. Using the classic indirect object identification (IOI) task as a case study, we show that multiple structurally distinct sheaves achieve equally strong performance. This non-uniqueness becomes more pronounced at scale: as we identify 20 high-quality sheaves, their union grows while their intersection steadily shrinks, revealing increasing structural divergence despite comparable performance. Strikingly, this phenomenon yields an ultra-sparse three-edge sheaf for IOI. Even in this extreme setting, no single component is indispensable: removing any of the three edges still allows the discovery of high-quality sheaves. Together, these results rule out even a weakened form of the Functional Anisotropy Hypothesis, in which mechanisms decompose into essential and auxiliary components.

Prior work has reported phenomena that appear related to our findings, most notably so-called \emph{backup} mechanisms and the \emph{hydra effect}; however, there are crucial differences, and these prior accounts cannot explain the phenomenon we observe. Notably, \citet{wangInterpretabilityWildCircuit2022} identified Backup Name-Mover Heads in GPT-2 that remain largely inactive during normal inference but activate to recover task performance when the primary Name-Mover Heads are ablated. Similarly, \citet{mcgrathHydraEffectEmergent2023} described the hydra effect, where ablating a crucial attention head or layer prompts an alternative component to take over its function as a compensatory backup. By contrast, our findings suggest a different regime: for a single task, multiple structurally distinct circuits or sheaves can coexist within the same model, each independently supporting the task and remaining causally relevant under intervention. This calls for a rethinking of the Functional Anisotropy Hypothesis: backup-style explanations treat redundancy as an ablation-triggered exception, but in our setting, non-uniqueness is a feature of normal model operation, making such accounts an unsatisfactory patch rather than a mechanism-level explanation.

Finally, we prove a \emph{Distributive Dense Circuit Hypothesis}: for a given LLM task, there exist multiple structurally distinct, low-overlap mechanisms (circuits or sheaves) that are simultaneously faithful to the same task behaviour. Intuitively, this non-uniqueness arises because the space of valid computation subgraphs grows combinatorially with model depth and width, making it increasingly unlikely that a single sparse mechanism uniquely accounts for task execution.

This hypothesis has direct implications for how circuit and sheaf discovery results should be interpreted. Our findings do not invalidate CSD: the mechanisms it uncovers are clearly meaningful and causally relevant. However, they should no longer be interpreted as identifying \emph{the} mechanism underlying a task. Rather, each discovered circuit or sheaf represents one valid realisation among many within a larger space of functionally equivalent mechanisms. As a result, a naively reductionist view---where task behaviour is attributed to a single sparse and indispensable subgraph---is insufficient to explain the observed empirical non-uniqueness. Instead, our results point toward a more distributed view of computation in LLMs, in which task behaviour emerges from a dense and compositional regime of competing, partially redundant mechanisms.

Overall, we raise a fundamental question and advance a rethinking of CSD through the following contributions:
\begin{itemize}[leftmargin=1.2em, itemsep=0em, topsep=0em, parsep=0pt]
    \item We formalise the problem of circuit and sheaf discovery and
    explicitly surface the \emph{Functional Anisotropy Hypothesis}, a
    previously implicit assumption underlying much of prior work on
    mechanistic interpretability (\S\ref{sec:prelim}).
    \item We provide empirical evidence that a single task or capability can be supported by multiple, structurally distinct mechanisms, and develop a principled framework for discovering such alternatives (\S\ref{sec:sheaves}).
    \item We show that even extremely compact explanations, including an ultra-sparse three-edge sheaf, do not admit uniquely essential components, challenging canonical notions of minimal or necessary mechanisms (\S\ref{sec:three_edges}).
    \item We propose the \emph{Distributive Dense Circuit Hypothesis}, which offers a theoretical explanation for why multiple faithful, low-overlap circuits naturally arise in large language models (\S\ref{sec:theory}).
\end{itemize}

\section{Preliminaries: Sheaf, Circuit and Functional Anisotropy}
\label{sec:prelim}

Before presenting our findings, we briefly review the notions of sheaves and circuit discovery, as well as the concept of \emph{functional anisotropy}, which underlies the central motivation of the mechanistic interpretability literature.

\subsection{Circuit and Sheaf Discovery}
\label{sub:csd}

As discussed above, the core of CSD consists of three steps.
\ding{172} We formalise the full computation of an LLM, using the residual stream formulation, as a DAG, which we refer to as the \emph{computation graph};
\ding{173} We then define a target task or capability by curating an appropriate dataset;
\ding{174} Finally, using a specific discovery method, we identify a subgraph whose activity corresponds to the model's performance on the task.
Below, we provide a brief review of these concepts; a more detailed explanation is given in Appendix~\ref{circuit}.

\paragraph{Residual Stream and the Computation Graph}
\citet{elhageMathematicalFrameworkTransformer2021} introduced the concept of the \emph{residual stream}. Instead of describing a Transformer in terms of procedural pseudocode that specifies how the computation proceeds step by step, this perspective views the model as incrementally adding a series of updates to a central hidden state:
\begin{equation}
\mathbf{x}_{i+1}
        = \overbrace{\mathbf{x}_i + \sum_{h\in\mathcal{H}^{(i)}} \Delta^{(i)}_{h}(\mathbf{x}_i)}^{\mathbf{x}_i^{\mathrm{mid}}}
          + \Delta^{(i)}_{\mathrm{mlp}}(\mathbf{x}_i^{\mathrm{mid}}),
    \label{eq:1}
\end{equation}
where $\mathbf{x}_i$ denotes the LM's output at layer $i$ (block $i$), $\mathcal{H}^{(i)}$ is the set of attention heads in block $i$, $\Delta^{(i)}_{h}(\cdot)\in\mathbb{R}^{n\times d}$ denotes the residual contribution produced by head $h$,
and $\Delta^{(i)}_{\mathrm{mlp}}(\cdot)\in\mathbb{R}^{n\times d}$ denotes the residual contribution produced by the MLP. In \cites{elhageMathematicalFrameworkTransformer2021} formulation, biases and layer normalisation are abstracted away for clarity, a convention we also follow throughout this paper.

Note that the residual stream defines a recursive structure and can be unrolled. The final output of the model $\mathbf{x}_{-1}$ is:
\begin{equation}
\mathbf{x}_{-1} = \mathbf{x}_0 + \sum_{i} \sum_{h\in\mathcal{H}^{(i)}} \Delta^{(i)}_{h}(\mathbf{x}_i)
          + \sum_{i} \Delta^{(i)}_{\mathrm{mlp}}(\mathbf{x}_i^{\mathrm{mid}}).
\end{equation}
Each intermediate state $\mathbf{x}_i$ can itself be further unrolled; for readability, we omit this expansion here. This decomposition provides an explicit map of how information flows through the model. In particular, the output of any component is influenced by all components (attention heads and MLPs) in preceding layers. This structure naturally induces a DAG representation of computation. Particularly, for a function in the residual stream of the form $f(a + b)$, we introduce directed edges $a \rightarrow f$ and $b \rightarrow f$. Thus, in the computation graph $G = (V, E)$, $V$ consists of model components (attention heads, MLPs, and abstract input/output nodes), and $E$ contains directed edges representing the aforementioned information flow between components. In a standard Transformer language model (e.g., GPT-2 small with 12 layers and 12 attention heads), the output node receives edges from all preceding components; each MLP node receives edges from all components in earlier layers as well as the attention heads in its own layer; and each attention head receives edges from all components in preceding layers.

\paragraph{Sheaf and Circuit Discovery}

Interestingly, work that adopts this computation graph formulation has shown that a task can often be supported by only a sparse subgraph. \citet{wangInterpretabilityWildCircuit2022} are among the first to successfully demonstrate such results.\footnote{Note that \citet{wangInterpretabilityWildCircuit2022} use a different computation graph formulation, treating the model as a graph over forward-pass terms with interactions defined by attention and value paths.} They introduce the \emph{indirect object identification} (IOI) task, which has since become one of the most commonly used benchmark tasks in the field, and show that a sparse subnetwork contributes causally to IOI task performance, while the remaining components can be largely ablated with minimal effect. Inspired by this finding, ACDC \citep{conmyAutomatedCircuitDiscovery2023} automated circuit discovery and formalised an explicit residual-stream-based computation graph formulation that has since become the standard representation used in subsequent circuit discovery work.

CSD work can be broadly characterised along two axes: the discovery method (\emph{heuristic} vs. \emph{gradient}-based) and the underlying formulation (\emph{circuits} vs. \emph{sheaves}):
\begin{itemize}[leftmargin=*,itemsep=0pt,topsep=0pt]
    \item[\ding{80}] ACDC \citep{conmyAutomatedCircuitDiscovery2023} adopts a heuristic, intervention-based approach, iteratively removing edges from the residual-stream computation graph, measuring the resulting causal effect on task performance, and pruning or retaining edges based on a chosen threshold.
    \item[\ding{80}] EAP \citep{syedAttributionPatchingOutperforms2024} replaces iterative ablations with a gradient-attribution-based formulation, estimating the importance of edges via attribution scores and selecting task-relevant subgraphs through thresholding.
    \item[\ding{80}] Edge Pruning \citep[EP;][]{bhaskarFindingTransformerCircuits2024} represents a paradigm shift in circuit discovery by reframing it as a gradient-based optimisation problem. Edges whose masks are turned off are treated as pruned and replaced with patched baseline activations, obtained either by activation interchange from corrupted inputs or by mean aggregation, following the same intervention scheme as ACDC and EAP. Under this patched execution, circuit discovery is formulated as an optimisation problem whose objective is to find a sparse binary mask assignment that preserves task fidelity.
    \item[\ding{80}] DiscoGP \citep{yuSheafDiscoveryJoint2025} spotted a common limitation shared by circuit-based formulation. The discovered circuits are typically not \emph{standalone}, meaning that they typically fail to reproduce the model's behaviour when executed in isolation (not replacing with patched activations where a lot of the original information retains, but with zero). To address this, they introduce the notion of a \emph{sheaf}, defined as a subgraph that both causally contributes to the computation and can independently sustain task performance. Methodologically, DiscoGP formulates sheaf discovery as a joint optimisation problem over edge selections\footnote{DiscoGP also considers joint pruning of edges and weight parameters. In this paper, we focus exclusively on edges, as this yields a theoretically more manageable computation-graph formulation and aligns more closely with prior methods such as ACDC, EAP, and EP, which likewise operate solely at the level of edges.} using a Gumbel--Sigmoid relaxation with a straight-through estimator, enabling gradient-based learning of discrete subgraphs while enforcing standalone fidelity.
\end{itemize}
Overall, optimisation-based methods tend to outperform heuristic approaches by jointly optimising edge selections, with DiscoGP reporting the state-of-the-art performance.

\paragraph{Evaluation Protocol}
Throughout the paper, we evaluate a discovered circuit or sheaf by executing the model on the corresponding masked computation graph, without modifying any model weights. Unless otherwise stated, inactive edges are zero-ablated: information is allowed to flow only through selected edges, and pruned edges contribute zero to downstream components. We report task accuracy as the fraction of examples for which the masked model preserves the correct task-level prediction criterion; for IOI, this corresponds to assigning higher probability to the correct indirect object than to the distractor name. We use \emph{edge density} to denote the fraction of selected edges among all candidate edges, and measure structural similarity between two discovered mechanisms by edge-level intersection-over-union,
\[
\operatorname{IoU}(A,B)=\frac{|E_A\cap E_B|}{|E_A\cup E_B|}.
\]
When reporting complement accuracy, we evaluate the complementary masked graph obtained by retaining the edges not selected by the discovered mechanism.

\subsection{Functional Anisotropy Hypothesis}

Much of the CSD literature implicitly assumes what we call the \emph{Functional Anisotropy Hypothesis}: that a given function in a neural network is implemented by a unique, or near-unique, internal mechanism that is structurally privileged over alternatives. In practice, this assumption is reflected in how circuit discovery is framed as the identification of the subgraph responsible for a behaviour. For example, \citet{wangInterpretabilityWildCircuit2022} describe their goal as ``identifying an induced subgraph of the model's computational graph ... responsible for completing the task.'' Similarly, \citet{conmyAutomatedCircuitDiscovery2023} characterise circuits as ``subgraphs with distinct functionality,'' reinforcing the view that task-relevant computation can be localised to a specific mechanism. More recent optimisation-based approaches adopt the same framing: \citet{bhaskarFindingTransformerCircuits2024} define circuit discovery as ``identifying the subset of a model's computational graph that is most relevant to a particular model behaviour.''

\citet{meloux2025everything} presents the most relevant formulation. They formally examine whether a fixed model behavior admits a unique mechanistic explanation. Our work presents two key differences. First, \citet{meloux2025everything} study simple models and tasks, whereas we evaluate pretrained LMs (e.g., GPT-2, Pythia) on realistic linguistic tasks. Second, their notion of ``circuit'' differs from our residual-based circuits or sheaves; their work is better viewed as providing foundational evidence for superposition in neural networks, rather than directly characterising circuits in transformer LMs.

Importantly, this assumption is also baked into how CSD methods are evaluated. Tracr-based benchmarks and related semi-synthetic tasks assess success by how closely a discovered circuit matches a predefined ground-truth mechanism \citep{lindnerTracrCompiledTransformers2023, guptaInterpBenchSemiSyntheticTransformers2024}, implicitly treating the existence of a single correct explanation as given. Similarly, minimality-based evaluations such as MIB reward circuits that achieve high task performance with as few components as possible \citep{muellerMIBMechanisticInterpretability2025}, presuming that further compression converges toward a unique, essential core. In both cases, evaluation criteria implicitly encode the Functional Anisotropy Hypothesis by favouring explanations that are singular, minimal, and canonical.

\begin{table}[t]
\centering\small
\caption{\textbf{Evaluation of two sheaves discovered for IOI using OASR.} Both sheaves achieve perfect IOI accuracy and satisfy standard sheaf quality criteria, despite having only a 4.1\% IoU overlap.}
\begin{tabular}{ccccc}
\toprule
Sheaf & IOI Acc. & Comp. Acc. & Edge Density & Edge \# \\ \midrule
$A$   & 100\%    & 46.20\%    & 3.56\%      & 1158    \\
$B$   & 100\%    & 45.80\%    & 3.97\%      & 1289    \\
\bottomrule
\label{tab:ab_eval}
\end{tabular}
$\|A\cap B\| = 96; \ \|A\cup B\| = 2351; \ \mathrm{IoU}(A,B)=4.1\%$
\vspace{-2em}
\end{table}

\section{Functional Plethorae of Mechanisms}
\label{sec:sheaves}

In this section, we present our main empirical results, showing that a single LLM task can be faithfully realised by many distinct circuits and sheaves. We introduce \emph{Overlap-Aware Sheaf Repulsion} (OASR; \S\ref{sub:overlap_penalty}) to systematically discover low-overlap mechanisms, demonstrate that non-uniqueness intensifies as the number of discovered sheaves increases (\S\ref{sub:twenty_sheaves}), and show that this effect persists across multiple circuit and sheaf discovery methods beyond DiscoGP (\S\ref{sub:other_tasks}).\footnote{All results are reported on GPT-2, as it is the only model commonly supported by all prior CSD methods. See Appendix \ref{app:pythia} for more results on other models.}

\subsection{Discovering Functionally Equivalent Sheaves with \emph{Overlap-Aware Sheaf Repulsion}}
\label{sub:overlap_penalty}

\paragraph{Overlap-Aware Sheaf Repulsion}

Recall that DiscoGP formulates sheaf discovery as a differentiable edge-selection problem by associating each edge \(e \in E\) with a learnable logit \(l_e\) and optimising a binary pruning mask \(m_e \in \{0,1\}\) under sparsity, fidelity, and completeness objectives. To make this discrete optimisation tractable a Gumbel--Sigmoid relaxation is used to compute a continuous edge score:
\begin{equation}
    s_e = \sigma\!\left(\frac{l_e - \log\frac{\log \mathcal{U}_1}{\log \mathcal{U}_2}}{\tau}\right),
\end{equation}
where \(\mathcal{U}_1, \mathcal{U}_2 \sim \mathrm{Uniform}(0,1)\), and using a straight-through estimator \citep{bengioEstimatingPropagatingGradients2013} to obtain a hard mask while preserving gradients. Sparsity is induced through a differentiable penalty on the expected edge activation:
\begin{equation}
    \mathcal{L}_{\mathrm{sparsity}} = \frac{1}{|E|} \sum_{e \in E} \sigma(l_e),
\end{equation}
which softly discourages selecting unnecessary edges without imposing hard constraints.

Inspired by this, we introduce an \emph{overlap-aware sheaf repulsion} mechanism to discover competing sheaves. After obtaining an initial sheaf \(R\), we add an overlap loss that penalises the expected activation of edges already selected:
\begin{equation}
    \mathcal{L}_{\mathrm{overlap}}(R) = \frac{1}{|E|} \sum_{e \in R} \sigma(l_e),
\end{equation}
and optimise the combined objective:
\begin{equation}
    \mathcal{L}
    = \mathcal{L}_{\mathrm{fidelity}}
    + \lambda_s \mathcal{L}_{\mathrm{sparsity}}
    + \lambda_c \mathcal{L}_{\mathrm{complete}}
    + \lambda_o \mathcal{L}_{\mathrm{overlap}}(R).
\end{equation}
This additional term induces a repulsive interaction between successive discovery runs, encouraging the optimiser to identify alternative sheaves that satisfy fidelity and completeness while minimising structural overlap with previously discovered mechanisms. Repeating this process yields multiple competing, low-overlap sheaves that faithfully implement the same task.

\paragraph{Two Sheaves for IOI}

We first present results on the IOI task, and later show that our findings generalise to other major benchmarks. IOI, introduced by \citet{wangInterpretabilityWildCircuit2022}, has since become the most widely used task for CSD.

Using the overlap-penalised procedure, we discover two sheaves, $A$ and $B$, that both perform IOI perfectly. As shown in Table~\ref{tab:ab_eval}, the two sheaves achieve identical IOI accuracy and are comparable across standard evaluation metrics, including completeness accuracy and edge density. By existing criteria, both $A$ and $B$ would be considered good sheaves and offer meaningful explanatory power.
Notably, their overlap is extremely small: the intersection-over-union, IoU$(A,B)$, is only 4.1\%, which is close to the overlap expected by chance. This indicates that two largely distinct sheaves can support the same task, a finding that directly runs counter to the Functional Anisotropy Hypothesis.

This phenomenon is qualitatively different from previously discussed notions of redundancy, such as backup components or the hydra effect. Prior work has shown that certain edges or components may remain largely inactive during normal inference and only become engaged when the primary mechanism is disrupted. For example, \citet{wangInterpretabilityWildCircuit2022} identified backup Name-Mover Heads that activate only after the main Name-Mover Heads are ablated, while \citet{mcgrathHydraEffectEmergent2023} described the hydra effect, in which alternate components compensate for the removal of a critical head or layer. In contrast, our results reveal a fundamentally different picture: multiple circuits or sheaves that support the same task are already present and active during standard model operation. Rather than serving merely as latent backups, these sheaves represent alternative task-supporting mechanisms within the same model, suggesting that task-relevant computation in LLMs is not necessarily concentrated in a single privileged subgraph, but can be realised by multiple, structurally distinct and functionally viable subgraphs.

\paragraph{A Closer Look at Structural Differences}

\begin{figure}[t]
    \centering
    \begin{subfigure}{\linewidth}
        \includegraphics[width=0.95\linewidth]{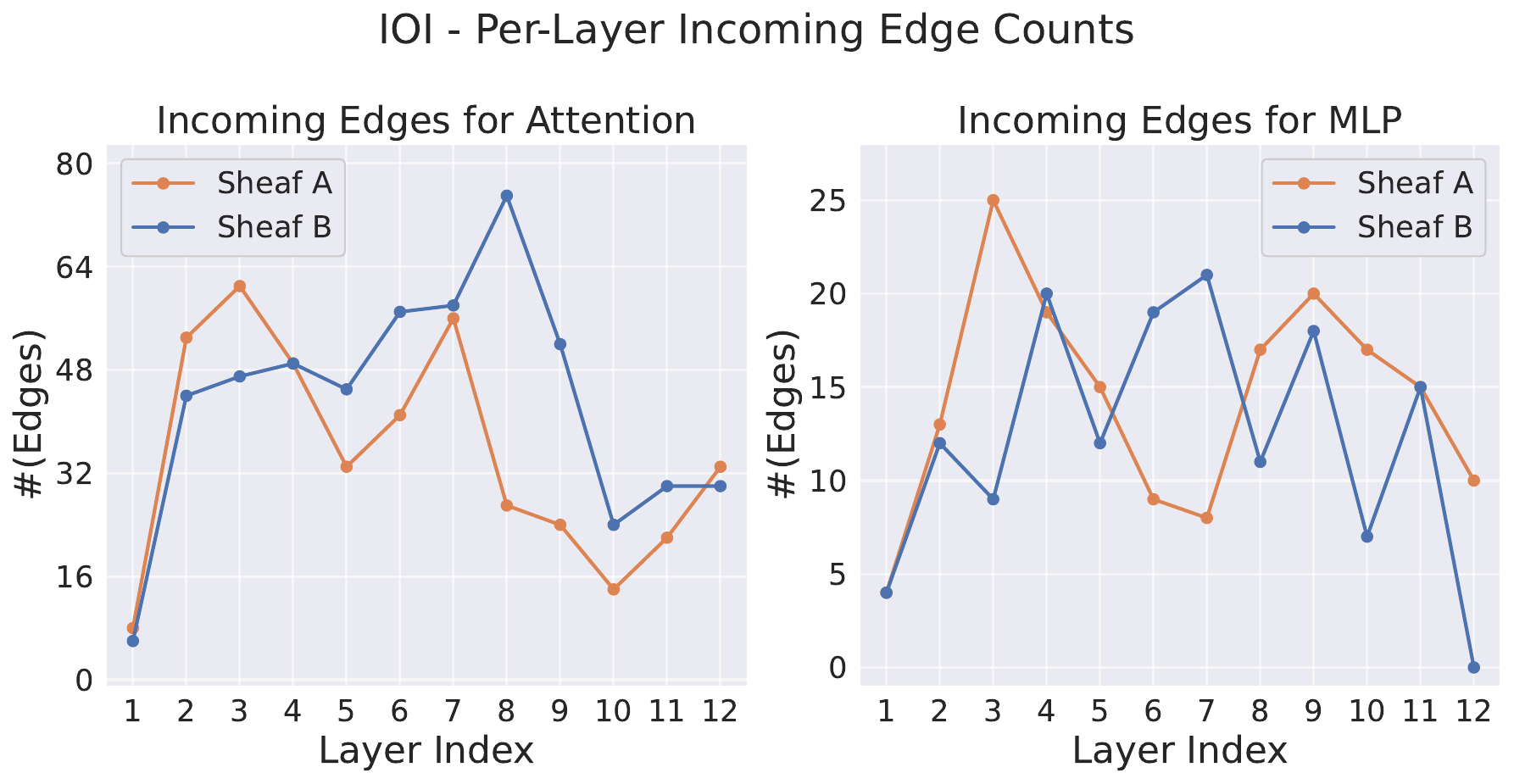}
        \caption{The pair of IOI sheaves found using OASR with the least overall IoU. The distributions of incoming edges for MLPs differ substantially in the mid layers.}
        \label{fig:ioi_least_iou_pair_densities}
    \end{subfigure}

    \begin{subfigure}{\linewidth}
        \includegraphics[width=0.95\linewidth]{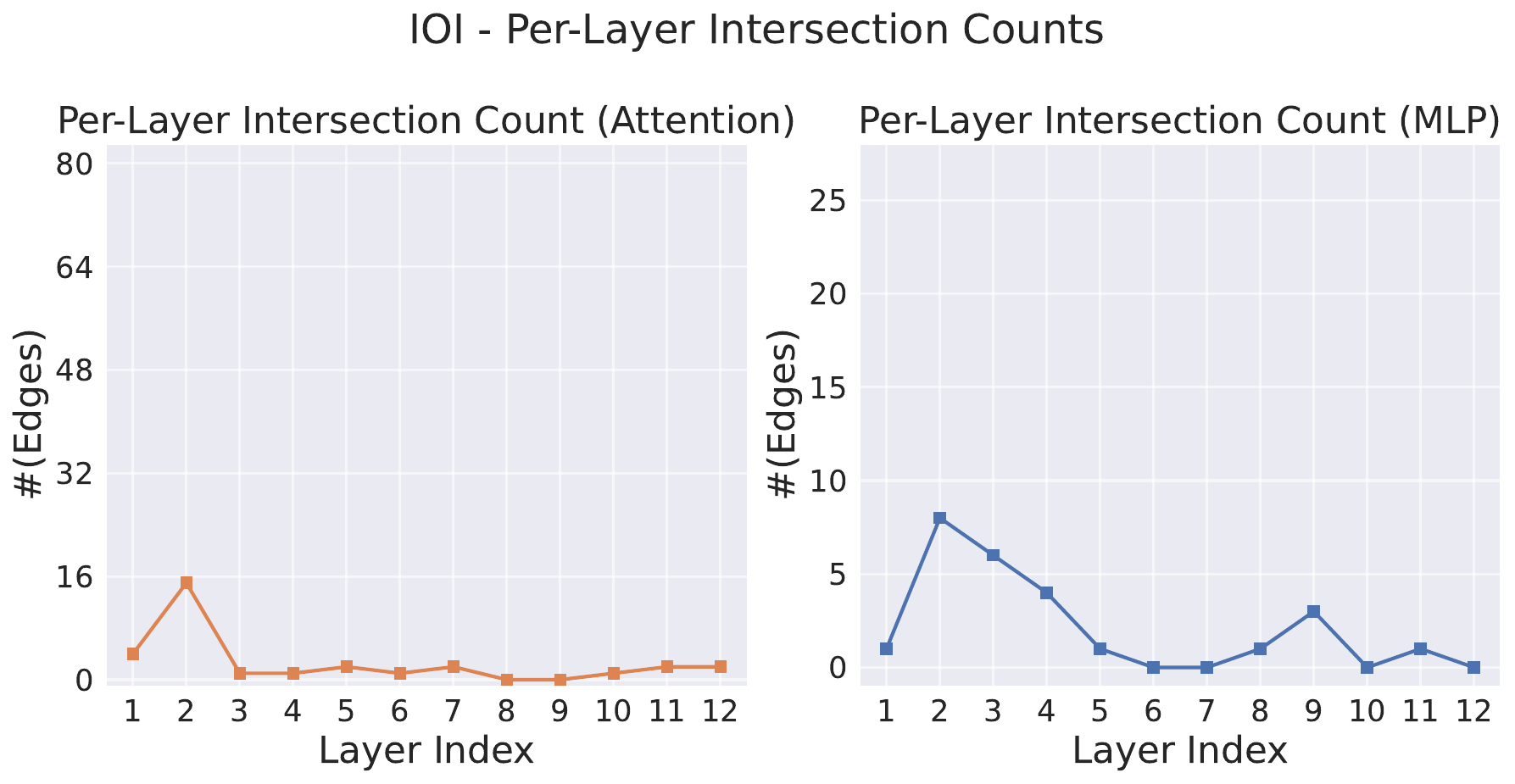}
        \caption{The distribution of the number of incoming edges in the intersection for all layers in the above IOI sheaf pair for attention heads and MLPs. They agree more in the early layers, with very few edges in the intersection for the mid and later layers.}
        \label{fig:ioi_least_similar_pair_densities}
    \end{subfigure}

    \caption{\textbf{Analysis of structural differences between two IOI sheaves.} The two sheaves exhibit markedly different layer-wise edge distributions despite identical task performance, indicating that the observed low overlap is not simply due to a trivial reparameterisation or rotation of model components.}
\end{figure}
To characterise how competing sheaves differ structurally, we analyse their layer-wise composition. Figure~\ref{fig:ioi_least_iou_pair_densities} shows the pair of IOI sheaves with minimal overall IoU, revealing sharply different distributions of edges across layers despite identical task performance. Complementarily, Figure~\ref{fig:ioi_least_similar_pair_densities} examines the pair with the least similar layer-wise density profiles, which likewise exhibit markedly different structural patterns even without explicit overlap minimisation. Together, these results show that competing sheaves differ not only at the level of individual edges but also in how task-relevant computation is distributed across layers, indicating genuinely distinct structural organisations rather than superficial reshufflings.

\paragraph{All Common Tasks}

\begin{table}[t]
    \centering\small
    \caption{\textbf{Evaluation of two sheaves discovered for common benchmark tasks.}
    For each task, we report the performance of the two discovered sheaves and their IoU overlap.}
    \begin{tabular}{lccc}
    \toprule
    Task & Sheaf $A$ Acc. & Sheaf $B$ Acc. & IoU$(A,B)$ \\
    \midrule
    IOI        & 100\% & 100\% & 4.1\% \\
    BLiMP      & 96.8\% & 92.6\% & 5.1\% \\
    AGA        & 96.0\% & 95.3\% & 6.2\% \\
    ANA        & 98.0\% & 91.3\% & 5.3\% \\
    DNA        & 100\% & 96.2\% & 5.8\% \\
    DNA i      & 100\% & 99.0\% & 6.2\% \\
    DNA a      & 98.5\% & 97.0\% & 7.5\% \\
    DNA ia     & 100\% & 99.0\% & 6.4\% \\
    Docstring  & 98.9\% & 100\% & 11.0\% \\
    \bottomrule
    \end{tabular}
    \label{tab:all_tasks_two_sheaves_1}
\end{table}

This phenomenon is not specific to IOI. Applying the same overlap-penalised discovery procedure across a range of commonly used benchmark tasks, we consistently identify two sheaves per task that both achieve strong task performance. As summarised in Table~\ref{tab:all_tasks_two_sheaves_1}, the two sheaves for each task are comparable under standard evaluation metrics, yet exhibit very low IoU overlap. Across tasks, the observed overlap is close to what would be expected by chance, indicating that the existence of multiple, largely distinct task-supporting sheaves is a general pattern rather than an artefact of a particular benchmark.

\subsection{Mutual Intersection Further Diminishes Across Many Sheaves}
\label{sub:twenty_sheaves}

\begin{table*}[t]
\centering
\footnotesize
\caption{\textbf{Mutual intersection further diminishes across many sheaves.}
We report the size of the total intersection and union across 20 discovery runs, together with their ratio (Big IoU), as well as the average quality of the resulting sheaves. Despite consistently strong task performance, the mutual intersection across sheaves remains negligible.}
\label{table:circuit-combined-results}

\begin{tabular}{l cccccccc}
\toprule
\multirow{2.5}{*}{\textbf{Task}} & \multirow{2.5}{*}{\textbf{Method}}
& \multicolumn{3}{c}{\textbf{Mutual Intersection}}
& \multicolumn{4}{c}{\textbf{Average Sheaf Quality}} \\
\cmidrule(lr){3-5} \cmidrule(lr){6-9}
&
& {$|E_{\cap}|$} & {$|E_{\cup}|$} & {Mutual IoU}
& {Edge D.} & {$\#E$} & {Acc. (\%)} & {Comp. (\%)} \\
\midrule

\multirow{2}{*}{IOI}
& Random Init & 20 & 6560 & 0.30\% & 3.04\% & 987.0 & 99.95 & 45.64 \\
& OASR        & 11 & 7382 & 0.15\% & 2.86\% & 928.5 & 99.59 & 45.87 \\
\midrule

\multirow{2}{*}{BLiMP}
& Random Init & 50 & 4858 & 1.03\% & 2.62\% & 852.0 & 97.26 & 44.78 \\
& OASR         & 37 & 5289 & 0.70\% & 2.34\% & 758.7 & 96.11 & 45.75 \\
\midrule

\multirow{2}{*}{AGA}
& Random Init & 21 & 4758 & 0.44\% & 2.42\% & 785.2 & 94.67 & 43.10 \\
& OASR         & 15 & 5791 & 0.26\% & 2.35\% & 764.8 & 94.43 & 43.23 \\
\midrule

\multirow{2}{*}{ANA}
& Random Init & 26 & 4531 & 0.57\% & 2.21\% & 718.8 & 96.40 & 40.10 \\
& OASR         & 10 & 4890 & 0.20\% & 1.91\% & 621.4 & 95.00 & 39.77 \\
\midrule

\multirow{2}{*}{DNA}
& Random Init & 17 & 3134 & 0.54\% & 1.51\% & 489.2 & 96.73 & 55.38 \\
& OASR         & 17 & 3440 & 0.49\% & 1.33\% & 431.1 & 95.11 & 55.30 \\
\midrule

\multirow{2}{*}{DNA i}
& Random Init & 19 & 3649 & 0.52\% & 1.75\% & 569.9 & 98.65 & 55.45 \\
& OASR         & 9  & 3252 & 0.28\% & 1.18\% & 384.4 & 96.75 & 54.65 \\
\midrule

\multirow{2}{*}{DNA a}
& Random Init & 23 & 3982 & 0.58\% & 2.16\% & 701.2 & 96.35 & 49.74 \\
& OASR         & 22 & 4469 & 0.49\% & 2.11\% & 684.6 & 95.26 & 51.77 \\
\midrule

\multirow{2}{*}{DNA ia}
& Random Init & 33 & 4813 & 0.69\% & 2.59\% & 842.1 & 95.00 & 50.80 \\
& OASR        & 25 & 5063 & 0.49\% & 2.42\% & 786.0 & 93.90 & 52.40 \\
\midrule

\multirow{2}{*}{Docstring}
& Random Init & 39 & 6719 & 0.58\% & 3.39\% & 1100.9 & 99.61 & 51.65 \\
& OASR        & 18 & 7314 & 0.25\% & 3.08\% & 999.8  & 99.04 & 51.75 \\
\bottomrule
\end{tabular}
\end{table*}

Building on the two-sheaf results, we next examine what happens when the number of discovered sheaves is increased for a fixed task. For each task, we repeat the search 20 times. We use OASR throughout, and for comparison we also include results obtained without applying OASR, where diversity arises solely from different random initialisations. Table~\ref{table:circuit-combined-results} summarises the results.

Across all tasks, we find that the mutual intersection across the 20 discovered sheaves is extremely small. In many cases, the global intersection contains only a few dozen edges, corresponding to a mutual IoU well below 1\%. This holds even though the individual sheaves consistently achieve strong task performance and satisfy standard quality criteria. Importantly, introducing the OASR further reduces the shared intersection across runs, while leaving sparsity and performance largely unchanged. Taken together, these results indicate that increasing the number of discovered sheaves does not lead to convergence onto a common core. Instead, task-relevant computation can be realised through many distinct sheaves with negligible shared structure.

The vanishing mutual intersection observed here cannot be explained as noise from random initialisation or instability in the discovery procedure. Even across repeated runs that all yield high-quality sheaves, the intersecting structure remains minimal, and shrinks further when overlap is explicitly penalised. This behaviour is inconsistent with the view that a task is supported by a small, specific subgraph that discovery methods should eventually converge to. Instead, it suggests that circuit and sheaf discovery are uncovering one of many viable realisations of task-relevant computation, rather than approximations to a single underlying mechanism. In this sense, functional anisotropy appears not as a fundamental property of the model, but as an artefact of searching for a single solution in a space that admits many.

\subsection{Beyond DiscoGP: The Findings Persist Across Major Sheaf and Circuit Discovery Methods}
\label{sub:other_tasks}

So far, we have focused on results obtained with DiscoGP. We now ask whether the same conclusions hold for other widely used circuit discovery methods. In particular, we consider ACDC, EAP, and EP, which span different design choices, including heuristic edge removal, gradient-based attribution, and hybrid pruning strategies. Despite these methodological differences, all three approaches are commonly interpreted through the lens of functional anisotropy, namely the assumption that a task is supported by a small, privileged set of components that discovery methods should converge to. Repeating our analysis across these methods, we find the same qualitative pattern as before: multiple high-quality circuits can be discovered for the same task, while their overlap remains small and highly variable. This shows that the breakdown of functional anisotropy is not specific to DiscoGP, but persists across major circuit and sheaf discovery paradigms.

\paragraph{ACDC Is Sensitive to Traversal Order}

\begin{table}[t]
\centering
\setlength{\tabcolsep}{2pt}
\small
\caption{
ACDC circuit results on IOI at $\tau=0.00398$ under interchange and zero ablation.
}
\begin{tabular}{cccccc}
\toprule
\multirow{2.5}{*}{\textbf{Ablation}}
& \multicolumn{2}{c}{\textbf{Intersection}}
& \multicolumn{3}{c}{\textbf{Average Metrics}} \\
\cmidrule(lr){2-3} \cmidrule(lr){4-6}
& {$|E_{\cap}|$} & {$|E_{\cup}|$}
& {Edge D.} & {$\#E$} & {Acc. (\%)} \\
\midrule

Interchange
& 174 & 608
& 1.09\% & 353.4 & 90.75 \\

Zero
& 302 & 6951
& 6.76\% & 2194.7 & 87.50 \\

\bottomrule
\end{tabular}
\label{table:acdc-ioi}
\end{table}

ACDC \citep{conmyAutomatedCircuitDiscovery2023} operates in reverse topological order, pruning edges deemed insignificant based on changes in KL divergence. Although largely deterministic for a fixed threshold, we find that ACDC is sensitive to the traversal order. In particular, there is no empirical or theoretical justification for treating attention head indices as semantically meaningful. Accordingly, we vary the traversal order of attention heads within each layer while preserving the overall reverse-topological structure, in contrast to the default increasing head-index order used by \citet{conmyAutomatedCircuitDiscovery2023}. Even with an identical threshold, we observe substantially different circuits on IOI across random seeds, with far-from-unity IoUs (Table~\ref{table:acdc-ioi}). Under zero ablation, the resulting circuits are denser and exhibit lower structural overlap. Together, these findings indicate that the circuits recovered by ACDC are not uniquely determined, providing further evidence against the Functional Anisotropy Hypothesis.

\paragraph{EAP Is Sensitive to Task-Irrelevant Information}

\begin{figure}[t]
    \centering
    \begin{subfigure}[t]{0.49\linewidth}
        \centering
        \includegraphics[width=\linewidth]{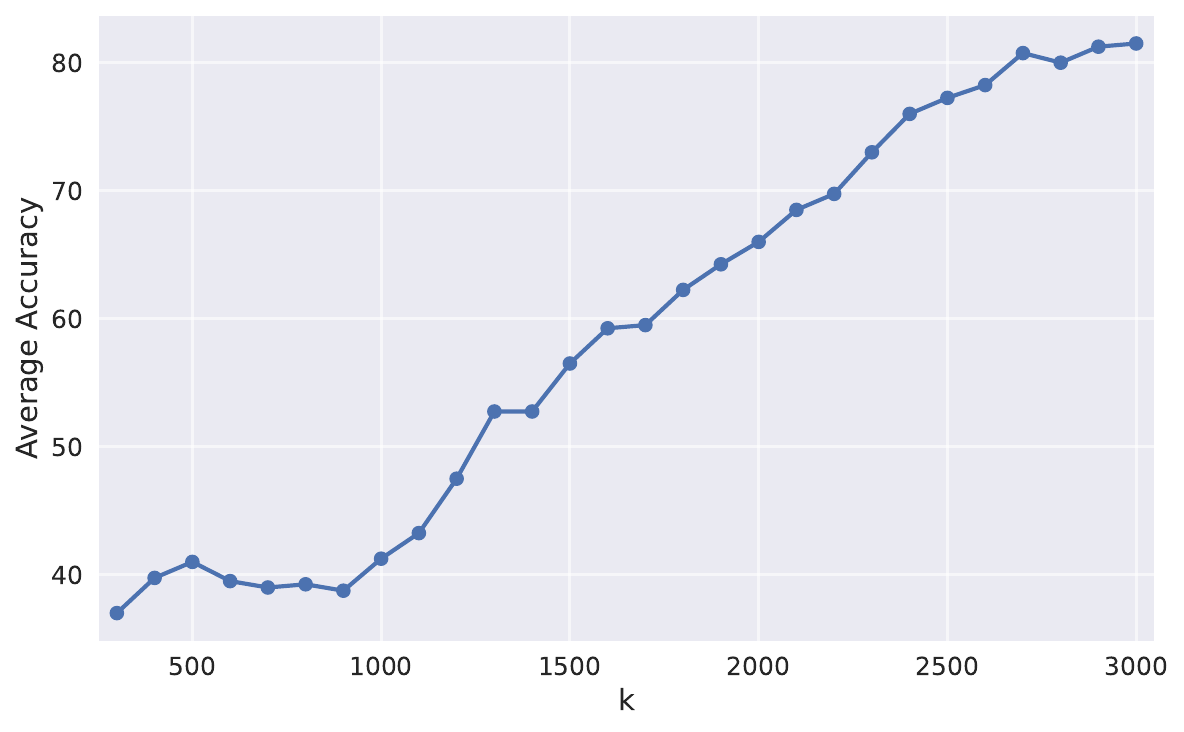}
        \caption{Average task accuracy of EAP circuits on IOI as a function of the top-$k$ selection threshold.}
        \label{fig:eap_acc}
    \end{subfigure}
    \begin{subfigure}[t]{0.49\linewidth}
        \centering
        \includegraphics[width=\linewidth]{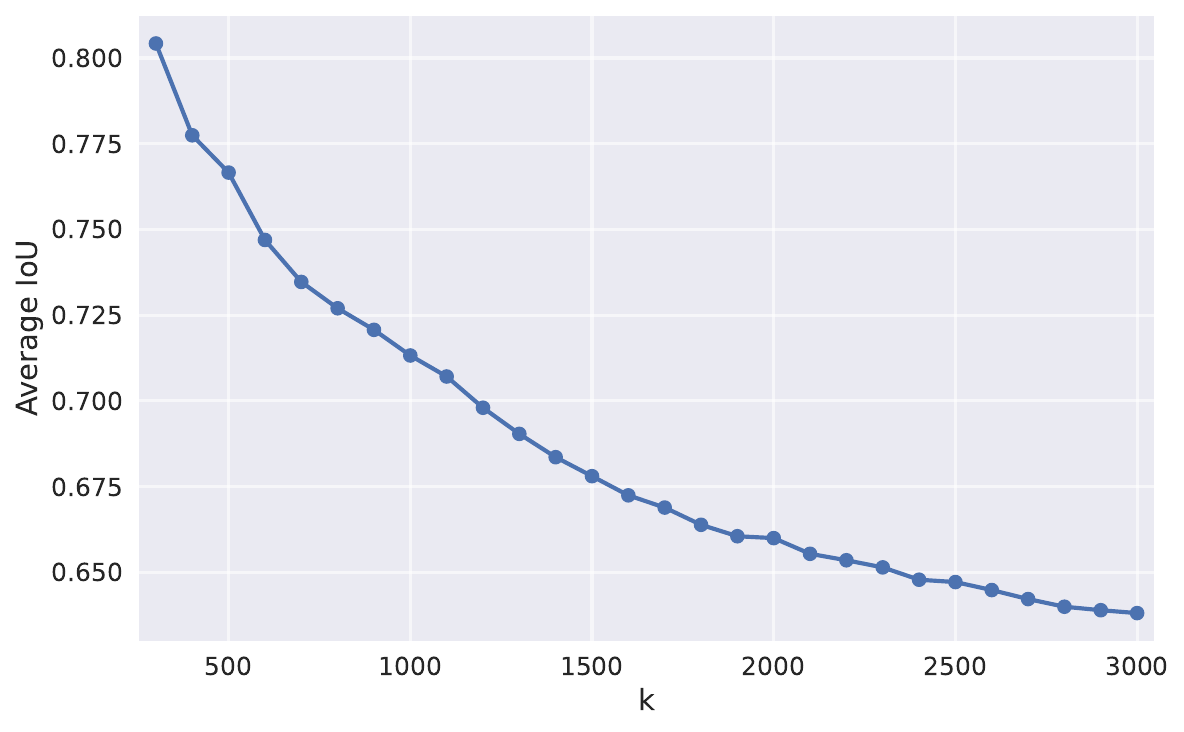}
        \caption{Average pairwise EAP circuit IoU on IOI as a function of the top-$k$ selection threshold}
        \label{fig:eap_iou}
    \end{subfigure}
    \caption{EAP circuits' task performance and pairwise overlap as a function of the top-$k$ selection threshold.}
    \label{fig:eap_k}
\end{figure}

In the IOI task, the specific names used in the prompt are task-irrelevant: prompts that differ only by replacing \exampleg{\footnotesize John} and \exampleg{\footnotesize Mary} with \exampleg{\footnotesize Alice} and \exampleg{\footnotesize Bob} instantiate the same template and require identical reasoning. EAP estimates per-edge importance via a first-order approximation \citep{syedAttributionPatchingOutperforms2024}, with the number of retained edges $k$ as its sole hyperparameter. The method is otherwise deterministic, relying on two forward passes and one backward pass to compute attributions for all edges simultaneously. Nevertheless, when we vary only the names in the IOI prompts, we observe systematically decreasing IoUs between the resulting circuits as $k$ increases (Figure~\ref{fig:eap_k}). This sensitivity to task-irrelevant variation indicates that EAP does not recover a stable or canonical mechanism even in a highly controlled setting. More broadly, these results challenge the Functional Anisotropy Hypothesis: if a unique, structurally privileged mechanism underlies IOI, its recovered circuit should be invariant to superficial changes that do not alter the task itself.

\paragraph{Edge Pruning Mirrors DiscoGP}

\begin{table}[t]
\caption{
Circuit discovery results with EP. The IoUs from using task-specific cross entropy loss are much lower than the IoUs obtained by using the full vocabulary distribution-based KL divergence loss. This implies the full distributional alignment yields the seeming algorithmic stability.
}
\centering
\resizebox{\columnwidth}{!}{%
    \begin{tabular}{lc ccc ccc}
    \toprule
    \multirow{2}{*}{Task} & \multirow{2}{*}{Loss Type}
    & \multicolumn{3}{c}{\textbf{Consistency Metrics}}
    & \multicolumn{3}{c}{\textbf{Average Metrics}} \\
    \cmidrule(lr){3-5} \cmidrule(lr){6-8}
    &
    & Min IoU & $\#E_{\cap}$ & $\#E_{\cup}$
    & Edge D. (\%) & $\#E$ & Acc. (\%) \\
    \midrule

    \multirow{2}{*}{IOI}
    & CE
    & 0.064 & 35 & 644
    & 0.888 & 319.8 & 98.65 \\
    & KL
    & 0.340 & 226 & 787
    & 1.513 & 498.5 & 96.95 \\

    \midrule

    \multirow{2}{*}{Docstring}
    & CE
    & 0.109 & 329 & 3241
    & 4.875 & 1749.0 & 99.75 \\
    & KL
    & 0.344 & 1570 & 5512
    & 10.313 & 3427.5 & 88.25 \\

    \bottomrule
    \end{tabular}%
}
\label{table:EP-circuit-results-narrow}
\end{table}

Despite differences in ablation strategies, both Edge Pruning (EP) and DiscoGP aim to identify sparse, task-relevant subgraphs over edges. Using the original EP objective, which minimises KL divergence to the full model, we find that circuits discovered across random seeds exhibit relatively high pairwise IoUs (Table~\ref{table:EP-circuit-results-narrow}), likely because optimising over the full vocabulary encourages retention of edges that preserve global output alignment rather than task-specific computation. When we replace the KL-divergence objective with the same task-specific loss used by DiscoGP, EP exhibits substantially lower circuit consistency (sometimes even exceeding the diversity induced by DiscoGP with overlap penalties) while producing sparser but less independent circuits due to interchange ablation. Although EP's Lagrangian sparsity constraint prevents directly applying overlap penalties, these results show that circuit non-uniqueness is not an artefact of DiscoGP but also emerges within the EP paradigm once task-specific objectives are considered, providing further evidence against the Functional Anisotropy Hypothesis.

Together, these results challenge the Functional Anisotropy Hypothesis from complementary directions: existing CSD methods show that single-circuit explanations are operationally inconsistent under minor task-preserving or methodologically arbitrary changes, while OASR shows that low-overlap faithful mechanisms can be recovered more explicitly by directly penalising structural overlap.

\section{A Three-Edge Sheaf and the Absence of Indispensable Components}
\label{sec:three_edges}

\begin{figure}[t]
    \centering
    \includegraphics[width=0.95\linewidth]{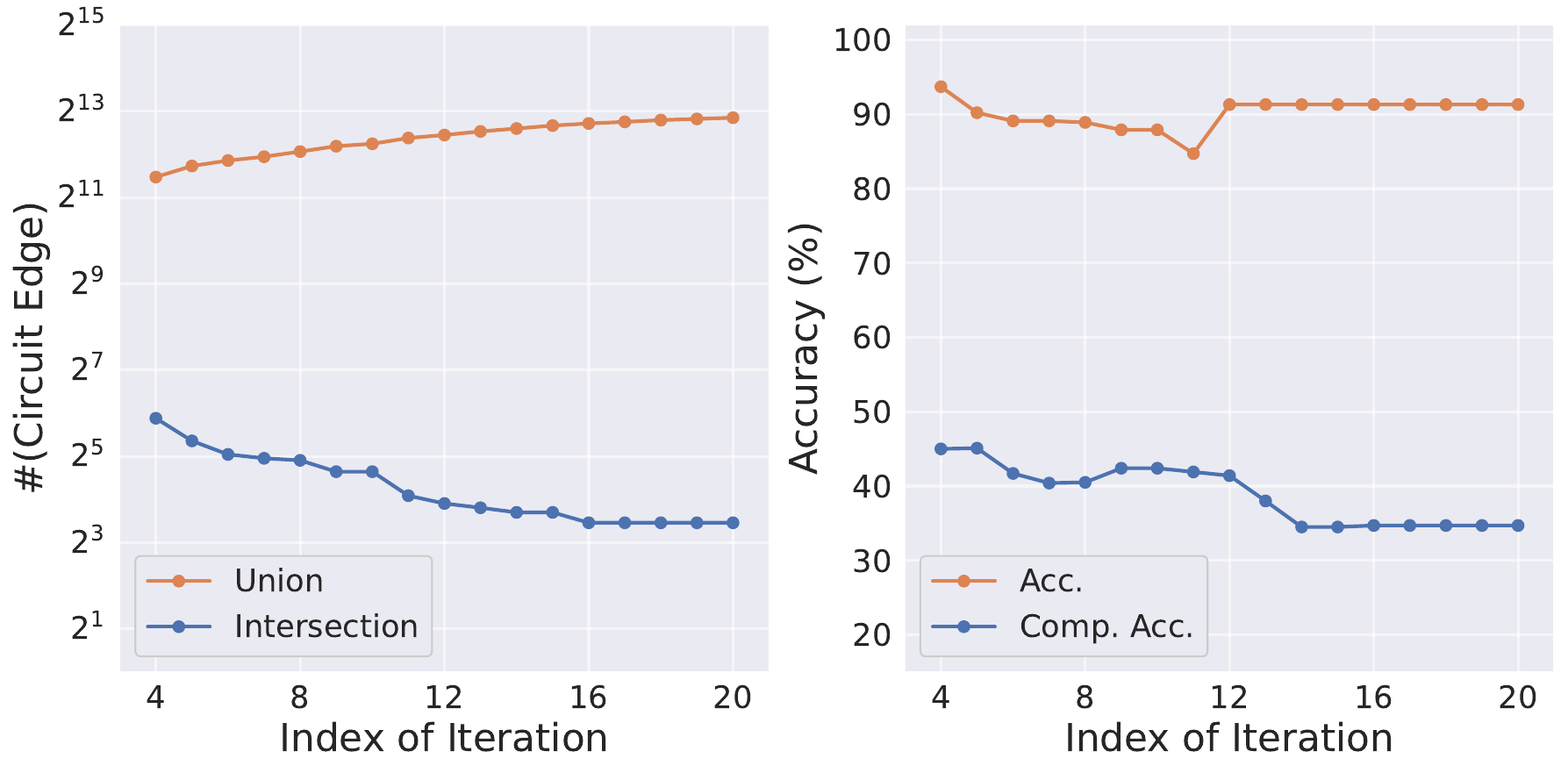}
    \caption{The sheaf profiles of iterative intersections and unions of sheaves discovered by DiscoGP with OASR for IOI.}
    \label{fig:ioi_iterative}
\end{figure}

\begin{figure}[t]
    \centering
    \includegraphics[width=0.8\linewidth]{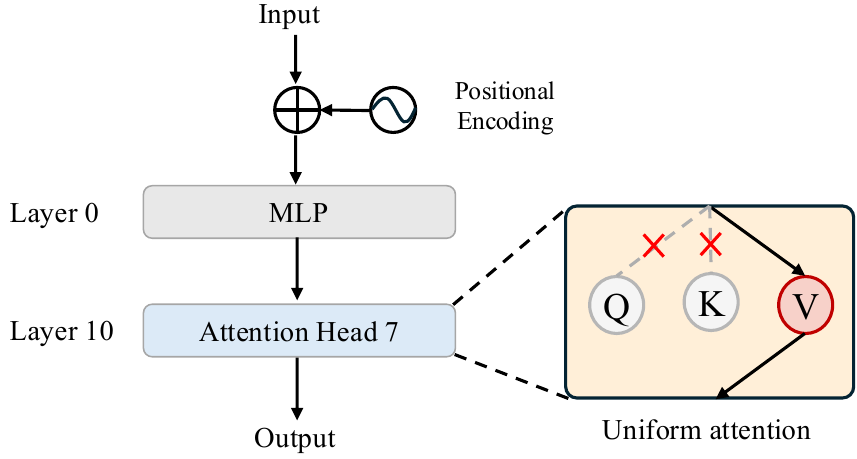}
    \caption{The three-edge IOI circuit under zero ablation.}
    \label{fig:3-edge-circuit-viz}
\end{figure}

Even if CSD often yields multiple low-overlap mechanisms for the same task, one might still hope for a weaker form of functional anisotropy: that a small set of shared components persists across discoveries and constitutes a canonical core. Under this view, non-uniqueness arises only in peripheral structure, while essential computation is concentrated in a compact, indispensable subcircuit. In this section, we show that this weaker hypothesis also fails by identifying an ultra-sparse three-edge sheaf for IOI and demonstrating that none of its edges are individually indispensable.
Focusing on IOI, we intersect edge sets across multiple independently discovered circuits and evaluate the resulting subgraphs under zero ablation. These intersection circuits remain highly functional even as they shrink, as shown in Figure \ref{fig:ioi_iterative}, achieving over 90\% task accuracy when reduced to 11 edges, while their complements remain near random-guess performance. Starting from this 11-edge core, we exhaustively search for the most compact subgraph that preserves task performance and identify an ultra-sparse three-edge sheaf (illustrated in Figure \ref{fig:3-edge-circuit-viz}) that achieves 86.7\% IOI accuracy in isolation, with its complement remaining close to chance level. Note that even the initial embedding layer is treated as the source node and can only influence downstream computation through selected masked edges. At first glance, this three-edge sheaf appears indispensable. Removing these edges from previously discovered IOI circuits reduces average task accuracy to 52.3\% across OASR-discovered circuits, and explicitly prohibiting them during discovery prevents DiscoGP from recovering circuits with high functional fidelity (task accuracy $>$80\%). Taken in isolation, these results suggest that the three-edge sheaf captures a canonical mechanism repeatedly relied upon by the model and the sheaves uncovered.

However, this conclusion depends critically on treating IOI as a single, undifferentiated task. When we decompose IOI into ABBA and BABA templates and perform sheaf discovery under the constraint that none of the three edges are present, we still recover sparse, highly functional circuits, as shown in Table \ref{tab:three_edge_core}, with edge densities below 3.5\%. A more fine-grained internal ablation result is given in Table \ref{tab:three_edge_ablation}. These results establish two levels of non-indispensability. Locally, within the three-edge sheaf, removing any single edge leaves performance essentially unchanged. Globally, each edge can be avoided by alternative discovered sheaves, showing that no individual edge is intrinsically required by all functionally valid mechanisms. The stronger-looking indispensability of the full three-edge core arises only when IOI is treated as an aggregated task.

\begin{table}[t]
\centering
\setlength{\tabcolsep}{4pt}
\small
\caption{Functional role of the three-edge core under task aggregation and decomposition. We refer to the 3-edge sheaf as the core in the constraint.
IOI sheaves' accuracy and complement accuracy are the average over 20 sheaves.}
\label{tab:three_edge_core}
\begin{tabular}{l l l c c}
\toprule
\textbf{Sheaf} & \textbf{Task} & \textbf{Constraint} & \textbf{Acc.} & \textbf{Comp.} \\
\midrule
3-edge Sheaf & IOI  & None & 86.7\% & 31.3\% \\
\midrule
$C_\text{BABA}$     & IOI-BABA & Core excluded      & 94.9\% & 48.0\% \\
$C_\text{ABBA}$     & IOI-ABBA & Core excluded      & 96.7\% & 49.2\% \\
\midrule
IOI Sheaves & IOI & Core excluded & 52.3\% & 48.2\% \\
\bottomrule
\end{tabular}
\end{table}
\begin{table}[t]
\centering
\setlength{\tabcolsep}{5pt}
\small
\caption{
Ablation of the three-edge core in the full model.
The three core edges are
$e_1:\mathrm{Input}\rightarrow \mathrm{MLP}_0$,
$e_2:\mathrm{MLP}_0\rightarrow \mathrm{Attn}_{10}\mathrm{H}_7V$, and
$e_3:\mathrm{Attn}_{10}\mathrm{H}_7\rightarrow \mathrm{Output}$.
Each row reports IOI accuracy after removing the specified subset of these core edges from the otherwise unmodified full model.
Thus, ``\# kept core edges'' refers only to how many of $e_1,e_2,e_3$ remain active; all non-core edges in the full model are kept active throughout.
}
\label{tab:three_edge_ablation}
\begin{tabular}{l c c}
\toprule
\textbf{Core edge(s) removed} & \textbf{\# kept core edges} & \textbf{Acc.} \\
\midrule
None & 3 & 100.0\% \\
$e_1$ & 2 & 99.9\% \\
$e_2$ & 2 & 99.9\% \\
$e_3$ & 2 & 99.8\% \\
$e_2,e_3$ & 1 & 31.6\% \\
$e_1,e_3$ & 1 & 31.4\% \\
$e_1,e_2$ & 1 & 31.3\% \\
$e_1,e_2,e_3$ & 0 & 31.2\% \\
\bottomrule
\end{tabular}
\end{table}

\section{A Distributive Dense Circuit Hypothesis}
\label{sec:theory}
In this section, we present the central theoretical contribution of this work and introduce the \emph{Distributive Dense Circuit Hypothesis}. We challenge a prevailing assumption that a task admits a unique or near-unique sparse internal circuit explanation. Instead, we show that even under strong faithfulness constraints, the same task behaviour can be realised by multiple, structurally distinct circuits.

Our main theoretical result establishes that circuit explanations are, in general, \emph{non-unique}: we prove an \textbf{\emph{existence}} statement showing that there \textbf{\emph{exist}} multiple structurally distinct (low-overlap) circuits that are simultaneously faithful to the same task behaviour. The non-uniqueness of circuits is not accidental but a direct consequence of linear superposition in high-dimensional representations. Under mild local linearity assumptions, task-relevant readouts decompose into sums of edge-wise contributions. As the number of candidate circuits grows combinatorially with model depth and width, distinct edge subsets inevitably produce nearly identical readout vectors via subset-sum collisions. When the task decision is protected by a finite margin, these readout-level collisions translate into identical predictions despite substantial structural differences between circuits. Our theoretical analysis formalises this intuition and proves the existence of multiple low-overlap, $\varepsilon$-faithful circuits for the same task (as shown in Appendix~\ref{circuit}, Theorem~\ref{sec:main_theorem}).

\section{Conclusion}

We presented empirical and theoretical evidence against the \emph{Functional Anisotropy Hypothesis}, the implicit assumption that a given LLM capability is realised by a unique or near-unique internal mechanism. Across tasks and discovery methods, we showed that multiple circuits or sheaves can be simultaneously faithful, sparse, and complete while exhibiting minimal structural overlap, and that even seemingly canonical explanations (including an ultra-sparse three-edge sheaf for IOI) do not contain indispensable components once task structure is taken into account.

Nevertheless, our findings do not undermine CSD: the discovered mechanisms remain meaningful and causally relevant. Rather, we call for a rethinking of how such mechanisms are interpreted and evaluated, moving away from canonical or minimal explanations toward a more distributed view in which task behaviour is realised by a space of functionally equivalent, competing mechanisms.

\section*{Impact Statements}

This paper presents work whose goal is to advance the field of machine learning. There are many potential societal consequences of our work, none of which we feel must be specifically highlighted here.

\bibliography{custom}
\bibliographystyle{icml2026}

\newpage
\appendix
\onecolumn
\icmlEnableAppendixToc

\clearpage
\onecolumn
\setcounter{tocdepth}{2}
\tableofcontents

\newpage

\section{Existence Theorem for Circuit Non-Uniqueness}
\label{circuit}
\subsection{Preliminaries}

\paragraph{Transformer Blocks and Computational Graph}
Let $F_\theta$ denote a decoder-only transformer language model with parameters $\theta$. Given an input sequence $x=(x_1,\dots,x_n)$, the model maps it to a distribution over next-token logits. We focus on the internal computation of $F_\theta$ through its \textbf{\textit{residual stream}}.

Let $\mathbf{x}_i \in \mathbb{R}^{n\times d}$ denote the residual stream entering the $i$-th transformer block
($i=0,\dots,L-1$), where $L$ is the number of blocks and $d$ is the model width. Each block consists of a
\textit{multi-head attention (MHA)} module followed by an \textit{MLP} module, each wrapped by residual addition $\resadd$.
Abstracting away normalization details (e.g.\ \textit{Pre-LN} or \textit{RMSNorm}), we write the block computation in an \emph{edge-additive} form over the residual stream:
\begin{equation}
\label{eq:block_update}
\mathbf{x}_{i+1}
~=~
\mathbf{x}_i
~+~
\sum_{h\in\mathcal{H}^{(i)}} \Delta^{(i)}_{h}(\mathbf{x}_i)
~+~
\Delta^{(i)}_{\mathrm{mlp}}(\mathbf{x}_i^{\mathrm{mid}}),
\qquad
\mathbf{x}_i^{\mathrm{mid}}
~=~
\mathbf{x}_i + \sum_{h\in\mathcal{H}^{(i)}} \Delta^{(i)}_{h}(\mathbf{x}_i).
\end{equation}
Here $\mathcal{H}^{(i)}$ is the set of attention heads in block $i$, $\Delta^{(i)}_{h}(\cdot)\in\mathbb{R}^{n\times d}$ denotes the residual contribution produced by head $h$,
and $\Delta^{(i)}_{\mathrm{mlp}}(\cdot)\in\mathbb{R}^{n\times d}$ denotes the residual contribution produced by the MLP. Crucially, we define a \emph{residual-stream edge} as each additive term that injects information into the residual stream: the identity (skip) edge contributes $\mathbf{x}_i$ to $\mathbf{x}_{i+1}$, each head-edge contributes $\Delta^{(i)}_{h}(\mathbf{x}_i)$ into $\mathbf{x}_i^{\mathrm{mid}}$ (and hence into $\mathbf{x}_{i+1}$),
and the MLP-edge contributes $\Delta^{(i)}_{\mathrm{mlp}}(\mathbf{x}_i^{\mathrm{mid}})$ into $\mathbf{x}_{i+1}$. A circuit will be defined as a sparse subset of these residual-stream edges whose execution preserves the target behaviour.

\paragraph{Unrolled Residual-Stream Computation}
Eq.~\eqref{eq:block_update} implies that the residual stream at each depth can be expressed as a sum of
\emph{component contributions} (embedding, attention-head outputs, and MLP outputs). For instance, in a one-block transformer with two heads $(h_1,h_2)$ and one MLP $f$,
The unrolled form reads
    \begin{equation}
\label{eq:unroll_oneblock}
\mathbf{x}^{\mathrm{mid}} = \mathbf{x}_0 + \Delta_{h_1}(\mathbf{x}_0) + \Delta_{h_2}(\mathbf{x}_0),
\
\mathbf{x}_1
=
\mathbf{x}^{\mathrm{mid}} + \Delta_{\mathrm{mlp}}(\mathbf{x}^{\mathrm{mid}})
=
\mathbf{x}_0 + \Delta_{h_1}(\mathbf{x}_0) + \Delta_{h_2}(\mathbf{x}_0)
+ \Delta_{\mathrm{mlp}}\!\big(\mathbf{x}_0+\Delta_{h_1}(\mathbf{x}_0)+\Delta_{h_2}(\mathbf{x}_0)\big).
\end{equation}

\begin{figure}[h]
    \centering
    \begin{subfigure}{0.32\linewidth}
        \centering
        \includegraphics[width=0.7\linewidth]{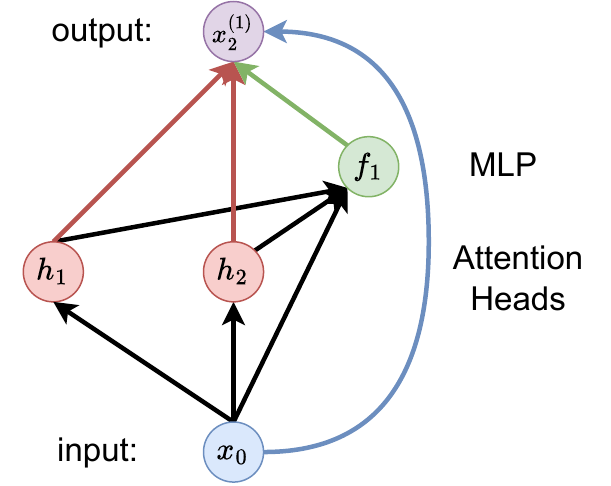}
    \vspace{-1em}
    \begin{align*}
      x^{(1)}_1 = \textcolor{blue}{x_0} + \textcolor{red}{h_1(x_0)} + \textcolor{red}{h_2(x_0)}
    \end{align*}
    \vspace{-2.5em}
    \begin{align*}
      \displaystyle
      x^{(1)}_2 = \phantom{}& \textcolor{blue}{x_0} + \textcolor{red}{h_1(x_0)} + \textcolor{red}{h_2(x_0)} \\[-2pt]
      \displaystyle &+
      \displaystyle  \textcolor{darkgreen}{f_1(x^{(1)}_1)}
    \end{align*}
    \vspace{-2em}
        \caption{Term-by-term unrolled DAG of a small Transformer and its residual stream.}
        \label{fig:resid_unrolled}
    \end{subfigure}\hfill
    \begin{subfigure}{0.32\linewidth}
        \centering
        \includegraphics[width=0.6\linewidth]{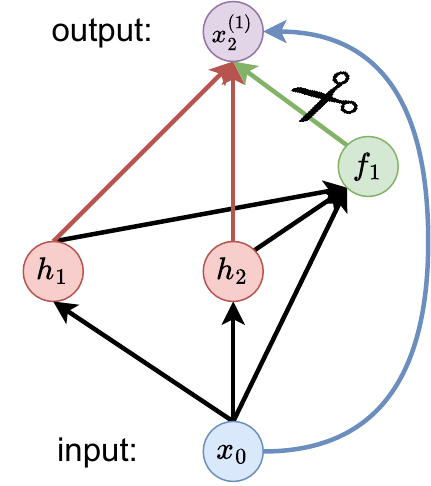}
    \vspace{-1em}
    \begin{align*}
      \displaystyle
      x^{(1)}_2 = \phantom{}& \textcolor{blue}{x_0} + \textcolor{red}{h_1(x_0)} + \textcolor{red}{h_2(x_0)} \\[-2pt]
      \displaystyle &+ \hcancel[black]{
      \displaystyle  \textcolor{darkgreen}{f_1(x^{(1)}_1)}}
    \end{align*}
    \vspace{-2em}
        \caption{Edge removal as vanishing residual term in the unrolled residual stream.}
        \label{fig:edge_pruning_on_resid}
    \end{subfigure}\hfill
    \begin{subfigure}{0.32\linewidth}
        \centering
        \includegraphics[width=0.5\linewidth]{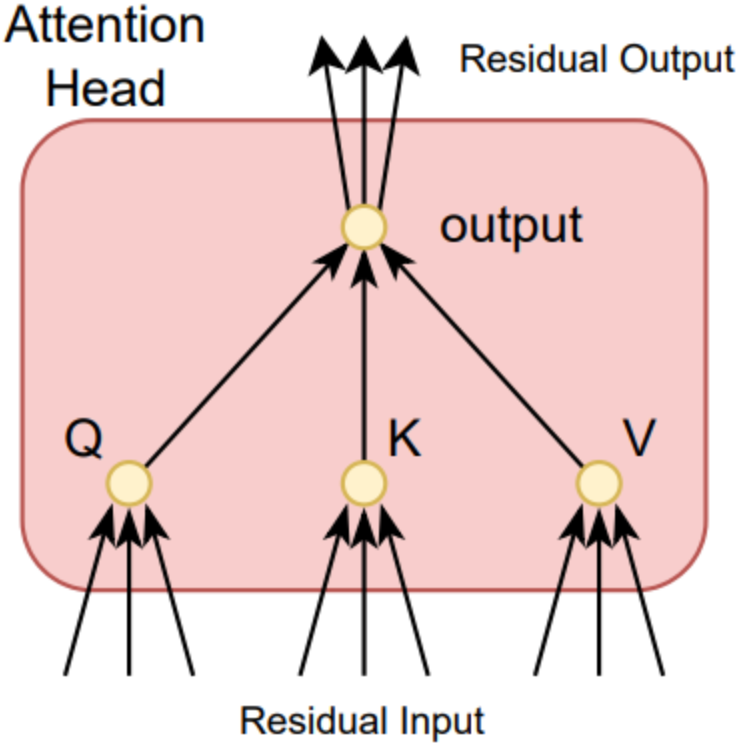}
        \caption{Attention head granularity. In practice, attention heads are often decomposed at a finer granularity. Following \citet{conmyAutomatedCircuitDiscovery2023}, query (Q), key (K), and value (V) activations (together with a separate output node) are treated as distinct components within each attention head in the empirical analysis. For clarity, and without loss of generality, we ignore this decomposition in the proof.}
        \label{fig:split_qkv}
    \end{subfigure}
    \caption{Edge pruning viewed as structured removal of residual-stream terms at the level of attention head components.}
    \label{fig:edge_pruning_overview}
\end{figure}

\paragraph{Circuit Pruning Over Residual-Stream Edges}

For the purposes of the proof, the distinction between circuits and sheaves is immaterial. Without loss of generality, we use the term circuit throughout.
Under the edge-additive view in Eq.~\eqref{eq:block_update}, \emph{circuit pruning} is implemented by selectively
\emph{removing residual-stream injections}. Concretely, for each block $i$ we introduce binary masks
$m^{(i)}_{h}\in\{0,1\}$ for every head $h\in\mathcal{H}^{(i)}$ and $m^{(i)}_{\mathrm{mlp}}\in\{0,1\}$ for the MLP
(we keep the skip/identity edge by default).
The circuit-restricted execution replaces Eq.~\eqref{eq:block_update} with
\begin{equation}
\label{eq:block_update_circuit}
\mathbf{x}_i^{\mathrm{mid},C}
~=~
\mathbf{x}_i^{C}
~+~
\sum_{h\in\mathcal{H}^{(i)}} m_h^{(i)}\,\Delta_h^{(i)}(\mathbf{x}_i^{C}),
\qquad
\mathbf{x}_{i+1}^{C}
~=~
\mathbf{x}_i^{\mathrm{mid},C}
~+~
m_{\mathrm{mlp}}^{(i)}\,\Delta_{\mathrm{mlp}}^{(i)}(\mathbf{x}_i^{\mathrm{mid},C}),
\end{equation}
where the superscript $C$ denotes the activations produced by the circuit-pruned execution (i.e., with masked edges ablated). Setting $m_h^{(i)}=0$ \emph{cuts off} the head-edge injection $\Delta_h^{(i)}(\mathbf{x}_i)$, and setting
$m_{\mathrm{mlp}}^{(i)}=0$ cuts off the MLP-edge injection $\Delta_{\mathrm{mlp}}^{(i)}(\mathbf{x}_i^{\mathrm{mid}})$.
In the one-block example of Eq.~\eqref{eq:unroll_oneblock}, cutting off the MLP-edge yields
\begin{equation}
\label{eq:cut_mlp_example}
\mathbf{x}_1^{C}
~=~
\mathbf{x}_0 + \Delta_{h_1}(\mathbf{x}_0) + \Delta_{h_2}(\mathbf{x}_0),
\end{equation}
matching the intuition in Fig.~\ref{fig:edge_pruning_overview} that the output residual stream can be reproduced by executing
only a subset of residual-stream edges. More generally, a (residual-stream) circuit corresponds to a sparse choice of masks
$C=\{m_h^{(i)},m_{\mathrm{mlp}}^{(i)}\}_{i,h}$ whose execution preserves the target behaviour up to a specified tolerance,
while all omitted edges are replaced by a fixed baseline (typically zero ablation).

\subsection{Task Fidelity and Circuit Non-Uniqueness}
\label{sec:fidelity_nonunique}

\paragraph{Task and Fidelity Metric}
Let $\mathcal{D}$ denote a task distribution over prompts (e.g., IoI and BLiMP). Let $\mathcal{M}(F;\mathcal{D})$ be a scalar performance metric (accuracy). Given an edge-mask circuit $C$, we write $F_\theta^C$ for the circuit-pruned execution in
Eq.~\eqref{eq:block_update_circuit}. We say that $C$ is \emph{$\varepsilon$-faithful} on $\mathcal{D}$ if
\begin{equation}
\label{eq:fidelity_def}
\mathcal{M}(F_\theta^C;\mathcal{D}) \ge \mathcal{M}(F_\theta;\mathcal{D}) - \varepsilon.
\end{equation}
Here $\varepsilon\ge 0$ is a small tolerance that quantifies the maximum allowable performance drop under pruning: when $\mathcal{M}$ is accuracy, $\varepsilon$ corresponds to at most an $\varepsilon$-absolute decrease in accuracy.

In the circuit discovery area, a ``task'' is typically specified by a distribution $\mathcal{D}$ over prompts such as the IoI dataset or BLiMP suites. A circuit is said to ``implement'' the task when the pruned model $F_\theta^C$ retains nearly the same performance as the full model on prompts sampled from $\mathcal{D}$. Importantly, $\mathcal{D}$ represents a \emph{family} of instances: if a circuit is $\varepsilon$-faithful on $\mathcal{D}$, then it preserves performance not just on a single example but across the entire distribution. Consequently, any circuit that is faithful for $\mathcal{D}$ is also faithful for any subset of instances drawn from $\mathcal{D}$
(e.g., any particular prompt or sub-benchmark), up to the same tolerance.

\paragraph{Circuit Non-Uniqueness}
We say the model exhibits \emph{circuit non-uniqueness} on $\mathcal{D}$ if there exist two distinct circuits
$C_1 \neq C_2$ such that both are $\varepsilon$-faithful.
To quantify how different two circuits are, we measure the overlap of their active residual-stream edges using \emph{Intersection-over-Union (IoU)}. Let $E(C)$ denote the set of (non-skip) residual-stream edges kept by circuit $C$
(i.e., edges with mask value $1$). We define
\begin{equation}
\label{eq:iou_def}
\operatorname{IoU}(C_1, C_2)
\triangleq
\frac{\big\lvert E(C_1) \cap E(C_2) \big\rvert}{\big\lvert E(C_1) \cup E(C_2) \big\rvert}
\in [0, 1].
\end{equation}

A smaller IoU indicates more dissimilar circuits, with $\mathrm{IoU}(C_1,C_2)=0$ corresponding to disjoint edge sets.
We say the model exhibits \emph{strong} circuit non-uniqueness on $\mathcal{D}$ if there exist two $\varepsilon$-faithful
circuits $C_1,C_2$ whose overlap is small, e.g.,
\begin{equation}
\label{eq:strong_nonunique_iou}
\mathrm{IoU}(C_1,C_2) \le \tau
\end{equation}
for some $\tau\ll 1$ (equivalently, their symmetric difference is large).

\subsection{Assumptions and Model Class for the Existence Result}
\label{sec:assumptions}

To make the existence result precise, we work with a controlled model class that matches the residual-stream
edge formalism in Eq.~\eqref{eq:block_update_circuit}. Our assumptions are mild and standard in theoretical
treatments of transformer computations and circuit abstractions.

\begin{assumption}[Residual-additive edge model]
\label{ass:res_additive}
We model circuit pruning as masking \emph{residual-stream injections} in an edge-additive residual stream
(Eq.~\eqref{eq:block_update_circuit}). In particular, when an edge is masked out, its injected contribution is
replaced by a fixed baseline (zero ablation by default). We omit normalization layers (e.g., Pre-LN/RMSNorm)
to preserve the explicit additive decomposition of residual-stream edges.
\end{assumption}
Assumption~\ref{ass:res_additive} fixes the semantics of residual-stream edge ablation: a circuit is simply a subset of
additive injection terms in the residual stream, and pruning corresponds to removing the corresponding terms while leaving the rest of the computation unchanged.

\begin{assumption}[Task-relevant readout and margin stability (for accuracy)]
\label{ass:margin_stability}
There exists a task-relevant readout map $g$ that maps the final residual stream to prediction logits.
For each prompt $x\sim\mathcal{D}$, let $\mathbf{z}(x)\triangleq g(\mathbf{x}_L(x))\in\mathbb{R}^{|\mathcal{V}|}$ denote the
logits of the reference execution (e.g., the full model), and define its predicted label
\begin{equation}
\label{eq:pred_label}
\hat{y}(x) ~\triangleq~ \arg\max_{y\in\mathcal{V}} \mathbf{z}(x)_y .
\end{equation}
Define the (top-1) logit margin as
\begin{equation}
\label{eq:margin_def}
\gamma(x)
~\triangleq~
\mathbf{z}(x)_{\hat{y}(x)} - \max_{y\neq \hat{y}(x)} \mathbf{z}(x)_y .
\end{equation}
Assume the reference execution is $\gamma$-separated on $\mathcal{D}$, i.e., $\gamma(x)\ge \gamma$ for all $x\in\supp(\mathcal{D})$.
Moreover, for any circuit-pruned execution $F_\theta^C$, its logits $\mathbf{z}^C(x)\triangleq g(\mathbf{x}_L^C(x))$ satisfy
\begin{equation}
\label{eq:logit_perturb_bound}
\big\|\mathbf{z}^C(x) - \mathbf{z}(x)\big\|_\infty \le \delta
\quad\text{for all }x\in\supp(\mathcal{D}).
\end{equation}
\end{assumption}

Assumption~\ref{ass:margin_stability} formalises when accuracy is invariant to pruning:
if the reference execution has margin at least $\gamma$ and the circuit-induced logit perturbation is bounded by
$\delta < \gamma/2$, then the top-1 prediction is unchanged for every $x\in\mathcal{D}$.
Indeed, the new margin is at least $\gamma(x)-2\delta>0$, so $\arg\max \mathbf{z}^C(x)=\arg\max \mathbf{z}(x)$,
and therefore the accuracy metric is preserved under pruning.

\begin{assumption}[Local linearisation / fixed activation regime]
\label{ass:local_linear}
On the task distribution $\mathcal{D}$, the effect of residual-stream edges on the task-relevant readout can be
locally approximated by a linear map.
That is, there exists a reference execution (e.g., the full model or a reference circuit) such that for all circuits
considered in our theorem, the readout satisfies
\begin{equation}
\label{eq:local_linear}
g(\mathbf{x}_L^{C}(x))
~\approx~
g(\mathbf{x}_L(x))
~+~
\sum_{e\in E(C)} \mathbf{J}_e(x),
\end{equation}
where $\mathbf{J}_e(x)$ denotes the (first-order) contribution of keeping edge $e$ at input $x$
(e.g., via Jacobian/linearisation), and the approximation error is uniformly bounded by a small constant.
Equivalently, for piecewise-linear networks (e.g., ReLU/GELU MLPs), we assume the activation patterns are fixed on
$\supp(\mathcal{D})$, so that the computation is exactly linear in the edge injections within this regime.
\end{assumption}

Assumptions~\ref{ass:res_additive}--\ref{ass:local_linear} allow us to reason about circuits as sparse subsets of
residual-stream edges and to establish a constructive existence result for two $\varepsilon$-faithful but low-overlap
(i.e., small-IoU) circuits.

\subsection{Main Theorem: Existence of Disjoint Circuits}
\label{sec:main_theorem}

We now state our main result: under the assumptions of residual additivity and local linearisation, there exists a specific redundancy in the model such that multiple distinct circuits can solve the same task.

\begin{theorem}[Existence of low-overlap faithful circuits]
\label{thm:existence}
Let $F_\theta$ be a Transformer model and $\mathcal{D}$ be a task distribution.
Assume Assumptions~\ref{ass:res_additive} and \ref{ass:local_linear}. In addition, assume there exist two edge sets
$E_A$ and $E_B$ with $\operatorname{IoU}(E_A,E_B)\le \tau$ such that their linearized readout contributions match on $\mathcal{D}$:
\begin{equation}
\label{eq:redundancy_condition}
\max_{x\sim\mathcal{D}}
\left\|
\sum_{e\in E_A}\mathbf{J}_e(x) - \sum_{e\in E_B}\mathbf{J}_e(x)
\right\|_\infty
\le \delta .
\end{equation}
If $\mathcal{M}$ is accuracy and Assumption~\ref{ass:margin_stability} holds with margin $\gamma$ and perturbation bound
$\delta<\gamma/2$, then there exist two circuits $C_1$ and $C_2$ with $E(C_1)=E_A$ and $E(C_2)=E_B$ such that:
\begin{enumerate}
\item \textbf{Faithfulness.} Both circuits preserve performance on $\mathcal{D}$ up to tolerance:
\begin{equation}
\label{eq:thm_faithfulness}
\mathcal{M}(F_\theta^{C_1};\mathcal{D}) \ge \mathcal{M}(F_\theta;\mathcal{D})-\varepsilon,
\qquad
\mathcal{M}(F_\theta^{C_2};\mathcal{D}) \ge \mathcal{M}(F_\theta;\mathcal{D})-\varepsilon,
\end{equation}
and in fact $\varepsilon=0$ for accuracy under the margin condition $\delta<\gamma/2$.

\item \textbf{Structural disparity (low overlap).}
The circuits rely on substantially different residual-stream edges:
\begin{equation}
\label{eq:thm_lowoverlap}
\operatorname{IoU}(C_1,C_2)
~=~
\frac{\lvert E(C_1)\cap E(C_2)\rvert}{\lvert E(C_1)\cup E(C_2)\rvert}
~\le~
\tau,
\qquad \text{with } \tau \ll 1.
\end{equation}
In particular, $\operatorname{IoU}(C_1,C_2)\le \tau<1$ implies $C_1\neq C_2$ (equivalently, $E(C_1)\neq E(C_2)$).
\end{enumerate}
\end{theorem}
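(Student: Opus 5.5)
The proof is essentially a direct instantiation of the hypotheses. We take $C_1$ and $C_2$ to be the edge-mask circuits with $m_e=1$ exactly for $e\in E_A$ (respectively $e\in E_B$), and $m_e=0$ otherwise, executed as in Eq.~\eqref{eq:block_update_circuit} with zero ablation per Assumption~\ref{ass:res_additive}. The structural-disparity claim is then immediate: by construction $E(C_1)=E_A$ and $E(C_2)=E_B$, so $\operatorname{IoU}(C_1,C_2)=\operatorname{IoU}(E_A,E_B)\le\tau$. Since $\tau<1$, the intersection is strictly smaller than the union, which forces $E_A\neq E_B$ and hence $C_1\neq C_2$.

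For faithfulness, we reduce to a top-1 label-preservation argument. The paragraph after Assumption~\ref{ass:margin_stability} already shows the key fact. For any pruned execution with $\|\mathbf{z}^C(x)-\mathbf{z}(x)\|_\infty\le\delta<\gamma/2$, the correct logit drops by at most $\delta$ and every competitor rises by at most $\delta$. So $\mathbf{z}^C(x)_{\hat y(x)}-\max_{y\ne\hat y(x)}\mathbf{z}^C(x)_y\ge\gamma(x)-2\delta>0$. Therefore $\arg\max\mathbf{z}^C(x)=\hat y(x)$ for every $x\in\supp(\mathcal{D})$. Because accuracy depends only on the predicted label, this gives $\mathcal{M}(F_\theta^{C_j};\mathcal{D})=\mathcal{M}(F_\theta;\mathcal{D})$ for $j=1,2$, which is $\varepsilon$-faithfulness with $\varepsilon=0$. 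Note that we never need the correct label to equal $\hat y$; we only need agreement with the reference prediction.

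The one step needing care, and the main obstacle, is checking that the perturbation bound \eqref{eq:logit_perturb_bound} really applies to $C_1$ and $C_2$. As stated, Assumption~\ref{ass:margin_stability} asserts it for any circuit-pruned execution, so the argument closes formally. However, the role of \eqref{eq:redundancy_condition} should be made explicit, so I would also give the more substantive route. Using Assumption~\ref{ass:local_linear}, we have $\mathbf{z}^{C_1}(x)-\mathbf{z}^{C_2}(x)\approx\sum_{E_A}\mathbf{J}_e(x)-\sum_{E_B}\mathbf{J}_e(x)$. Its sup-norm is at most $\delta$ plus twice the uniform linearisation error $\eta$; in the fixed-activation regime $\eta=0$.

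This shows the two circuits induce nearly identical logits, so they make identical predictions whenever either has margin exceeding $\delta+2\eta$. Taking the reference execution in Assumption~\ref{ass:local_linear} to be $C_1$, faithful with margin $\gamma$, gives the conclusion for $C_2$ provided $\delta+2\eta<\gamma/2$. I would state this as a remark: in the exact piecewise-linear regime it reproduces the theorem verbatim, and otherwise it absorbs $\eta$ into the margin condition.
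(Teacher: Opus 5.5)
Your proof is correct, and its faithfulness part is the same as the paper's Step~3. Both apply the $\ell_\infty$ margin-preservation argument (Lemma~\ref{lem:margin_preservation}, new margin $\ge\gamma-2\rho>0$) to each circuit's logit perturbation relative to the full model. The IoU claim is immediate by construction in both.

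The difference is that the paper's proof also runs Steps~1, 2 and~4 to \emph{produce} a $\delta$-colliding pair with $\operatorname{IoU}\le\tau$ whenever $\binom{E}{s}/K>1+V_\tau$. These are the subset-sum representation, the quantisation--pigeonhole collision of Lemma~\ref{lem:quant_pigeonhole}, and the packing argument of Lemmas~\ref{lem:low_iou_in_bin}--\ref{lem:exist_low_iou_collision}. The theorem as stated takes $E_A,E_B$ as a hypothesis, so that machinery is not logically needed, and your direct instantiation proves the statement as written. What the paper's route buys is the substantive claim that such pairs actually exist.

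Your route makes explicit something the paper glosses over. Its Step~3 asserts $\|\mathbf{z}^{C_i}(x)-\mathbf{z}(x)\|_\infty\le\delta+\eta$ for each circuit, but the collision only controls $\mathbf{z}^{C_1}-\mathbf{z}^{C_2}$. That bound can therefore only come from the blanket perturbation bound of Assumption~\ref{ass:margin_stability}, exactly as you say. Using it directly also gives the stated condition $\delta<\gamma/2$ rather than the paper's $\delta+\eta<\gamma/2$.

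Do not present your closing remark as an independent proof, though. Treating $C_1$ as the reference that is \emph{faithful with margin $\gamma$} presupposes the conclusion for $C_1$. The hypotheses give margin $\gamma$ only for the full model. If you get $C_1$'s faithfulness from Assumption~\ref{ass:margin_stability}, its margin is only guaranteed to be at least $\gamma-2\delta$, and $C_2$ is then covered by the same assumption anyway. Nor is the reference execution in Assumption~\ref{ass:local_linear} yours to choose; it is existentially given.

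So even with $\eta=0$, the remark only shows that faithfulness transfers from one circuit to the other. That is the genuine content of \eqref{eq:redundancy_condition}, but it is not the theorem itself. Also, by the margin lemma the two circuits agree once some margin exceeds $2(\delta+2\eta)$, not $\delta+2\eta$; your subsequent condition $\delta+2\eta<\gamma/2$ is the right one.
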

Theorem~\ref{thm:existence} shows that whenever two low-overlap edge subsets are redundant in their task-relevant contributions, the task admits multiple faithful circuit explanations: distinct circuits with $\operatorname{IoU}\ll 1$ can preserve accuracy.

\subsection{Proof of Theorem~\ref{thm:existence} }

\paragraph{Step 1: Reduce Circuits to Subset Sums}

\begin{lemma}[Subset-sum representation of circuit effects]
\label{lem:subset_sum_rep}
Fix a finite evaluation set $\mathcal{D}=\{x^{(1)},\dots,x^{(N)}\}$ and a task-relevant readout map
$g:\mathbb{R}^{n\times d}\to\mathbb{R}^{m}$.
Let $\mathbf{x}_L(x)$ denote the final residual stream under a \emph{reference execution}, which we take to be the
full model $F_\theta$ (i.e., the circuit that keeps all prunable residual-stream edges).
Define the reference readout
\[
\mathbf{z}(x^{(j)}) \triangleq g(\mathbf{x}_L(x^{(j)})) \in \mathbb{R}^{m},
\qquad j=1,\dots,N.
\]

Assume Assumption~\ref{ass:local_linear}.
Then for any circuit $C$ considered in the theorem, there exist edge-wise contributions
$\mathbf{J}_e(x^{(j)})\in\mathbb{R}^{m}$ for each prunable residual-stream edge $e$
and a remainder term $\mathbf{r}_C(x^{(j)})\in\mathbb{R}^{m}$ such that
\begin{equation}
\label{eq:step1_pointwise_linearization}
\mathbf{z}^{C}(x^{(j)})
~\triangleq~
g(\mathbf{x}_L^{C}(x^{(j)}))
~=~
\mathbf{z}(x^{(j)}) + \sum_{e\in E(C)} \mathbf{J}_e(x^{(j)}) + \mathbf{r}_C(x^{(j)}),
\qquad j=1,\dots,N.
\end{equation}

\noindent\textit{Interpretation of $\mathbf{J}_e(x)$.} We may view each residual-stream edge $e$ as equipped with a continuous gate $\alpha_e\in[0,1]$ that scales its injected contribution. Then $\mathbf{J}_e(x)$ corresponds to the first-order change in the readout with respect to $\alpha_e$ around the reference execution, i.e., a Jacobian direction capturing the logit-level effect of retaining edge $e$.

Let $D\triangleq Nm$ and define the stacked readouts
\[
\mathbf{Z}\triangleq[\mathbf{z}(x^{(1)});\dots;\mathbf{z}(x^{(N)})]\in\mathbb{R}^{D},
\qquad
\mathbf{Z}^{C}\triangleq[\mathbf{z}^{C}(x^{(1)});\dots;\mathbf{z}^{C}(x^{(N)})]\in\mathbb{R}^{D}.
\]
Define the \emph{edge signature} (edge contribution) vectors
\begin{equation}
\label{eq:step1_edge_signature}
\mathbf{s}_e \triangleq [\mathbf{J}_e(x^{(1)});\dots;\mathbf{J}_e(x^{(N)})]\in\mathbb{R}^{D},
\end{equation}
and the stacked remainder
\[
\mathbf{R}_C \triangleq [\mathbf{r}_C(x^{(1)});\dots;\mathbf{r}_C(x^{(N)})]\in\mathbb{R}^{D}.
\]

Then the circuit-pruned readout admits the \emph{subset-sum form}
\begin{equation}
\label{eq:step1_subset_sum_form}
\mathbf{Z}^{C}
~=~
\mathbf{Z} + \sum_{e\in E(C)} \mathbf{s}_e + \mathbf{R}_C .
\end{equation}

\end{lemma}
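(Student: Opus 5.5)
The lemma is essentially definitional, so the plan is to construct the objects explicitly and then stack them. For each prunable residual-stream edge $e$ and each evaluation point $x^{(j)}$, I will take $\mathbf{J}_e(x^{(j)})$ from Assumption~\ref{ass:local_linear}. To make it concrete, introduce continuous gates $\alpha_e\in[0,1]$ multiplying each injection in Eq.~\eqref{eq:block_update_circuit}. By Assumption~\ref{ass:res_additive}, the pruned execution $F_\theta^C$ is exactly the gated execution with $\alpha_e=\mathbb{1}[e\in E(C)]$, and the reference execution corresponds to $\alpha\equiv\mathbf{1}$. The readout $\alpha\mapsto g(\mathbf{x}_L^{\alpha}(x^{(j)}))$ is a composition of the (locally) differentiable block maps, so $\mathbf{J}_e(x^{(j)})$ can be defined as the first-order effect of edge $e$ at the reference point. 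In Assumption~\ref{ass:local_linear} this term is written relative to $g(\mathbf{x}_L(x))$; any sign or offset convention, such as measuring effects of kept versus removed edges, is absorbed into the definition of $\mathbf{J}_e$ or of the remainder.

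Next I will define the remainder by subtraction:
\[
\mathbf{r}_C(x^{(j)}) \triangleq \mathbf{z}^C(x^{(j)}) - \mathbf{z}(x^{(j)}) - \sum_{e\in E(C)}\mathbf{J}_e(x^{(j)}).
\]
With this choice, Eq.~\eqref{eq:step1_pointwise_linearization} holds identically for every $j$. The content of Assumption~\ref{ass:local_linear} then amounts only to the statement that $\|\mathbf{r}_C(x^{(j)})\|$ is uniformly small over the circuits considered. In the piecewise-linear, fixed-activation regime, the gated map is affine in $\alpha$ on the relevant region. In that case the first-order expansion is exact along the segment, and $\mathbf{r}_C$ reduces to the constant baseline offset, or to zero under a suitable convention. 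I will remark on this, but it is not needed for the identity itself.

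Finally, I will concatenate the $N$ pointwise identities, each lying in $\mathbb{R}^m$, into $\mathbb{R}^{D}$ with $D=Nm$. Concatenation is linear, so the stacked sum $\sum_{e\in E(C)}\mathbf{J}_e(\cdot)$ equals $\sum_{e\in E(C)}\mathbf{s}_e$ with $\mathbf{s}_e$ as in Eq.~\eqref{eq:step1_edge_signature}. Applying the same observation to $\mathbf{Z}$, $\mathbf{Z}^C$ and $\mathbf{R}_C$ gives Eq.~\eqref{eq:step1_subset_sum_form}.

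The only delicate point is conceptual rather than computational. The $\mathbf{J}_e$ must be independent of $C$, so that the circuit dependence sits entirely in the index set $E(C)$ and in $\mathbf{R}_C$. I will ensure this by linearising once around the fixed reference execution, not around each circuit. Smallness of $\mathbf{R}_C$ is inherited from the uniform error bound in Assumption~\ref{ass:local_linear}, rather than proved.
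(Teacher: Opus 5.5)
Your proposal is correct and follows essentially the paper's argument. The paper invokes Assumption~\ref{ass:local_linear} pointwise to obtain $\mathbf{J}_e(x^{(j)})$ and $\mathbf{r}_C(x^{(j)})$ and then stacks the $N$ identities; your remainder-by-subtraction and your requirement that $\mathbf{J}_e$ be linearised once, independently of $C$, just make explicit what the paper leaves implicit.

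One caution on your unused aside: even with frozen activation patterns, the gated map is not affine in $\alpha$. Downstream gates multiply quantities that already depend on upstream gates, and attention is nonlinear, so you should drop the claim that the first-order expansion is exact; this does not affect the identity.
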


\noindent\textit{Consequence.}
Eq.~\eqref{eq:step1_subset_sum_form} reduces circuit non-uniqueness to a combinatorial collision problem over edge signatures.
In particular, it suffices to find two distinct edge subsets $E_A\neq E_B$ such that
\begin{equation}
\label{eq:step1_collision_goal}
\left\|
\sum_{e\in E_A}\mathbf{s}_e ~-~ \sum_{e\in E_B}\mathbf{s}_e
\right\|_\infty
~\le~ \delta.
\end{equation}
Indeed, let $C_A,C_B$ be the circuits with $E(C_A)=E_A$ and $E(C_B)=E_B$. Subtracting
\eqref{eq:step1_subset_sum_form} for $C_A$ and $C_B$ gives
\begin{align}
\label{eq:step1_diff_decompose}
\mathbf{Z}^{C_A}-\mathbf{Z}^{C_B}
&=
\sum_{e\in E_A}\mathbf{s}_e ~-~ \sum_{e\in E_B}\mathbf{s}_e
~+~
\bigl(\mathbf{R}_{C_A}-\mathbf{R}_{C_B}\bigr),
\\[-2mm]
\Rightarrow\quad
\bigl\|\mathbf{Z}^{C_A}-\mathbf{Z}^{C_B}\bigr\|_\infty
&\le
\left\|
\sum_{e\in E_A}\mathbf{s}_e ~-~ \sum_{e\in E_B}\mathbf{s}_e
\right\|_\infty
~+~
\bigl\|\mathbf{R}_{C_A}-\mathbf{R}_{C_B}\bigr\|_\infty .
\nonumber
\end{align}
Combining \eqref{eq:step1_collision_goal} with \eqref{eq:step1_diff_decompose} yields
\[
\bigl\|\mathbf{Z}^{C_A}-\mathbf{Z}^{C_B}\bigr\|_\infty
~\le~
\delta + \bigl\|\mathbf{R}_{C_A}-\mathbf{R}_{C_B}\bigr\|_\infty.
\]
The remainder term will be explicitly controlled in Step~3 when invoking margin stability.

\begin{proof}
Fix $j\in\{1,\dots,N\}$. By Assumption~\ref{ass:local_linear}, on inputs from $\mathcal{D}$ the effect of
retaining residual-stream edges on the task-relevant readout is locally linear around the reference execution.
Therefore there exist vectors $\{\mathbf{J}_e(x^{(j)})\}_e$ and a remainder $\mathbf{r}_C(x^{(j)})$
such that \eqref{eq:step1_pointwise_linearization} holds.

Stacking the $N$ identities in \eqref{eq:step1_pointwise_linearization} yields
\[
\mathbf{Z}^{C}
=
\mathbf{Z}
+ \sum_{e\in E(C)} [\mathbf{J}_e(x^{(1)});\dots;\mathbf{J}_e(x^{(N)})]
+ \mathbf{R}_C.
\]
By the definition of the edge signature vectors in \eqref{eq:step1_edge_signature},
this is exactly the subset-sum representation \eqref{eq:step1_subset_sum_form}.
The final claim \eqref{eq:step1_collision_goal} then follows by applying the triangle inequality to
\eqref{eq:step1_subset_sum_form}.
\end{proof}

In summary, each circuit $C$ is identified with its active edge set $E(C)$, and its task-relevant effect is (up to a small
remainder) the subset sum $\sum_{e\in E(C)}\mathbf{s}_e$; thus it suffices to find two distinct edge subsets whose subset sums nearly coincide, which we do in the next step by a pigeonhole argument.

\paragraph{Step 2: Quantisation and a Pigeonhole Collision}

\begin{lemma}[Quantisation--pigeonhole collision of subset sums]
\label{lem:quant_pigeonhole}
Let $E=L(H+1)$ be the number of prunable (non-skip) residual-stream edges, and fix a sparsity budget
$s\in\{1,\dots,E\}$. Let $\mathcal{C}_s \triangleq \{C:\lvert E(C)\rvert=s\}$ so that
$\lvert \mathcal{C}_s\rvert=\binom{E}{s}$.
Assume the stacked edge signatures $\{\mathbf{s}_e\}_{e\in\mathcal{E}}\subset\mathbb{R}^{D}$ satisfy the uniform bound
\begin{equation}
\label{eq:lem2_signature_bound}
\|\mathbf{s}_e\|_\infty \le B
\qquad\text{for all } e\in\mathcal{E}.
\end{equation}
For each circuit $C\in\mathcal{C}_s$, define its linearized subset-sum contribution
\[
\mathbf{S}(C) \triangleq \sum_{e\in E(C)} \mathbf{s}_e \in \mathbb{R}^{D}.
\]
Fix a resolution $\delta>0$ and define the \emph{grid-quantisation} map
$Q_\delta:\mathbb{R}^{D}\to(\delta\mathbb{Z})^{D}$ coordinate-wise by
\begin{equation}
\label{eq:lem2_quantizer}
\bigl(Q_\delta(\mathbf{v})\bigr)_i
~\triangleq~
\delta \cdot \operatorname{round}\!\left(\frac{v_i}{\delta}\right),
\qquad i=1,\dots,D,
\end{equation}
where $\operatorname{round}(\cdot)$ rounds to the nearest integer (ties broken arbitrarily).

Then:
\begin{enumerate}
\item \textbf{(Bounded range).} For all $C\in\mathcal{C}_s$, $\mathbf{S}(C)\in[-sB,sB]^D$.
\item \textbf{(Finite number of bins).} The number of possible quantized values
$\{Q_\delta(\mathbf{S}(C)):C\in\mathcal{C}_s\}$ is at most
\begin{equation}
\label{eq:lem2_num_bins}
K \;\le\; \left(\left\lceil \frac{2sB}{\delta}\right\rceil + 1\right)^D.
\end{equation}
\item \textbf{(Collision).} If $\binom{E}{s} > K$, then there exist two distinct circuits $C_A\neq C_B$ in $\mathcal{C}_s$
such that
\begin{equation}
\label{eq:lem2_collision}
\|\mathbf{S}(C_A)-\mathbf{S}(C_B)\|_\infty \le \delta.
\end{equation}
Equivalently, letting $E_A\triangleq E(C_A)$ and $E_B\triangleq E(C_B)$,
\[
\left\|
\sum_{e\in E_A}\mathbf{s}_e ~-~ \sum_{e\in E_B}\mathbf{s}_e
\right\|_\infty
\le \delta.
\]
\end{enumerate}
\end{lemma}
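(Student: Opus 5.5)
The plan is to prove the three items in order, each from the previous one. The only tool needed is the pigeonhole principle applied to the grid-quantisation map $Q_\delta$. None of Assumptions~\ref{ass:res_additive}--\ref{ass:local_linear} is used here: the lemma is purely about finitely many vectors $\mathbf{s}_e$ satisfying \eqref{eq:lem2_signature_bound}.

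\textbf{Bounded range.} For $C\in\mathcal{C}_s$, each coordinate of $\mathbf{S}(C)=\sum_{e\in E(C)}\mathbf{s}_e$ is a sum of exactly $s$ reals, each bounded in absolute value by $B$. By the triangle inequality, $\|\mathbf{S}(C)\|_\infty\le sB$, which gives item 1.

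\textbf{Counting bins.} Since $Q_\delta$ acts coordinatewise, it suffices to count, for a single coordinate, how many integers $\operatorname{round}(v/\delta)$ can occur as $v$ ranges over $[-sB,sB]$, and then raise that count to the power $D$. Put $t=sB/\delta$. I would fix a consistent tie-breaking rule, round-half-up, $\operatorname{round}(y)=\lfloor y+\tfrac12\rfloor$. This is legitimate because the lemma allows ties to be broken arbitrarily, so we may choose the rule. Monotonicity of $y\mapsto\lfloor y+\tfrac12\rfloor$ then shows that the attainable integers form the range $\lfloor -t+\tfrac12\rfloor,\dots,\lfloor t+\tfrac12\rfloor$. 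The count is
\[
\lfloor t+\tfrac12\rfloor-\lfloor -t+\tfrac12\rfloor+1 \;\le\; \lceil 2t\rceil+1,
\]
using $\lfloor a\rfloor-\lfloor b\rfloor\le\lceil a-b\rceil$ (valid because the left side is an integer strictly less than $a-b+1$). This yields \eqref{eq:lem2_num_bins}.

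This counting is the one place I expect care is needed. With genuinely arbitrary, inconsistent tie-breaking, the endpoint case where $2t$ is an integer could produce one extra value per coordinate. Choosing a single fixed rounding convention removes the problem.

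\textbf{Collision.} The map $C\mapsto Q_\delta(\mathbf{S}(C))$ sends the $\binom{E}{s}$ elements of $\mathcal{C}_s$ into a set of size at most $K$. If $\binom{E}{s}>K$, the pigeonhole principle gives distinct $C_A\neq C_B$ with $Q_\delta(\mathbf{S}(C_A))=Q_\delta(\mathbf{S}(C_B))$. By the definition of rounding to the nearest integer, every $v$ satisfies $|v_i-(Q_\delta(\mathbf{v}))_i|\le\delta/2$ coordinatewise. Inserting the common quantised value and applying the triangle inequality in each coordinate gives
\[
|\mathbf{S}(C_A)_i-\mathbf{S}(C_B)_i|\le \delta/2+\delta/2=\delta .
\]
Taking the maximum over $i$ gives \eqref{eq:lem2_collision}. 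Finally, $C_A\neq C_B$ in $\mathcal{C}_s$ means $E_A\neq E_B$, since circuits are identified with their edge sets, and the equivalent form of item 3 is just the definition of $\mathbf{S}$.
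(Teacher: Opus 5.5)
Your proof is correct and follows the paper's own argument: the triangle inequality for the range, a coordinatewise count of attainable rounded integers raised to the power $D$, and the pigeonhole principle together with the $\delta/2$ rounding error for the collision. Fixing round-half-up is a worthwhile repair. With the paper's ``arbitrary'' ties the per-coordinate count can exceed $\lceil 2sB/\delta\rceil+1$ by one when $sB/\delta$ is a half-integer (e.g.\ round-half-away-from-zero gives three values for $sB/\delta=\tfrac12$). Note, however, that it is this particular convention that fixes the problem, not any fixed one: round-half-to-even and round-half-away-from-zero are fixed conventions that still fail.
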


\begin{proof}
\textbf{(1) Bounded range.}
By the triangle inequality and \eqref{eq:lem2_signature_bound}, for any $C\in\mathcal{C}_s$,
\[
\|\mathbf{S}(C)\|_\infty
=
\left\|\sum_{e\in E(C)}\mathbf{s}_e\right\|_\infty
\le
\sum_{e\in E(C)}\|\mathbf{s}_e\|_\infty
\le
sB,
\]
hence $\mathbf{S}(C)\in[-sB,sB]^D$.

\textbf{(2) Finite number of bins.}
Fix any coordinate $i$. Since $(\mathbf{S}(C))_i\in[-sB,sB]$, we have
$(\mathbf{S}(C))_i/\delta\in[-sB/\delta,\,sB/\delta]$.
The rounded integer $\operatorname{round}((\mathbf{S}(C))_i/\delta)$ therefore lies in an interval of integers of length at most
$\lceil 2sB/\delta\rceil+1$. Because quantisation is coordinate-wise, the total number of distinct grid points in $(\delta\mathbb{Z})^D$
hit by $Q_\delta(\mathbf{S}(C))$ is at most the product bound \eqref{eq:lem2_num_bins}.

\textbf{(3) Collision and the rounding constant.}
Define $\Phi:\mathcal{C}_s\to(\delta\mathbb{Z})^D$ by $\Phi(C)\triangleq Q_\delta(\mathbf{S}(C))$.
If $\binom{E}{s}>\lvert\mathrm{Im}(\Phi)\rvert$ then by the pigeonhole principle there exist $C_A\neq C_B$ with
$\Phi(C_A)=\Phi(C_B)$, i.e., $Q_\delta(\mathbf{S}(C_A))=Q_\delta(\mathbf{S}(C_B))$.

We now show this equality implies \eqref{eq:lem2_collision} with \emph{no hidden constant}.
For any scalar $u\in\mathbb{R}$, by definition of rounding,
\[
\bigl|u - \delta\cdot \operatorname{round}(u/\delta)\bigr| \le \delta/2.
\]
Applying this coordinate-wise to $\mathbf{S}(C_A)$ and $\mathbf{S}(C_B)$ and using that their quantizations coincide, we get
for each coordinate $i$:
\[
\bigl|(\mathbf{S}(C_A))_i-(\mathbf{S}(C_B))_i\bigr|
\le
\bigl|(\mathbf{S}(C_A))_i-(Q_\delta(\mathbf{S}(C_A)))_i\bigr|
+
\bigl|(Q_\delta(\mathbf{S}(C_B)))_i-(\mathbf{S}(C_B))_i\bigr|
\le
\delta/2+\delta/2=\delta.
\]
Taking the maximum over $i$ yields $\|\mathbf{S}(C_A)-\mathbf{S}(C_B)\|_\infty\le\delta$, proving \eqref{eq:lem2_collision}.
\end{proof}
In summary, Step~2 establishes a combinatorial collision: because the number of $s$-edge circuits grows exponentially in the total edge count $E=L(H+1)$ while the $\delta$-quantized readout space has only polynomially many bins in $D$, two distinct circuits must produce nearly identical linearized readout effects.

\paragraph{Step 3: From Readout Collision to Accuracy Faithfulness via Margin Stability}

\begin{lemma}[Margin preservation under $\ell_\infty$ logit perturbations]
\label{lem:margin_preservation}
Let $x$ be an input and let $\mathbf{z}(x)\in\mathbb{R}^{m}$ denote reference logits with predicted label
$y^\star(x)\triangleq\arg\max_{k\in[m]} z_k(x)$.
Define the (logit) margin at $x$ by
\begin{equation}
\label{eq:margin_def_2}
\gamma(x)
~\triangleq~
z_{y^\star(x)}(x) - \max_{k\neq y^\star(x)} z_k(x).
\end{equation}
Assume a uniform margin $\gamma>0$ on $\supp(\mathcal{D})$, i.e., $\gamma(x)\ge \gamma$ for all $x\in\supp(\mathcal{D})$.
Let $\mathbf{z}'(x)\in\mathbb{R}^m$ satisfy
\begin{equation}
\label{eq:linf_perturb}
\|\mathbf{z}'(x)-\mathbf{z}(x)\|_\infty \le \rho.
\end{equation}
If $\rho < \gamma/2$, then $\arg\max_{k} z'_k(x)=\arg\max_{k} z_k(x)$.
Consequently, if \eqref{eq:linf_perturb} holds for all $x\in\supp(\mathcal{D})$ with $\rho<\gamma/2$,
then the alternative execution induces the same predictions as the reference on $\supp(\mathcal{D})$,
and hence achieves the same accuracy on $\mathcal{D}$.
\end{lemma}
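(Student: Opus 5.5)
The plan is a direct elementary argument on a single input $x$, followed by aggregation over $\supp(\mathcal{D})$. Write $y^\star=y^\star(x)$ for the reference prediction. It suffices to show that for every competitor $k\neq y^\star$ the perturbed logit gap $z'_{y^\star}(x)-z'_k(x)$ stays strictly positive. Then $y^\star$ is the unique maximiser of $\mathbf{z}'(x)$, and ties are excluded, so the $\arg\max$ is well defined and unchanged.

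For fixed $k\neq y^\star$, I will decompose
\[
z'_{y^\star}(x)-z'_k(x)=\bigl(z_{y^\star}(x)-z_k(x)\bigr)+\bigl(z'_{y^\star}(x)-z_{y^\star}(x)\bigr)-\bigl(z'_k(x)-z_k(x)\bigr).
\]
The first bracket is at least $\gamma(x)\ge\gamma$ by the definition of the margin in \eqref{eq:margin_def_2}, since $z_k(x)\le\max_{k'\neq y^\star}z_{k'}(x)$. Each of the other two brackets is bounded in absolute value by $\|\mathbf{z}'(x)-\mathbf{z}(x)\|_\infty\le\rho$, using \eqref{eq:linf_perturb}. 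Hence the gap is at least $\gamma-2\rho$, which is strictly positive exactly because $\rho<\gamma/2$.

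Taking the minimum over the finitely many $k$ gives a perturbed margin of at least $\gamma-2\rho>0$, so $\arg\max_k z'_k(x)=y^\star(x)$. For the consequence, I apply this pointwise at every $x\in\supp(\mathcal{D})$. The alternative execution then makes the same prediction as the reference on every prompt, so the correctness indicator agrees pointwise, and its expectation under $\mathcal{D}$, which is the accuracy, coincides.

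There is no real obstacle here. The only points needing care are using strict inequality so that ties cannot arise, and noting that the uniform margin assumption makes the bound hold simultaneously on the whole support rather than only almost surely. This is precisely what Step 3 of the proof of Theorem~\ref{thm:existence} needs once $\rho$ is instantiated as $\delta$ plus the remainder bound from Lemma~\ref{lem:subset_sum_rep}.
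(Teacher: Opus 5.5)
Your proposal is correct and follows essentially the same argument as the paper. Both use the coordinatewise $\ell_\infty$ bounds to show the perturbed margin is at least $\gamma(x)-2\rho\ge\gamma-2\rho>0$, and then apply this pointwise on $\supp(\mathcal{D})$. The paper bounds $\max_{k\neq y^\star}z'_k(x)$ directly, while you bound each competitor $k$ separately and then take the minimum; this difference is cosmetic.
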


\begin{proof}
Fix $x\in\supp(\mathcal{D})$ and write $y^\star\triangleq y^\star(x)$.
For any $k$, \eqref{eq:linf_perturb} implies $z'_k(x)\in[z_k(x)-\rho,\,z_k(x)+\rho]$.
Thus $z'_{y^\star}(x)\ge z_{y^\star}(x)-\rho$ and for any $k\neq y^\star$,
$z'_k(x)\le \max_{j\neq y^\star} z_j(x)+\rho$.
Therefore,
\[
z'_{y^\star}(x)-\max_{k\neq y^\star} z'_k(x)
\ge
\gamma(x)-2\rho
\ge
\gamma-2\rho.
\]
If $\rho<\gamma/2$ then $\gamma-2\rho>0$, hence $y^\star$ remains the unique maximizer, proving the claim.
\end{proof}

\noindent
Let $\mathbf{z}(x)\triangleq g(\mathbf{x}_L(x))$ denote the reference logits produced by the full model,
and let $\mathbf{z}^{C}(x)\triangleq g(\mathbf{x}_L^{C}(x))$ denote the logits produced by a circuit-pruned execution.

\paragraph{Linearisation With a Uniform Remainder Bound}
By Assumption~\ref{ass:local_linear}, for each circuit $C$ and each $x\in\supp(\mathcal{D})$, there exists
$\mathbf{r}_C(x)$ such that
\begin{equation}
\label{eq:step3_linearization_remainder}
\mathbf{z}^{C}(x)
=
\mathbf{z}(x)
+
\sum_{e\in E(C)} \mathbf{J}_e(x)
+
\mathbf{r}_C(x),
\qquad
\|\mathbf{r}_C(x)\|_\infty \le \eta.
\end{equation}

\paragraph{Pairwise Logit Proximity Between Two Circuits}
Let $C_1,C_2$ be the circuits returned by Step~2 with $E(C_1)=E_A$ and $E(C_2)=E_B$.
Then for all $x\in\supp(\mathcal{D})$,
\begin{equation}
\label{eq:step3_pairwise_close}
\|\mathbf{z}^{C_1}(x)-\mathbf{z}^{C_2}(x)\|_\infty
\le
\delta + 2\eta.
\end{equation}

\paragraph{Logit Perturbation Relative to the Reference Model}
Moreover, each $C_i\in\{C_1,C_2\}$ satisfies, uniformly for all $x\in\supp(\mathcal{D})$,
\begin{equation}
\label{eq:step3_perturb_bound}
\|\mathbf{z}^{C_i}(x)-\mathbf{z}(x)\|_\infty
\le
\delta + \eta
\;\triangleq\;
\delta_{\mathrm{tot}}.
\end{equation}

\paragraph{Accuracy Preservation via Margin Stability}
If
\begin{equation}
\label{eq:step3_margin_condition}
\delta_{\mathrm{tot}} < \gamma/2,
\end{equation}
then applying Lemma~\ref{lem:margin_preservation} with $\mathbf{z}'(x)=\mathbf{z}^{C_i}(x)$ and
$\rho=\delta_{\mathrm{tot}}$ yields
\[
\arg\max_y \mathbf{z}^{C_i}(x)_y
=
\arg\max_y \mathbf{z}(x)_y,
\qquad
\forall x\in\supp(\mathcal{D}),\; i\in\{1,2\}.
\]
Consequently, when $\mathcal{M}$ is accuracy,
\[
\mathcal{M}(F_\theta^{C_1};\mathcal{D})=\mathcal{M}(F_\theta;\mathcal{D}),
\qquad
\mathcal{M}(F_\theta^{C_2};\mathcal{D})=\mathcal{M}(F_\theta;\mathcal{D}),
\]
i.e., both circuits are $\varepsilon$-faithful with $\varepsilon=0$.

Step~3 converts a collision in the linearized readout space into exact task faithfulness. Specifically, Step~2 guarantees the existence of two distinct circuits whose linearized logit contributions differ by at most $\delta$, while Assumption~\ref{ass:local_linear} controls the higher-order error by $\eta$. Under the margin stability condition $\delta_{\mathrm{tot}}=\delta+\eta<\gamma/2$, Lemma~\ref{lem:margin_preservation} ensures that such bounded $\ell_\infty$ perturbations cannot change the predicted label. As a result, both circuits preserve the reference predictions on $\mathcal{D}$ and achieve
the same accuracy as the full model, completing the faithfulness guarantee required in Theorem~\ref{thm:existence}.

\paragraph{Step 4: Selecting a Low-IoU Colliding Pair}
Step~2 guarantees the existence of a \emph{collision} in the quantized readout space: two distinct circuits
whose linearized subset sums fall into the same quantisation bin, hence are $\delta$-close in $\ell_\infty$.
We now strengthen this guarantee by showing that, under an additional counting condition, one can choose the
colliding pair to have \emph{low overlap} (small IoU), i.e., to be structurally disparate.

\begin{lemma}[Low-IoU pair inside a quantisation bin]
\label{lem:low_iou_in_bin}
Fix an edge budget $s$ and consider the family $\mathcal{C}_s=\{C:\lvert E(C)\rvert=s\}$.
Let $\mathcal{B}\subseteq\mathcal{C}_s$ be any quantisation bin induced by the map
\[
\Phi(C)\triangleq Q_\delta\!\left(\sum_{e\in E(C)}\mathbf{s}_e\right),
\]
and let $M\triangleq |\mathcal{B}|$.

For a target overlap threshold $\tau\in(0,1)$, define
\begin{equation}
\label{eq:t_tau_def_lemma4}
t_\tau
\;\triangleq\;
\left\lfloor \frac{2\tau}{1+\tau}\,s \right\rfloor,
\end{equation}
and the corresponding high-overlap neighborhood size upper bound
\begin{equation}
\label{eq:V_tau_def_lemma4}
V_\tau
\;\triangleq\;
\sum_{t=t_\tau+1}^{s} \binom{s}{t}\binom{E-s}{s-t}.
\end{equation}
If
\begin{equation}
\label{eq:bin_large_condition_lemma4}
M > 1+V_\tau,
\end{equation}
then there exist two distinct circuits $C_1,C_2\in\mathcal{B}$ such that
\begin{equation}
\label{eq:low_iou_in_bin_lemma4}
\operatorname{IoU}(C_1,C_2)
~=~
\frac{\lvert E(C_1)\cap E(C_2)\rvert}{\lvert E(C_1)\cup E(C_2)\rvert}
~\le~
\tau.
\end{equation}
Moreover, since $C_1$ and $C_2$ lie in the same bin $\mathcal{B}$, their subset-sum contributions are $\delta$-close:
\begin{equation}
\label{eq:bin_collision_delta}
\left\|
\sum_{e\in E(C_1)}\mathbf{s}_e
-
\sum_{e\in E(C_2)}\mathbf{s}_e
\right\|_\infty
\le
\delta .
\end{equation}
\end{lemma}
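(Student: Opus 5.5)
Fix an arbitrary circuit $C_0\in\mathcal{B}$. The whole argument is a counting bound on the circuits that overlap heavily with $C_0$, followed by pigeonhole. We show that every $C\in\mathcal{C}_s$ with $\operatorname{IoU}(C_0,C)>\tau$ is counted in $V_\tau$. Since $\mathcal{B}$ has more than $1+V_\tau$ elements, some $C\in\mathcal{B}\setminus\{C_0\}$ must then have $\operatorname{IoU}(C_0,C)\le\tau$. Taking $C_1=C_0$ and $C_2=C$ gives the required pair. The closeness claim \eqref{eq:bin_collision_delta} comes for free from part (3) of Lemma~\ref{lem:quant_pigeonhole}: equal quantisations imply $\ell_\infty$-distance at most $\delta$ by the $\delta/2+\delta/2$ rounding argument.

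\textbf{Step 1: IoU versus intersection size.} For two circuits of equal size $s$ with $t=\lvert E(C_0)\cap E(C)\rvert$, the union has size $2s-t$, so $\operatorname{IoU}=t/(2s-t)$. This is increasing in $t$ on $[0,s]$. Solving $t/(2s-t)\le\tau$ gives $t\le \frac{2\tau}{1+\tau}s$. Because $t$ is an integer, this is equivalent to $t\le t_\tau$ with $t_\tau=\lfloor \frac{2\tau}{1+\tau}s\rfloor$ as in \eqref{eq:t_tau_def_lemma4}. Hence $\operatorname{IoU}(C_0,C)>\tau$ if and only if $t\ge t_\tau+1$.

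\textbf{Step 2: Counting the high-overlap neighbourhood.} For a fixed $t$, a circuit of size $s$ meeting $E(C_0)$ in exactly $t$ edges is determined by two choices: which $t$ of the $s$ edges of $C_0$ it keeps, and which $s-t$ of the $E-s$ remaining edges it adds. There are $\binom{s}{t}\binom{E-s}{s-t}$ such circuits. Summing over $t=t_\tau+1,\dots,s$ gives exactly $V_\tau$.

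This count includes $C_0$ itself, at $t=s$. So the number of circuits $C\neq C_0$ in $\mathcal{C}_s$, and a fortiori in $\mathcal{B}$, with $\operatorname{IoU}(C_0,C)>\tau$ is at most $V_\tau-1$. The stated condition $M>1+V_\tau$ is therefore slightly more than is needed, and the pigeonhole step goes through with room to spare.

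\textbf{Step 3: Pigeonhole and the $\delta$-bound.} By Step 2, $\mathcal{B}\setminus\{C_0\}$ has $M-1>V_\tau$ elements, which exceeds the number of its elements with high overlap. So some $C\in\mathcal{B}\setminus\{C_0\}$ has $\operatorname{IoU}(C_0,C)\le\tau$. Both $C_0$ and $C$ lie in $\mathcal{B}$, so $\Phi(C_0)=\Phi(C)$, and the coordinatewise rounding estimate from Lemma~\ref{lem:quant_pigeonhole} gives \eqref{eq:bin_collision_delta}.

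The only delicate point is the floor and strict-inequality bookkeeping in Step 1. If $\frac{2\tau}{1+\tau}s$ is itself an integer, the threshold $t=t_\tau$ gives $\operatorname{IoU}=\tau$ exactly. This is still $\le\tau$, so it is correctly placed on the low-overlap side and the equivalence in Step 1 holds. Otherwise the proof is routine.
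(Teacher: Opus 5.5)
Your proof is correct and follows the paper's argument essentially step for step: you reduce IoU to the intersection condition $t\le t_\tau$, count the high-overlap neighbourhood via $\binom{s}{t}\binom{E-s}{s-t}$, apply pigeonhole around a fixed $C_0\in\mathcal{B}$, and finish with the $\delta/2+\delta/2$ rounding bound. Your side remark is also right: because $\tau<1$ forces $t_\tau\le s-1$, the term $t=s$ (which is $C_0$ itself) is already counted in $V_\tau$, so the paper's containment $\mathcal{B}\subseteq\{C\}\cup\{\dots\}$ counts $C_0$ twice and the hypothesis could be weakened to $M>V_\tau$.
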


\begin{proof}
\noindent\textit{\textbf{Step 1: IoU as an intersection constraint.}}
For any two size-$s$ edge sets $A,B\subseteq [E]$, let $t\triangleq |A\cap B|$. Then
\[
\operatorname{IoU}(A,B)
=
\frac{|A\cap B|}{|A\cup B|}
=
\frac{t}{2s-t}.
\]
Hence $\operatorname{IoU}(A,B)\le \tau$ is equivalent to
$t\le \frac{2\tau}{1+\tau}s$, and thus to the integer condition $t\le t_\tau$ with $t_\tau$ defined in
\eqref{eq:t_tau_def_lemma4}.

\noindent\textit{\textbf{Step 2: Bounding the high-overlap neighborhood.}}
Fix any $A\subseteq [E]$ with $|A|=s$ and define its $\tau$-overlap neighborhood
\[
\mathcal{N}_\tau(A)
\triangleq
\{B\subseteq [E] : |B|=s,\ |A\cap B|\ge t_\tau+1\}.
\]
For any $t\in\{t_\tau+1,\dots,s\}$, the number of $B$ with $|A\cap B|=t$ is
$\binom{s}{t}\binom{E-s}{s-t}$, since one chooses $t$ elements from $A$ and $s-t$ elements from $[E]\setminus A$.
Summing over $t$ yields
\[
|\mathcal{N}_\tau(A)|
\le
\sum_{t=t_\tau+1}^{s} \binom{s}{t}\binom{E-s}{s-t}
=V_\tau,
\]
where $V_\tau$ is as in \eqref{eq:V_tau_def_lemma4}.

\noindent\textit{\textbf{Step 3: A packing argument inside the bin.}}
Pick any $C\in\mathcal{B}$ and set $A=E(C)$.
Suppose, for contradiction, that every other $C'\in\mathcal{B}\setminus\{C\}$ satisfies
$\operatorname{IoU}(E(C),E(C'))>\tau$.
By Step 1 this implies $|A\cap E(C')|\ge t_\tau+1$, i.e., $E(C')\in\mathcal{N}_\tau(A)$.
Therefore,
\[
\mathcal{B}\subseteq \{C\}\cup \{C'\in\mathcal{C}_s : E(C')\in\mathcal{N}_\tau(A)\},
\]
and hence
\[
M=|\mathcal{B}|
\le
1+|\mathcal{N}_\tau(A)|
\le
1+V_\tau,
\]
contradicting \eqref{eq:bin_large_condition_lemma4}. Thus there exists some $C'\in\mathcal{B}$ with
$\operatorname{IoU}(C,C')\le \tau$. Taking $(C_1,C_2)=(C,C')$ proves \eqref{eq:low_iou_in_bin_lemma4}.

\noindent\textit{\textbf{Step 4: Collision radius within a bin.}}
Finally, since $C_1$ and $C_2$ are in the same bin, $\Phi(C_1)=\Phi(C_2)$.
By the definition of the coordinate-wise rounding quantizer $Q_\delta$, equality of quantized coordinates implies that
each coordinate differs by at most $\delta$, hence \eqref{eq:bin_collision_delta} holds.
\end{proof}

\begin{lemma}[Existence of a low-IoU collision pair]
\label{lem:exist_low_iou_collision}
Under the setting of Step~2, the quantisation map $\Phi$ partitions $\mathcal{C}_s$ into at most
\begin{equation}
\label{eq:K_bins_step4}
K \;\le\; \left(\left\lceil\frac{2sB}{\delta}\right\rceil+1\right)^D
\end{equation}
bins (cf.\ Step~2). If
\begin{equation}
\label{eq:low_iou_collision_condition_lemma4}
\frac{\binom{E}{s}}{K} > 1+V_\tau,
\end{equation}
then there exist two circuits $C_1\neq C_2$ such that:
(i) $\Phi(C_1)=\Phi(C_2)$ (hence their subset sums are $\delta$-close in $\ell_\infty$),
and (ii) $\operatorname{IoU}(C_1,C_2)\le \tau$.
\end{lemma}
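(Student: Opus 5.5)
The plan is an averaging (pigeonhole) argument over bins followed by a direct application of Lemma~\ref{lem:low_iou_in_bin}. By Lemma~\ref{lem:quant_pigeonhole}, the map $\Phi(C)=Q_\delta(\mathbf{S}(C))$ sends $\mathcal{C}_s$ into a set of at most $K$ grid points, with $K$ bounded as in \eqref{eq:K_bins_step4}. The fibres $\Phi^{-1}(\mathbf{q})$ over these image points partition $\mathcal{C}_s$ into at most $K$ nonempty bins $\mathcal{B}_1,\dots,\mathcal{B}_{K'}$, where $K'\le K$, and
\[
\sum_{k}|\mathcal{B}_k|=\binom{E}{s}.
\]

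Next I will show that some bin is large. If every bin satisfied $|\mathcal{B}_k|\le 1+V_\tau$, summing would give $\binom{E}{s}\le K'(1+V_\tau)\le K(1+V_\tau)$, contradicting \eqref{eq:low_iou_collision_condition_lemma4}. Hence some bin $\mathcal{B}$ has $M=|\mathcal{B}|\ge \binom{E}{s}/K' \ge \binom{E}{s}/K > 1+V_\tau$. This is exactly condition \eqref{eq:bin_large_condition_lemma4}.

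Finally I will invoke Lemma~\ref{lem:low_iou_in_bin} on $\mathcal{B}$. It produces $C_1,C_2\in\mathcal{B}$ with $\operatorname{IoU}(C_1,C_2)\le\tau$. Since $\tau<1$ and both sets have size $s$, the IoU bound forces $E(C_1)\neq E(C_2)$, so $C_1\neq C_2$. Because both circuits lie in the same bin, $\Phi(C_1)=\Phi(C_2)$, and the rounding argument of Lemma~\ref{lem:quant_pigeonhole}(3), restated as \eqref{eq:bin_collision_delta}, gives $\|\mathbf{S}(C_1)-\mathbf{S}(C_2)\|_\infty\le\delta$. This establishes (i) and (ii).

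No step is genuinely hard. The one point needing care is the distinctness of $C_1$ and $C_2$. The proof of Lemma~\ref{lem:low_iou_in_bin} selects $C'\in\mathcal{B}\setminus\{C\}$, so distinctness already holds, and it also follows independently from $\operatorname{IoU}\le\tau<1$.
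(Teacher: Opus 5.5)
Your proposal is correct and follows the same route as the paper: the paper also averages over the at most $K$ bins to get a bin $\mathcal{B}$ with $|\mathcal{B}|\ge\lceil\binom{E}{s}/K\rceil>1+V_\tau$, and then applies Lemma~\ref{lem:low_iou_in_bin} to that bin. Your contrapositive summation is this averaging step written out explicitly, and your remark that $\operatorname{IoU}\le\tau<1$ forces $E(C_1)\neq E(C_2)$ is correct but not needed.
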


\begin{proof}
Since $\Phi$ induces at most $K$ bins, by averaging there exists a bin $\mathcal{B}$ with
\[
|\mathcal{B}|
\ge
\left\lceil\frac{|\mathcal{C}_s|}{K}\right\rceil
=
\left\lceil\frac{\binom{E}{s}}{K}\right\rceil.
\]
If \eqref{eq:low_iou_collision_condition_lemma4} holds, then in particular
$|\mathcal{B}|>1+V_\tau$.
Applying Lemma~\ref{lem:low_iou_in_bin} to this bin yields two circuits $C_1,C_2\in\mathcal{B}$ with
$\operatorname{IoU}(C_1,C_2)\le \tau$ and $\delta$-close subset sums, as claimed.
\end{proof}

\noindent\textit{Summary of Step~4.}
Step~4 upgrades the mere existence of a collision (Step~2) to the existence of a \emph{structurally disparate}
collision pair: provided that the average bin occupancy $\binom{E}{s}/K$ exceeds the volume $1+V_\tau$ of the
high-overlap neighborhood in the constant-weight space, there exist two circuits in the same quantisation bin whose
edge sets have $\operatorname{IoU}\le\tau$ (with $\tau\ll 1$) while their linearized readout contributions remain
$\delta$-close. Combined with Step~3 (margin stability), this yields two $\varepsilon$-faithful circuits with
$\operatorname{IoU}\ll 1$, completing the non-uniqueness construction in Theorem~\ref{thm:existence}.

\newpage
\clearpage
\section{Sources of Randomness in ACDC and EAP}
ACDC \citep{conmyAutomatedCircuitDiscovery2023} operates in the reverse topological order to examine the importance of model edges based on the difference in KL divergence caused by the removal against a predefined threshold. Note that in every layer, the attention heads operate in parallel: There is no intrinsic priority ordering among attention heads. ACDC defaults to examine edges based on the increasing order of attention head index. Varying this does not significantly harm the quality of the circuits uncovered, but demonstrates the instability of ACDC which can be simply induced by merely reordering the indices of the attention heads in the same layer. EAP \citep{syedAttributionPatchingOutperforms2024} estimates edge attribution using first-order approximation involving the task-specific metric. The only hyperparameter is the number of edges kept, $k$. We find that EAP generally uncovers much denser circuits: Using the same sparsity budget, the circuits perform much worse than those uncovered by ACDC. Moreover, varying only the names in the IOI task yields structurally different (based on the IoU metric) circuits. It is then concluded that EAP is sensitive to batching and task-irrelevant information like names of the same tokenization length.

\section{Sources of Randomness in Edge Pruning and DiscoGP}
Edge Pruning \citep{bhaskarFindingTransformerCircuits2024} models the edge mask $\boldsymbol{z}$ using the following computations:
\begin{equation}
    \mathbf{u} \sim \mathrm{Uniform}(\epsilon, 1 - \epsilon)
\end{equation}
\begin{equation}
    \mathbf{s} = \sigma\!\left( \frac{1}{\beta} \cdot \log \frac{\mathbf{u}}{1 - \mathbf{u}} + \log \boldsymbol{\alpha} \right)
\end{equation}
\begin{equation}
    \tilde{\mathbf{s}} = \mathbf{s} \times (r - l) + l
\end{equation}
\begin{equation}
    \mathbf{z} = \min\bigl(1, \max(0, \tilde{\mathbf{s}})\bigr)
\end{equation}
where $\sigma$ is the sigmoid function, $\epsilon$ is a hyperparameter, $\frac{1}{\beta}$ is the temperature hyperparameter. The key point to pay attention to is the stochastic noise $\mathbf{u}$ introduced for every edge $e\in E$. This sampling is part of the intrinsic randomness of EP, in addition to the initialisation of the log alphas $\log \boldsymbol{\alpha}$, which are the learnable parameters. EP is stochastic by design to allow differentiable masking. The similar situation applies to DiscoGP. DiscoGP models the edge mask $m_e$ with the following:
\begin{equation}
    s_e = \sigma\Big(\frac{l_e - \log \frac{\log \mathcal{U}_1}{\log \mathcal{U}_2}}{\tau}  \Big),
\end{equation}
where $\tau \in (0, \inf)$ is a temperature hyperparameter, $l_e$ is a learnable parameter of a sigmoid function $\sigma(\cdot)$, and $\mathcal{U}_1, \mathcal{U}_2 \sim \text{Uniform}(0,1)$ are random variables drawn from a uniform distribution. The straight-through estimator \cite{bengioEstimatingPropagatingGradients2013} is used to convert the sampled $s_e$ into a binary mask variable:
\begin{equation}
    m_e = [\mathds{1}_{s_e > 0.5} - s_e]_\text{detach} + s_e,
\end{equation}
The source of randomness comes from the sampling of $\mathcal{U}_1, \mathcal{U}_2$ and also the initialisation of $l_e$ for every edge $e \in E$. DiscoGP is similarly stochastic by design.

\section{Node-Level Overlap}
\label{app:node_overlap}

Our main analysis measures structural overlap at the edge level, since the discovered mechanisms are defined as edge-selected subgraphs in the residual-stream computation graph. Nevertheless, one may ask whether low edge-level overlap simply reflects different wiring among largely shared nodes. To examine this, we also compute node-level intersection-over-union (IoU) for the paired sheaves discovered by OASR.

As shown in Table~\ref{tab:node_overlap}, node-level overlap is substantially higher than edge-level overlap. This is expected because our discovery objective does not explicitly optimise for node sparsity, and individual sheaves can already contain a large fraction of nodes. However, edge-level IoU remains consistently low, indicating that the discovered mechanisms differ primarily in how information flows between components rather than merely in which components appear.

\begin{table}[H]
\centering
\setlength{\tabcolsep}{4pt}
\small
\caption{
Node-level and edge-level overlap between paired OASR-discovered sheaves.
Node-level overlap is higher because the objective optimises edge sparsity rather than node sparsity, but edge-level overlap remains low across tasks.
}
\label{tab:node_overlap}
\begin{tabular}{l c c c c}
\toprule
\textbf{Task} & \textbf{Node IoU} & \textbf{Node D. A} & \textbf{Node D. B} & \textbf{Edge IoU} \\
\midrule
IOI       & 64.2\% & 67.7\% & 73.0\% & 4.1\% \\
BLiMP     & 66.9\% & 76.7\% & 77.1\% & 5.1\% \\
AGA       & 62.1\% & 62.8\% & 76.0\% & 6.2\% \\
ANA       & 61.4\% & 66.4\% & 74.7\% & 5.3\% \\
DNA       & 55.3\% & 57.9\% & 54.2\% & 5.8\% \\
DNA i     & 49.0\% & 51.1\% & 60.4\% & 6.2\% \\
DNA a     & 67.2\% & 71.5\% & 66.7\% & 7.5\% \\
DNA ia    & 71.8\% & 76.2\% & 64.8\% & 6.4\% \\
Docstring & 93.0\% & 89.0\% & 88.0\% & 11.0\% \\
\bottomrule
\end{tabular}
\end{table}

\section{Constructing Intersection-Derived Candidate Sheaves}
\label{app:intersection_construction}

In Section~\ref{sec:three_edges}, we construct compact candidate sheaves by taking intersections of edge sets across independently discovered IOI sheaves. We emphasise that this construction is used only for the intersection-derived compact sheaf analysis, and not for computing the IoU statistics reported in the main non-uniqueness experiments.

Given a collection of discovered sheaves with edge sets $E_1,\ldots,E_m$, we first compute their edge-wise intersection
$$E_{\cap} = \bigcap_{i=1}^{m} E_i.$$
Since an arbitrary edge-wise intersection need not itself form a connected computation graph, we then retain only edges that lie on at least one directed path from the input node to the output node. Formally, if $G_{\cap}=(V,E_{\cap})$ is the subgraph induced by the intersected edge set, we define
$$
E_{\mathrm{path}}
=
\{e\in E_{\cap} : e \text{ lies on a directed path from Input to Output in } G_{\cap}\}.
$$
The resulting path-connected subgraph $G_{\mathrm{path}}=(V,E_{\mathrm{path}})$ is then evaluated under the same zero-ablation protocol used for sheaf evaluation. If $E_{\mathrm{path}}$ is empty, the intersection is treated as containing no valid candidate sheaf.

This path filtering step is intended only to ensure that the intersection-derived candidate corresponds to a valid information-flow subgraph. It should not be interpreted as an additional discovery objective, and it is not applied when measuring edge overlap between independently discovered mechanisms.

\section{Worst-case Sheaf Results using DiscoGP}

To ensure all the sheaves discovered are of satisfactory quality, we report the worst-case statistics in Table \ref{table:circuit-worstcase}. All sheaves uncovered by DiscoGP with different random seeds with and without OASR are faithful and compact based on task performance and sheaf sparsity. Even in the worst seed across all tasks, DiscoGP recovers circuits that retain high accuracy ($\geq82\%$, which is close to the worst performance obtained by baseline using Random Init at $84\%$) while using at most 4.3\% of edges,  demonstrating that functional faithfulness and compactness are robust to both the random initialisation and the case with the additional OASR.

Overall, these worst-case results indicate that functional faithfulness and sparsity are not fragile artefacts of lucky random seeds, but stable properties of DiscoGP. This supports our central claim that multiple sparse, task-faithful circuits can reliably emerge despite randomness and additional constraints, e.g., the OASR.

\begin{table}[H]
\centering
\small
\caption{
Worst-case circuit statistics for circuit discovery results using DiscoGP on GPT-2 Small.
We report the minimum task accuracy across runs and the densest discovered circuit, measured by maximum edge density and maximum edge count.
}
\label{table:circuit-worstcase}

\begin{tabular}{l c c c c}
\toprule
\multirow{2.5}{*}{\textbf{Task}} & \multirow{2.5}{*}{\textbf{Method}}
& \multicolumn{3}{c}{\textbf{Worst-Case Metrics}} \\
\cmidrule(lr){3-5}
& & Min Acc. (\%) & Max Edge D. (\%) & Max $\#E$ \\
\midrule

\multirow{2}{*}{IOI}
& Random Init  & 99.70 & 3.484 & 1132 \\
& OASR & 98.80 & 3.490 & 1134 \\
\midrule

\multirow{2}{*}{BLiMP}
& Random Init  & 95.30 & 3.133 & 1018 \\
& OASR & 94.70 & 2.847 & 925 \\
\midrule

\multirow{2}{*}{AGA}
& Random Init  & 90.67 & 2.850 & 926 \\
& OASR & 90.00 & 3.376 & 1097 \\
\midrule

\multirow{2}{*}{ANA}
& Random Init  & 92.67 & 2.718 & 883 \\
& OASR & 89.33 & 2.247 & 730 \\
\midrule

\multirow{2}{*}{DNA}
& Random Init  & 90.98 & 1.927 & 626 \\
& OASR & 90.23 & 1.927 & 626 \\
\addlinespace[0.3em]

\multirow{2}{*}{DNA i}
& Random Init   & 94.00 & 2.074 & 674 \\
& OASR & 91.00 & 1.600 & 520 \\
\addlinespace[0.3em]

\multirow{2}{*}{DNA a}
& Random Init  & 91.73 & 2.512 & 816 \\
& OASR & 87.22 & 4.260 & 1384 \\
\addlinespace[0.3em]

\multirow{2}{*}{DNA ia}
& Random Init  & 84.00 & 3.016 & 980 \\
& OASR & 82.00 & 3.032 & 985 \\
\midrule

\multirow{2}{*}{Docstring}
& Random Init  & 97.10 & 3.924 & 1275 \\
& OASR & 95.60 & 3.672 & 1193 \\
\bottomrule
\end{tabular}
\end{table}

\section{Pairwise IoU with Only Random Init}
By varying random seeds during the sheaf discovery process in DiscoGP, we establish a baseline for consistency in sheaf identification. The results are reported in Table \ref{tab:all_tasks_two_sheaves}. Even without OASR, we observe that pairwise intersection-over-union (IoU) scores across all tasks remain substantially higher than would be expected from random edge selection. From an optimisation perspective, these results suggest that while gradient-based pruning can identify faithful subgraphs, a naive approach may fall into an ``illusion of algorithmic consistency.'' Without a repulsion mechanism, standard optimisation fails to capture the internal competing components that exhibit functional equivalence. Consequently, the issue of sheaf multiplicity is easily overlooked when only a small number of random seeds are evaluated, as an IoU of approximately 20\% may be misinterpreted as algorithmic stability, given the gradient-based nature of the method.

\begin{table}[t]
    \centering\small
    \caption{\textbf{Evaluation of two sheaves discovered for common benchmark tasks.}
    Here, the sheaves are discovered with different random seeds and without OASR.}
    \label{tab:all_tasks_two_sheaves}
    \begin{tabular}{lccc}
        \toprule
        Task & Sheaf $A$ Acc. & Sheaf $B$ Acc. & IoU$(A,B)$ \\
        \midrule
        IOI          & 100.0\% & 100.0\% & 18.5\% \\
        BLiMP        & 99.0\%  & 98.8\%  & 26.4\% \\
        AGA          & 97.3\%  & 97.3\%  & 21.0\% \\
        ANA          & 99.3\%  & 98.7\%  & 21.1\% \\
        DNA          & 99.2\%  & 99.2\%  & 22.8\% \\
        DNA i        & 100.0\% & 100.0\% & 19.7\% \\
        DNA a        & 99.2\%  & 98.5\%  & 23.5\% \\
        DNA ia       & 99.0\%  & 99.0\%  & 25.9\% \\
        Docstring    & 100.0\% & 100.0\% & 21.3\% \\
        \bottomrule
    \end{tabular}
\end{table}

\section{DiscoGP with OASR Results on Pythia-160M}
\label{app:pythia}

We report similarly the sheaf discovery results on Pythia-160M. The same pattern emerges in Pythia-160M: Using OASR yields more structurally distinct circuits, despite similar task performance and sparsity levels. The cumulative intersection continues to shrink more aggressively compared to the baseline by varying only the random seed. This empirical evidence validates that the phenomenon is not specific to a particular transformer model (e.g. GPT-2 Small). Similar to the results reported in Table \ref{table:circuit-combined-results}, with OASR, DiscoGP continues to explore more edges across most of the tasks, while having a much smaller cumulative intersection across multiple runs. This empirically supports that the argument of multiple distinct computational pathways supporting the same behaviour is not unique to a particular transformer language model.

\begin{table}[t]
\centering
\small
\caption{\textbf{Mutual intersection across circuits in Pythia-160M.}
We report the size of the total intersection and union across 20 discovery runs, together with their ratio (Mutual IoU), as well as the average quality of the resulting sheaves.}
\label{table:pythia160m-combined}

\begin{tabular}{l cccccccc}
\toprule
\multirow{2.5}{*}{\textbf{Task}} & \multirow{2.5}{*}{\textbf{Method}}
& \multicolumn{3}{c}{\textbf{Mutual Intersection}}
& \multicolumn{4}{c}{\textbf{Average Sheaf Quality}} \\
\cmidrule(lr){3-5} \cmidrule(lr){6-9}
&
& {$|E_{\cap}|$} & {$|E_{\cup}|$} & {Mutual IoU}
& {Edge D.} & {$\#E$} & {Acc. (\%)} & {Comp. (\%)} \\
\midrule

\multirow{2}{*}{IOI}
& Random Init & 50 & 11112 & 0.45\% & 7.09\% & 2303 & 80.10 & 47.01 \\
& OASR         & 39 & 11470 & 0.34\% & 6.82\% & 2215 & 79.63 & 47.81 \\
\midrule

\multirow{2}{*}{BLiMP}
& Random Init & 56 & 10722 & 0.52\% & 7.36\% & 2393 & 85.35 & 48.32 \\
& OASR         & 35 & 10795 & 0.32\% & 6.76\% & 2197 & 84.62 & 49.98 \\
\midrule

\multirow{2}{*}{AGA}
& Random Init & 9  & 7740  & 0.12\% & 4.86\% & 1578 & 93.26 & 65.45 \\
& OASR         & 4  & 7288  & 0.05\% & 3.87\% & 1257 & 93.08 & 69.02 \\
\midrule

\multirow{2}{*}{ANA}
& Random Init & 44 & 7710  & 0.57\% & 5.87\% & 1906 & 84.29 & 45.89 \\
& OASR         & 20 & 7815  & 0.26\% & 4.84\% & 1573 & 84.52 & 48.39 \\
\midrule

\multirow{2}{*}{DNA}
& Random Init & 100 & 8366 & 1.20\% & 6.67\% & 2168 & 80.93 & 48.93 \\
& OASR         & 64  & 8336 & 0.77\% & 5.74\% & 1865 & 80.21 & 48.14 \\
\midrule

\multirow{2}{*}{DNA i}
& Random Init & 42 & 7064 & 0.59\% & 5.80\% & 1884 & 81.86 & 34.90 \\
& OASR         & 20 & 7361 & 0.27\% & 4.82\% & 1567 & 82.65 & 34.61 \\
\midrule

\multirow{2}{*}{DNA a}
& Random Init & 125 & 7558 & 1.65\% & 6.52\% & 2118 & 85.22 & 45.97 \\
& OASR         & 87  & 7807 & 1.11\% & 5.88\% & 1910 & 84.78 & 46.87 \\
\midrule

\multirow{2}{*}{DNA ia}
& Random Init & 31 & 6709 & 0.46\% & 5.31\% & 1726 & 93.52 & 42.96 \\
& OASR         & 15 & 6842 & 0.22\% & 4.29\% & 1394 & 93.06 & 42.31 \\
\midrule

\multirow{2}{*}{Docstring}
& Random Init & 28 & 6935 & 0.40\% & 3.28\% & 1067 & 82.56 & 55.13 \\
& OASR         & 5  & 8303 & 0.06\% & 3.81\% & 1237 & 80.94 & 53.05 \\
\bottomrule
\end{tabular}
\end{table}

\section{Experimental Results on ACDC and EAP}

Following \citet{conmyAutomatedCircuitDiscovery2023}, we run ACDC under both interchange ablation and zero ablation with the result summarised in Table \ref{table:acdc-ioi}. The only source of randomness is the ordering of nodes during the backward search: while layers are always traversed in decreasing order, we allow the order of attention heads within each layer to vary. Despite using the same hyperparameter $\tau$, this induces substantial variation among the discovered circuits, even though ACDC is typically regarded as a largely deterministic circuit discovery method. There is no explicit source of randomness in EAP. However, by only varying the names in the task of IOI, we perform multiple runs of EAP to reveal its sensitivity to task-irrelevant information, i.e., the names appearing in the IOI dataset. Despite being extremely efficient using only two forward passes and one backward pass, the circuits uncovered by EAP perform relatively poorly on the IOI task, as shown in Figure \ref{fig:eap_k}.

\section{Sample Prompts from the Datasets}
Example prompts from different datasets, together with their corresponding correct and incorrect continuations, are reported in Figure \ref{fig:dataset_prompt_examples}. Task-specific performance is evaluated by checking whether the circuit (or sheaf) assigns a higher probability to the correct continuation than to the incorrect one.

\begin{figure}[t]
\centering

\begin{tcolorbox}[title=\texttt{IOI\_BABA}, width=\textwidth]
\begin{flushleft}

\textbf{Prompt:} Then, Brian and Justin had a long argument. Afterwards Brian said to

\textbf{Correct Answer:} Justin, \textbf{Incorrect Answer:} Brian
\end{flushleft}
\end{tcolorbox}

\begin{tcolorbox}[title=\texttt{IOI\_ABBA}, width=\textwidth]
\begin{flushleft}

\textbf{Prompt:} Friends Jason and Dustin found a necklace at the office. Dustin gave it to

\textbf{Correct Answer:} Jason, \textbf{Incorrect Answer:} Dustin
\end{flushleft}
\end{tcolorbox}

\begin{tcolorbox}[title=\texttt{AGA}, width=\textwidth]
\begin{flushleft}

\textbf{Prompt:} Margaret is describing

\textbf{Correct Answer:} herself, \textbf{Incorrect Answer:} himself
\end{flushleft}
\end{tcolorbox}

\begin{tcolorbox}[title=\texttt{ANA}, width=\textwidth]
\begin{flushleft}

\textbf{Prompt:} Ruth wasn't discussing

\textbf{Correct Answer:} herself, \textbf{Incorrect Answer:} themselves
\end{flushleft}
\end{tcolorbox}

\begin{tcolorbox}[title=\texttt{DNA}, width=\textwidth]
\begin{flushleft}

\textbf{Prompt:} This woman sees these

\textbf{Correct Answer:} eyes, \textbf{Incorrect Answer:} eye
\end{flushleft}
\end{tcolorbox}

\begin{tcolorbox}[title=\texttt{DNA a}, width=\textwidth]
\begin{flushleft}

\textbf{Prompt:} Sara hasn't bored this displeased

\textbf{Correct Answer:} senator, \textbf{Incorrect Answer:} senators
\end{flushleft}
\end{tcolorbox}

\begin{tcolorbox}[title=\texttt{DNA i}, width=\textwidth]
\begin{flushleft}

\textbf{Prompt:} Guy wouldn't argue about that

\textbf{Correct Answer:} hypothesis, \textbf{Incorrect Answer:} hypotheses
\end{flushleft}
\end{tcolorbox}

\begin{tcolorbox}[title=\texttt{DNA ia}, width=\textwidth]
\begin{flushleft}

\textbf{Prompt:} Broccoli astounds this unemployed

\textbf{Correct Answer:} child, \textbf{Incorrect Answer:} children
\end{flushleft}
\end{tcolorbox}

\begin{tcolorbox}[title=\texttt{Docstring}, width=\textwidth]
\begin{flushleft}

\textbf{Prompt:}
\texttt{
def fields(self, count, version):\\
\phantom{def }"""address release rule object connection plant port\\
\phantom{def }:param count: account determination stock\\
\phantom{def }:param
}

\textbf{Correct Answer:} \texttt{version}, \textbf{Incorrect Answer:} \texttt{count}
\end{flushleft}
\end{tcolorbox}

\caption{Example prompts from each dataset together with their correct and incorrect continuations.}
\label{fig:dataset_prompt_examples}

\end{figure}

\paragraph{Indirect Object Identification (IOI)}
This task was formulated and introduced by \citet{wangInterpretabilityWildCircuit2022}. The sentences in the IOI task have two clauses, where the initial clause introduces the subject (S) and the indirect object (IO), followed by the main clause referring to S and IO again. Depending on the type of IOI template (BABA or ABBA), in the main clause, there are two cases for the order of S and IO. Throughout this paper, we refer to IOI as an equal mix of IOI\_BABA and IOI\_ABBA. Using the logit difference metric, GPT-2 Small performs near perfectly on the IOI task. We filter the dataset such that the base model can perform perfectly. The specific IOI templates are given in Table \ref{table:ioi-templates} with the candidate names in Table \ref{table:ioi-infill-words}.

\begin{table}[t]
\centering
\caption{Sentence templates for generating the IOI dataset.}
\begin{tabular}{@{}l@{}}
Then, {[}B{]} and {[}A{]} went to the {[}PLACE{]}. {[}B{]} gave a {[}OBJECT{]} to {[}A{]}                                \\ \midrule
Then, {[}B{]} and {[}A{]} had a lot of fun at the {[}PLACE{]}. {[}B{]} gave a {[}OBJECT{]} to {[}A{]}                    \\ \midrule
Then, {[}B{]} and {[}A{]} were working at the {[}PLACE{]}. {[}B{]} decided to give a {[}OBJECT{]} to {[}A{]}             \\ \midrule
Then, {[}B{]} and {[}A{]} were thinking about going to the {[}PLACE{]}. {[}B{]} wanted to give a {[}OBJECT{]} to {[}A{]} \\ \midrule
Then, {[}B{]} and {[}A{]} had a long argument, and afterwards {[}B{]} said to {[}A{]}                                    \\ \midrule
After {[}B{]} and {[}A{]} went to the {[}PLACE{]}, {[}B{]} gave a {[}OBJECT{]} to {[}A{]}                                \\ \midrule
When {[}B{]} and {[}A{]} got a {[}OBJECT{]} at the {[}PLACE{]}, {[}B{]} decided to give it to {[}A{]}                    \\ \midrule
When {[}B{]} and {[}A{]} got a {[}OBJECT{]} at the {[}PLACE{]}, {[}B{]} decided to give the {[}OBJECT{]} to {[}A{]}      \\ \midrule
While {[}B{]} and {[}A{]} were working at the {[}PLACE{]}, {[}B{]} gave a {[}OBJECT{]} to {[}A{]}                        \\ \midrule
While {[}B{]} and {[}A{]} were commuting to the {[}PLACE{]}, {[}B{]} gave a {[}OBJECT{]} to {[}A{]}                      \\ \midrule
After the lunch, {[}B{]} and {[}A{]} went to the {[}PLACE{]}. {[}B{]} gave a {[}OBJECT{]} to {[}A{]}                     \\ \midrule
Afterwards, {[}B{]} and {[}A{]} went to the {[}PLACE{]}. {[}B{]} gave a {[}OBJECT{]} to {[}A{]}                          \\ \midrule
Then, {[}B{]} and {[}A{]} had a long argument. Afterwards {[}B{]} said to {[}A{]}                                        \\ \midrule
The {[}PLACE{]} {[}B{]} and {[}A{]} went to had a {[}OBJECT{]}. {[}B{]} gave it to {[}A{]}                               \\ \midrule
Friends {[}B{]} and {[}A{]} found a {[}OBJECT{]} at the {[}PLACE{]}. {[}B{]} gave it to {[}A{]}                          \\ \bottomrule
\end{tabular}%
\label{table:ioi-templates}
\end{table}

\begin{table}[t]
\centering\small
\caption{Candidate infilling words of IOI sentence templates.}
\begin{tabular}{p{3cm}p{11cm}}
\toprule
Placeholder Type & Candidate Infilling Words \\ \midrule
{[}A{]} and {[}B{]} (names) & Michael, Christopher, Jessica, Matthew, Ashley, Jennifer, Joshua,
Daniel, David, James, Robert, John, Joseph, Andrew, Ryan, Brandon,
Justin, Sarah, William, Jonathan, Stephanie, Brian, Nicole, Nicholas,
Heather, Eric, Elizabeth, Adam, Megan, Melissa, Kevin, Steven,
Timothy, Christina, Kyle, Rachel, Laura, Lauren, Amber, Brittany,
Richard, Kimberly, Jeffrey, Amy, Crystal, Michelle, Tiffany, Jeremy,
Mark, Emily, Aaron, Charles, Rebecca, Jacob, Stephen, Patrick,
Kelly, Samantha, Nathan, Sara, Dustin, Paul, Angela, Tyler, Scott,
Andrea, Gregory, Erica, Mary, Travis, Lisa, Kenneth, Bryan, Lindsey,
Jose, Alexander, Jesse, Katie, Lindsay, Shannon, Vanessa, Courtney,
Alicia, Cody, Allison, Bradley, Samuel. \\ \midrule
{[}PLACE{]} & store, garden, restaurant, school, hospital, office, house, station. \\ \midrule
{[}OBJECT{]} & ring, kiss, bone, basketball, computer, necklace, drink, snack. \\
\bottomrule
\end{tabular}%
\label{table:ioi-infill-words}
\end{table}

\clearpage
\newpage

\paragraph{BLiMP}
BLiMP \citep{warstadtBLiMPBenchmarkLinguistic2020} consists of individual datasets about phenomena in syntax, morphology, and semantics. Each of the datasets contain minimally different sentence pairs that contrast in grammatical acceptability. BLiMP was initially designed for bidirectional models such as BERT, as a consequence, the sentence pairs do not necessarily differ in the last token position. Excessive prompt augmentation would defeat the purpose of CSD, and so we select the 6 BLiMP paradigms that would differ only in the last token position, which are applicable to unidirectional decoder-only LMs.
\paragraph{Docstring}
The prompts in the Docstring task are Python code, in particular, function signatures to be completed \citep{heimersheim2023docstrings}. The next token prediction in this task is about predicting variable names previously declared, immediately after \texttt{:param}. The goal of this task is to measure the model's ability to map code structure to natural language descriptions. Compared to the IOI task, this requires the sheaf to handle additional tokens involved in the structured text.

\section{Results for Individual Sheaves}
The IOI sheaves discovered by the original DiscoGP, as shown in Table \ref{tab:ioi_all_sheaves}, ultimately shrinks to a final intersection $E_\cap$ of 20 edges, with the final cumulative union $E_\cup$ being of 6560 edges. The edge densities are consistently below 3.5\%, indicating that DiscoGP is successful in uncovering functional and compact sheaves for the IOI task. With OASR, DiscoGP explores more edges and converges to a smaller intersection of seemingly essential components. Across the remaining BLiMP subtasks, as reported from Table \ref{tab:blimp_all_sheaves} to \ref{tab:dna_ia_all_sheaves}, and the Docstring task, as shown in Table \ref{tab:docstring_all_sheaves}, we observe the same qualitative pattern. In all cases, successive runs produce sheaves with comparable sparsity (typically in the 1–4\% edge density range) and high task accuracy, while the cumulative union grows steadily and the cumulative intersection shrinks to a relatively small core. This indicates that many distinct, low-overlap circuits can independently realise the same function, and that only a small subset of edges is consistently reused across runs. Introducing OASR systematically accelerates this diversification: the size of $E_{\cup}$ increases more rapidly, while $E_{\cap}$ converges to a smaller set of edges, without a significant loss in accuracy or completeness. Taken together, these results suggest that DiscoGP does not recover a unique minimal circuit for a task, but rather a family of functionally equivalent, sparse realizations whose shared intersection can be interpreted as a set of highly robust or essentially indispensable components, and whose large union reflects the substantial degree of functional redundancy and non-identifiability in the underlying network. The extra edges can also be seen as additional noise that may or may not contribute to the functionality of the sheaf.

\section{Per-Layer Incoming Edge Counts}
Figure \ref{fig:combined_per_layer_edge_count} reports the per-layer incoming edge counts for attention and MLP components of two low-IoU sheaf pairs for the BLiMP task. The two sheaves exhibit broadly similar high-level layer-wise trends, such as concentration of attention edges in middle layers and increased MLP usage toward later layers, yet differ substantially in the precise allocation of edges across layers. For the BLiMP, Sheaf A shows a pronounced peak in attention incoming edges around layers 6–7, whereas Sheaf B distributes attention more evenly, with relatively higher density in later layers. In the MLP blocks, Sheaf A places more mass in mid-layers and exhibits an abrupt drop in the final layer, while Sheaf B allocates a larger fraction of edges to the top layers. Despite these structural discrepancies, both circuits achieve comparable task performance, indicating that similar syntactic behaviour can be supported by markedly different depth-wise routing patterns.

A similar phenomenon appears for other BLiMP subtasks. The pairs of sheaves mostly agree qualitatively on the overall shape of attention and MLP utilisation, but differ in where peak connectivity occurs and how sharply it decays across layers.

For the Docstring sheaves, as shown in Figure \ref{fig:combined_per_layer_edge_count} although the intersection-over-union (IoU) is only about 0.11, the per-layer edge count distributions exhibit highly similar overall shapes for both the attention and MLP components, aside from a few mild localised spikes. This suggests that, for processing the structured patterns in the Docstring task, there may exist an implicit layer-wise preference for where the relevant computation is carried out.

To examine the fine-grained structure of the computational graphs induced by the sheaves, we visualize representative sheaf pairs for various tasks as given in Figures~\ref{fig:viz_circuit_comparison_ioi_dna}, \ref{fig:viz_circuit_comparison_aga_ana}, \ref{fig:viz_circuit_comparison_dna_ia_docstring}. Despite structural differences, these pairs exhibit near-identical task performance, providing clear empirical evidence of functional equivalence under substantial structural disparity.
\begin{table*}[t]
\centering
\small
\caption{IOI: Per-sheaf cumulative overlap and sheaf quality (Random Init and OASR).}
\label{tab:ioi_all_sheaves}
\begin{tabular}{c cc c ccc}
\toprule
\multicolumn{7}{c}{\textbf{IOI -- Random Init}} \\
\midrule
\multirow{2.5}{*}{\textbf{ID}}
& \multicolumn{2}{c}{\textbf{Cumulative Statistics}}
& \multicolumn{4}{c}{\textbf{Sheaf Quality}} \\
\cmidrule(lr){2-3} \cmidrule(lr){4-7}
& {$|E_{\cap}|$} & {$|E_{\cup}|$}
& {Edge D. (\%)} & {$\#E$} & {Acc. (\%)} & {Comp. (\%)} \\
\midrule
0  & 897 & 897  & 2.76\% & 897  & 100.00 & 42.80 \\
1  & 292 & 1576 & 2.99\% & 971  & 100.00 & 45.30 \\
2  & 128 & 2183 & 2.98\% & 967  & 100.00 & 45.90 \\
3  & 98  & 2709 & 3.28\% & 1067 & 99.90  & 45.70 \\
4  & 64  & 3089 & 2.83\% & 921  & 99.90  & 45.50 \\
5  & 47  & 3465 & 2.84\% & 924  & 99.90  & 45.40 \\
6  & 41  & 3826 & 3.22\% & 1046 & 100.00 & 45.80 \\
7  & 38  & 4200 & 3.43\% & 1114 & 100.00 & 46.30 \\
8  & 36  & 4467 & 2.97\% & 964  & 100.00 & 46.80 \\
9  & 32  & 4752 & 3.26\% & 1058 & 100.00 & 45.50 \\
10 & 31  & 4973 & 3.12\% & 1013 & 100.00 & 44.60 \\
11 & 31  & 5224 & 3.48\% & 1132 & 99.80  & 46.20 \\
12 & 30  & 5488 & 3.39\% & 1102 & 99.90  & 45.20 \\
13 & 29  & 5620 & 2.55\% & 827  & 99.90  & 46.20 \\
14 & 28  & 5760 & 2.76\% & 897  & 99.70  & 45.90 \\
15 & 23  & 5989 & 3.15\% & 1023 & 100.00 & 45.30 \\
16 & 23  & 6153 & 2.97\% & 966  & 100.00 & 46.40 \\
17 & 22  & 6367 & 3.43\% & 1116 & 100.00 & 45.40 \\
18 & 21  & 6478 & 2.87\% & 931  & 100.00 & 45.80 \\
19 & 20  & 6560 & 2.48\% & 805  & 100.00 & 46.70 \\
\bottomrule
\end{tabular}

\vspace{1em}

\begin{tabular}{c cc c ccc}
\toprule
\multicolumn{7}{c}{\textbf{IOI -- OASR}} \\
\midrule
\multirow{2.5}{*}{\textbf{ID}}
& \multicolumn{2}{c}{\textbf{Cumulative Statistics}}
& \multicolumn{4}{c}{\textbf{Sheaf Quality}} \\
\cmidrule(lr){2-3} \cmidrule(lr){4-7}
& {$|E_{\cap}|$} & {$|E_{\cup}|$}
& {Edge D. (\%)} & {$\#E$} & {Acc. (\%)} & {Comp. (\%)} \\
\midrule
0  & 918 & 918  & 2.83\% & 918  & 99.40 & 45.80 \\
1  & 213 & 1654 & 2.92\% & 949  & 99.10 & 45.60 \\
2  & 105 & 2240 & 2.81\% & 912  & 98.80 & 45.50 \\
3  & 59  & 2845 & 3.14\% & 1020 & 99.20 & 45.70 \\
4  & 41  & 3399 & 3.28\% & 1067 & 100.00 & 45.30 \\
5  & 33  & 3710 & 2.28\% & 742  & 99.70 & 46.00 \\
6  & 31  & 3944 & 2.12\% & 690  & 100.00 & 45.60 \\
7  & 30  & 4283 & 2.80\% & 911  & 99.60 & 45.10 \\
8  & 25  & 4672 & 3.36\% & 1093 & 100.00 & 46.00 \\
9  & 25  & 4856 & 2.11\% & 684  & 99.10 & 44.70 \\
10 & 17  & 5328 & 3.45\% & 1120 & 100.00 & 46.10 \\
11 & 15  & 5590 & 2.84\% & 922  & 99.20 & 46.90 \\
12 & 14  & 5923 & 3.20\% & 1040 & 100.00 & 46.10 \\
13 & 13  & 6205 & 3.42\% & 1112 & 100.00 & 47.90 \\
14 & 13  & 6504 & 3.49\% & 1134 & 98.90 & 46.00 \\
15 & 11  & 6732 & 3.04\% & 988  & 100.00 & 46.10 \\
16 & 11  & 6905 & 2.65\% & 861  & 99.90 & 45.80 \\
17 & 11  & 7106 & 2.61\% & 847  & 99.90 & 45.70 \\
18 & 11  & 7241 & 2.27\% & 736  & 99.80 & 45.80 \\
19 & 11  & 7382 & 2.54\% & 824  & 99.20 & 45.60 \\
\bottomrule
\end{tabular}
\end{table*}

\clearpage

\begin{table*}[t]
\centering
\small
\caption{BLiMP: Per-sheaf cumulative overlap and sheaf quality (Random Init and OASR).}
\label{tab:blimp_all_sheaves}

\begin{tabular}{c cc c ccc}
\toprule
\multicolumn{7}{c}{\textbf{BLiMP -- Random Init}} \\
\midrule
\multirow{2.5}{*}{\textbf{ID}}
& \multicolumn{2}{c}{\textbf{Cumulative Statistics}}
& \multicolumn{4}{c}{\textbf{Sheaf Quality}} \\
\cmidrule(lr){2-3} \cmidrule(lr){4-7}
& {$|E_{\cap}|$} & {$|E_{\cup}|$}
& {Edge D. (\%)} & {$\#E$} & {Acc. (\%)} & {Comp. (\%)} \\
\midrule
0  & 885 & 885  & 2.72\% & 885 & 98.10 & 50.40 \\
1  & 311 & 1323 & 2.31\% & 749 & 97.30 & 43.30 \\
2  & 192 & 1837 & 2.83\% & 920 & 95.50 & 43.80 \\
3  & 143 & 2148 & 2.48\% & 806 & 96.40 & 43.90 \\
4  & 116 & 2547 & 3.13\% & 1018 & 96.60 & 43.80 \\
5  & 105 & 2785 & 2.64\% & 857 & 99.00 & 47.80 \\
6  & 99  & 2955 & 2.31\% & 751 & 98.80 & 47.50 \\
7  & 89  & 3136 & 2.51\% & 816 & 97.90 & 49.90 \\
8  & 82  & 3361 & 2.88\% & 937 & 97.00 & 42.70 \\
9  & 72  & 3539 & 2.65\% & 862 & 95.30 & 43.60 \\
10 & 67  & 3734 & 2.55\% & 830 & 96.90 & 42.80 \\
11 & 63  & 3874 & 2.59\% & 840 & 97.60 & 42.90 \\
12 & 62  & 4015 & 2.71\% & 879 & 97.40 & 43.30 \\
13 & 58  & 4179 & 2.70\% & 878 & 96.60 & 44.00 \\
14 & 56  & 4323 & 2.90\% & 941 & 97.60 & 43.40 \\
15 & 53  & 4413 & 2.32\% & 753 & 96.70 & 45.00 \\
16 & 53  & 4516 & 2.44\% & 792 & 97.60 & 43.50 \\
17 & 52  & 4582 & 2.17\% & 705 & 98.00 & 42.90 \\
18 & 52  & 4750 & 2.91\% & 946 & 97.20 & 43.90 \\
19 & 50  & 4858 & 2.69\% & 875 & 97.70 & 47.20 \\
\bottomrule
\end{tabular}

\vspace{1em}

\begin{tabular}{
    c cc
    c ccc
}
\toprule
\multicolumn{7}{c}{\textbf{BLiMP -- OASR}} \\
\midrule
\multirow{2.5}{*}{\textbf{ID}}
& \multicolumn{2}{c}{\textbf{Cumulative Statistics}}
& \multicolumn{4}{c}{\textbf{Sheaf Quality}} \\
\cmidrule(lr){2-3} \cmidrule(lr){4-7}
& {$|E_{\cap}|$} & {$|E_{\cup}|$}
& {Edge D. (\%)} & {$\#E$} & {Acc. (\%)} & {Comp. (\%)} \\
\midrule
0  & 885 & 885  & 2.72\% & 885 & 98.10 & 50.40 \\
1  & 258 & 1405 & 2.39\% & 778 & 96.50 & 43.80 \\
2  & 154 & 1953 & 2.85\% & 925 & 97.30 & 43.10 \\
3  & 108 & 2357 & 2.50\% & 812 & 95.20 & 43.40 \\
4  & 91  & 2689 & 2.45\% & 795 & 95.40 & 49.60 \\
5  & 77  & 2980 & 2.51\% & 817 & 97.00 & 46.30 \\
6  & 70  & 3154 & 2.07\% & 673 & 96.50 & 54.00 \\
7  & 66  & 3436 & 2.54\% & 825 & 96.20 & 44.90 \\
8  & 65  & 3610 & 2.18\% & 708 & 96.20 & 43.80 \\
9  & 57  & 3836 & 2.45\% & 797 & 95.50 & 55.50 \\
10 & 53  & 4007 & 2.22\% & 720 & 95.20 & 43.00 \\
11 & 49  & 4215 & 2.49\% & 808 & 95.60 & 44.80 \\
12 & 46  & 4388 & 2.04\% & 663 & 95.80 & 46.50 \\
13 & 44  & 4569 & 2.28\% & 742 & 96.80 & 43.40 \\
14 & 44  & 4717 & 2.22\% & 722 & 96.30 & 43.70 \\
15 & 41  & 4896 & 2.35\% & 763 & 97.10 & 43.90 \\
16 & 39  & 4979 & 2.07\% & 673 & 96.60 & 44.00 \\
17 & 37  & 5081 & 2.10\% & 681 & 95.10 & 42.50 \\
18 & 37  & 5175 & 2.00\% & 650 & 94.70 & 44.50 \\
19 & 37  & 5289 & 2.27\% & 737 & 95.20 & 43.80 \\
\bottomrule
\end{tabular}

\end{table*}
\clearpage

\begin{table*}[t]
\centering
\small
\caption{AGA: Per-sheaf cumulative overlap and sheaf quality (Random Init and OASR).}

\begin{tabular}{
    c cc
    c ccc
}
\toprule
\multicolumn{7}{c}{\textbf{AGA -- Random Init}} \\
\midrule
\multirow{2.5}{*}{\textbf{ID}}
& \multicolumn{2}{c}{\textbf{Cumulative Statistics}}
& \multicolumn{4}{c}{\textbf{Sheaf Quality}} \\
\cmidrule(lr){2-3} \cmidrule(lr){4-7}
& {$|E_{\cap}|$} & {$|E_{\cup}|$}
& {Edge D. (\%)} & {$\#E$} & {Acc. (\%)} & {Comp. (\%)} \\
\midrule
0  & 838 & 838  & 2.58\% & 838 & 93.00 & 41.00 \\
1  & 237 & 1410 & 2.49\% & 809 & 95.00 & 38.00 \\
2  & 132 & 1938 & 2.73\% & 886 & 93.00 & 61.00 \\
3  & 81  & 2277 & 2.30\% & 748 & 96.00 & 60.00 \\
4  & 59  & 2588 & 2.46\% & 800 & 93.00 & 52.00 \\
5  & 49  & 2859 & 2.42\% & 786 & 96.00 & 74.00 \\
6  & 44  & 3150 & 2.72\% & 884 & 94.00 & 59.00 \\
7  & 38  & 3339 & 2.26\% & 734 & 94.00 & 28.00 \\
8  & 35  & 3524 & 2.44\% & 793 & 92.00 & 34.00 \\
9  & 29  & 3710 & 2.39\% & 776 & 92.00 & 46.00 \\
10 & 27  & 3854 & 2.37\% & 771 & 95.00 & 37.00 \\
11 & 24  & 3952 & 2.12\% & 688 & 92.00 & 27.00 \\
12 & 23  & 4038 & 2.10\% & 683 & 93.00 & 27.00 \\
13 & 22  & 4119 & 1.86\% & 603 & 95.00 & 27.00 \\
14 & 22  & 4187 & 1.95\% & 635 & 93.00 & 58.00 \\
15 & 22  & 4279 & 2.39\% & 777 & 96.00 & 43.00 \\
16 & 22  & 4415 & 2.85\% & 926 & 95.00 & 41.00 \\
17 & 22  & 4549 & 2.78\% & 904 & 96.00 & 41.00 \\
18 & 22  & 4692 & 2.75\% & 895 & 94.00 & 45.00 \\
19 & 21  & 4758 & 2.36\% & 768 & 95.00 & 43.00 \\
\bottomrule
\end{tabular}

\vspace{1em}

\begin{tabular}{
    c cc
    c ccc
}
\toprule
\multicolumn{7}{c}{\textbf{AGA -- OASR}} \\
\midrule
\multirow{2.5}{*}{\textbf{ID}}
& \multicolumn{2}{c}{\textbf{Cumulative Statistics}}
& \multicolumn{4}{c}{\textbf{Sheaf Quality}} \\
\cmidrule(lr){2-3} \cmidrule(lr){4-7}
& {$|E_{\cap}|$} & {$|E_{\cup}|$}
& {Edge D. (\%)} & {$\#E$} & {Acc. (\%)} & {Comp. (\%)} \\
\midrule
0  & 838 & 838  & 2.58\% & 838 & 93.00 & 41.00 \\
1  & 175 & 1268 & 1.86\% & 605 & 95.00 & 45.00 \\
2  & 79  & 1774 & 2.34\% & 761 & 93.00 & 28.00 \\
3  & 60  & 2151 & 2.07\% & 671 & 94.00 & 59.00 \\
4  & 36  & 2530 & 2.16\% & 702 & 91.00 & 43.00 \\
5  & 33  & 2887 & 2.28\% & 741 & 95.00 & 34.00 \\
6  & 29  & 3201 & 2.32\% & 755 & 94.00 & 42.00 \\
7  & 26  & 3471 & 2.07\% & 674 & 89.00 & 43.00 \\
8  & 24  & 3611 & 1.56\% & 507 & 94.00 & 30.00 \\
9  & 21  & 3702 & 1.47\% & 479 & 94.00 & 37.00 \\
10 & 18  & 3817 & 1.52\% & 494 & 94.00 & 32.00 \\
11 & 17  & 4099 & 2.62\% & 852 & 95.00 & 57.00 \\
12 & 16  & 4349 & 2.61\% & 849 & 95.00 & 44.00 \\
13 & 15  & 4569 & 2.23\% & 723 & 93.00 & 43.00 \\
14 & 15  & 4783 & 2.83\% & 920 & 95.00 & 43.00 \\
15 & 15  & 4905 & 2.16\% & 703 & 94.00 & 55.00 \\
16 & 15  & 5187 & 3.38\% & 1097 & 94.00 & 42.00 \\
17 & 15  & 5347 & 2.72\% & 885 & 94.00 & 43.00 \\
18 & 15  & 5604 & 3.22\% & 1047 & 96.00 & 52.00 \\
19 & 15  & 5791 & 3.06\% & 993 & 95.00 & 66.00 \\
\bottomrule
\end{tabular}

\end{table*}
\clearpage

\begin{table*}[t]
\centering
\small
\caption{ANA: Per-sheaf cumulative overlap and sheaf quality (Random Init and OASR).}

\begin{tabular}{
    c cc
    c ccc
}
\toprule
\multicolumn{7}{c}{\textbf{ANA -- Random Init}} \\
\midrule
\multirow{2.5}{*}{\textbf{ID}}
& \multicolumn{2}{c}{\textbf{Cumulative Statistics}}
& \multicolumn{4}{c}{\textbf{Sheaf Quality}} \\
\cmidrule(lr){2-3} \cmidrule(lr){4-7}
& {$|E_{\cap}|$} & {$|E_{\cup}|$}
& {Edge D. (\%)} & {$\#E$} & {Acc. (\%)} & {Comp. (\%)} \\
\midrule
0  & 635 & 635  & 1.95\% & 635 & 94.00 & 41.00 \\
1  & 224 & 1242 & 2.56\% & 831 & 95.00 & 41.00 \\
2  & 122 & 1636 & 2.18\% & 709 & 95.00 & 39.00 \\
3  & 80  & 1851 & 1.69\% & 550 & 98.00 & 41.00 \\
4  & 67  & 2122 & 2.16\% & 703 & 97.00 & 41.00 \\
5  & 56  & 2312 & 1.89\% & 615 & 100.00 & 41.00 \\
6  & 49  & 2559 & 2.35\% & 765 & 95.00 & 41.00 \\
7  & 43  & 2820 & 2.38\% & 772 & 96.00 & 41.00 \\
8  & 38  & 3033 & 2.43\% & 790 & 96.00 & 59.00 \\
9  & 34  & 3246 & 2.19\% & 712 & 99.00 & 55.00 \\
10 & 32  & 3398 & 2.06\% & 669 & 97.00 & 41.00 \\
11 & 30  & 3499 & 1.96\% & 637 & 99.00 & 39.00 \\
12 & 29  & 3650 & 2.29\% & 743 & 98.00 & 41.00 \\
13 & 28  & 3746 & 2.14\% & 695 & 93.00 & 41.00 \\
14 & 27  & 3969 & 2.72\% & 883 & 97.00 & 41.00 \\
15 & 27  & 4071 & 2.10\% & 682 & 96.00 & 41.00 \\
16 & 27  & 4205 & 2.31\% & 749 & 100.00 & 41.00 \\
17 & 27  & 4316 & 2.41\% & 783 & 97.00 & 41.00 \\
18 & 26  & 4454 & 2.48\% & 805 & 92.00 & 41.00 \\
19 & 26  & 4531 & 1.99\% & 648 & 95.00 & 41.00 \\
\bottomrule
\end{tabular}

\vspace{1em}

\begin{tabular}{
    c cc
    c ccc
}
\toprule
\multicolumn{7}{c}{\textbf{ANA -- OASR}} \\
\midrule
\multirow{2.5}{*}{\textbf{ID}}
& \multicolumn{2}{c}{\textbf{Cumulative Statistics}}
& \multicolumn{4}{c}{\textbf{Sheaf Quality}} \\
\cmidrule(lr){2-3} \cmidrule(lr){4-7}
& {$|E_{\cap}|$} & {$|E_{\cup}|$}
& {Edge D. (\%)} & {$\#E$} & {Acc. (\%)} & {Comp. (\%)} \\
\midrule
0  & 635 & 635  & 1.95\% & 635 & 94.00 & 41.00 \\
1  & 166 & 1133 & 2.04\% & 664 & 99.00 & 41.00 \\
2  & 91  & 1589 & 2.18\% & 709 & 93.00 & 42.00 \\
3  & 58  & 1978 & 2.05\% & 666 & 94.00 & 41.00 \\
4  & 48  & 2290 & 2.01\% & 653 & 96.00 & 43.00 \\
5  & 35  & 2655 & 2.23\% & 726 & 94.00 & 31.00 \\
6  & 33  & 2896 & 1.97\% & 639 & 96.00 & 41.00 \\
7  & 29  & 3191 & 2.25\% & 730 & 90.00 & 41.00 \\
8  & 20  & 3401 & 1.80\% & 584 & 93.00 & 41.00 \\
9  & 18  & 3634 & 2.21\% & 718 & 87.00 & 53.00 \\
10 & 16  & 3809 & 1.99\% & 648 & 95.00 & 41.00 \\
11 & 15  & 3949 & 1.72\% & 559 & 92.00 & 41.00 \\
12 & 15  & 4122 & 2.05\% & 667 & 98.00 & 45.00 \\
13 & 13  & 4287 & 1.83\% & 593 & 98.00 & 41.00 \\
14 & 13  & 4374 & 1.53\% & 497 & 98.00 & 41.00 \\
15 & 13  & 4485 & 1.71\% & 555 & 97.00 & 41.00 \\
16 & 13  & 4644 & 2.12\% & 690 & 96.00 & 41.00 \\
17 & 13  & 4758 & 1.69\% & 548 & 98.00 & 43.00 \\
18 & 11  & 4830 & 1.57\% & 509 & 97.00 & 41.00 \\
19 & 10  & 4890 & 1.35\% & 438 & 96.00 & 39.00 \\
\bottomrule
\end{tabular}

\end{table*}
\clearpage

\begin{table*}[t]
\centering
\small
\caption{DNA: Per-sheaf cumulative overlap and sheaf quality (Random Init and OASR).}

\begin{tabular}{
    c cc
    c ccc
}
\toprule
\multicolumn{7}{c}{\textbf{DNA -- Random Init}} \\
\midrule
\multirow{2.5}{*}{\textbf{ID}}
& \multicolumn{2}{c}{\textbf{Cumulative Statistics}}
& \multicolumn{4}{c}{\textbf{Sheaf Quality}} \\
\cmidrule(lr){2-3} \cmidrule(lr){4-7}
& {$|E_{\cap}|$} & {$|E_{\cup}|$}
& {Edge D. (\%)} & {$\#E$} & {Acc. (\%)} & {Comp. (\%)} \\
\midrule
0  & 626 & 626  & 1.93\% & 626 & 94.00 & 54.00 \\
1  & 156 & 997  & 1.62\% & 527 & 97.00 & 56.00 \\
2  & 99  & 1261 & 1.54\% & 499 & 99.00 & 58.00 \\
3  & 82  & 1563 & 1.91\% & 622 & 96.00 & 53.00 \\
4  & 66  & 1716 & 1.38\% & 450 & 98.00 & 55.00 \\
5  & 59  & 1832 & 1.38\% & 449 & 99.00 & 54.00 \\
6  & 48  & 1995 & 1.55\% & 502 & 95.00 & 54.00 \\
7  & 46  & 2087 & 1.36\% & 441 & 99.00 & 50.00 \\
8  & 41  & 2177 & 1.39\% & 453 & 97.00 & 50.00 \\
9  & 39  & 2274 & 1.33\% & 432 & 98.00 & 57.00 \\
10 & 38  & 2356 & 1.34\% & 437 & 95.00 & 54.00 \\
11 & 36  & 2443 & 1.30\% & 422 & 90.00 & 49.00 \\
12 & 35  & 2539 & 1.49\% & 485 & 98.00 & 51.00 \\
13 & 33  & 2666 & 1.70\% & 553 & 91.00 & 49.00 \\
14 & 19  & 2756 & 1.34\% & 434 & 88.00 & 53.00 \\
15 & 17  & 2814 & 1.26\% & 409 & 98.00 & 54.00 \\
16 & 17  & 2918 & 1.82\% & 592 & 97.00 & 52.00 \\
17 & 17  & 3044 & 1.81\% & 589 & 93.00 & 48.00 \\
18 & 17  & 3096 & 1.42\% & 460 & 96.00 & 51.00 \\
19 & 17  & 3134 & 1.24\% & 403 & 97.00 & 53.00 \\
\bottomrule
\end{tabular}

\vspace{1em}

\begin{tabular}{
    c cc
    c ccc
}
\toprule
\multicolumn{7}{c}{\textbf{DNA -- OASR}} \\
\midrule
\multirow{2.5}{*}{\textbf{ID}}
& \multicolumn{2}{c}{\textbf{Cumulative Statistics}}
& \multicolumn{4}{c}{\textbf{Sheaf Quality}} \\
\cmidrule(lr){2-3} \cmidrule(lr){4-7}
& {$|E_{\cap}|$} & {$|E_{\cup}|$}
& {Edge D. (\%)} & {$\#E$} & {Acc. (\%)} & {Comp. (\%)} \\
\midrule
0  & 626 & 626  & 1.93\% & 626 & 94.00 & 54.00 \\
1  & 137 & 868  & 1.17\% & 379 & 98.00 & 50.00 \\
2  & 86  & 1250 & 1.80\% & 584 & 95.00 & 54.00 \\
3  & 62  & 1494 & 1.42\% & 460 & 97.00 & 52.00 \\
4  & 52  & 1687 & 1.31\% & 425 & 94.00 & 50.00 \\
5  & 46  & 1816 & 1.11\% & 361 & 96.00 & 54.00 \\
6  & 42  & 2030 & 1.48\% & 482 & 92.00 & 55.00 \\
7  & 39  & 2209 & 1.43\% & 465 & 94.00 & 56.00 \\
8  & 32  & 2312 & 1.10\% & 356 & 88.00 & 53.00 \\
9  & 28  & 2492 & 1.56\% & 507 & 90.00 & 54.00 \\
10 & 28  & 2610 & 1.24\% & 403 & 94.00 & 56.00 \\
11 & 26  & 2777 & 1.51\% & 490 & 95.00 & 49.00 \\
12 & 24  & 2870 & 1.28\% & 415 & 98.00 & 51.00 \\
13 & 22  & 2975 & 1.35\% & 438 & 98.00 & 46.00 \\
14 & 21  & 3078 & 1.28\% & 417 & 89.00 & 53.00 \\
15 & 20  & 3144 & 1.23\% & 400 & 95.00 & 53.00 \\
16 & 20  & 3179 & 0.78\% & 255 & 90.00 & 49.00 \\
17 & 18  & 3275 & 1.27\% & 413 & 96.00 & 54.00 \\
18 & 18  & 3371 & 1.34\% & 436 & 90.00 & 54.00 \\
19 & 17  & 3440 & 0.96\% & 311 & 92.00 & 52.00 \\
\bottomrule
\end{tabular}

\end{table*}
\clearpage

\begin{table*}[t]
\centering
\small
\caption{DNA i: Per-sheaf cumulative overlap and sheaf quality (Random Init and OASR).}

\begin{tabular}{
    c cc
    c ccc
}
\toprule
\multicolumn{7}{c}{\textbf{DNA i -- Random Init}} \\
\midrule
\multirow{2.5}{*}{\textbf{ID}}
& \multicolumn{2}{c}{\textbf{Cumulative Statistics}}
& \multicolumn{4}{c}{\textbf{Sheaf Quality}} \\
\cmidrule(lr){2-3} \cmidrule(lr){4-7}
& {$|E_{\cap}|$} & {$|E_{\cup}|$}
& {Edge D. (\%)} & {$\#E$} & {Acc. (\%)} & {Comp. (\%)} \\
\midrule
0  & 517 & 517  & 1.59\% & 517 & 100.00 & 57.00 \\
1  & 157 & 912  & 1.70\% & 552 & 99.00  & 55.00 \\
2  & 76  & 1246 & 1.71\% & 554 & 94.00  & 55.00 \\
3  & 44  & 1561 & 1.92\% & 623 & 100.00 & 48.00 \\
4  & 41  & 1791 & 1.90\% & 616 & 100.00 & 49.00 \\
5  & 36  & 2012 & 1.85\% & 602 & 100.00 & 50.00 \\
6  & 33  & 2139 & 1.47\% & 478 & 96.00  & 55.00 \\
7  & 30  & 2274 & 1.75\% & 567 & 100.00 & 58.00 \\
8  & 27  & 2367 & 1.29\% & 418 & 97.00  & 61.00 \\
9  & 26  & 2564 & 2.06\% & 669 & 100.00 & 59.00 \\
10 & 26  & 2676 & 1.83\% & 595 & 100.00 & 55.00 \\
11 & 26  & 2785 & 1.57\% & 510 & 99.00  & 58.00 \\
12 & 24  & 2926 & 1.84\% & 598 & 99.00  & 55.00 \\
13 & 21  & 3113 & 1.99\% & 646 & 95.00  & 59.00 \\
14 & 20  & 3207 & 1.70\% & 551 & 95.00  & 53.00 \\
15 & 19  & 3349 & 2.07\% & 674 & 100.00 & 59.00 \\
16 & 19  & 3418 & 1.57\% & 509 & 99.00  & 56.00 \\
17 & 19  & 3500 & 1.77\% & 575 & 100.00 & 60.00 \\
18 & 19  & 3567 & 1.71\% & 556 & 100.00 & 54.00 \\
19 & 19  & 3649 & 1.81\% & 588 & 100.00 & 53.00 \\
\bottomrule
\end{tabular}

\vspace{1em}

\begin{tabular}{
    c cc
    c ccc
}
\toprule
\multicolumn{7}{c}{\textbf{DNA i -- OASR}} \\
\midrule
\multirow{2.5}{*}{\textbf{ID}}
& \multicolumn{2}{c}{\textbf{Cumulative Statistics}}
& \multicolumn{4}{c}{\textbf{Sheaf Quality}} \\
\cmidrule(lr){2-3} \cmidrule(lr){4-7}
& {$|E_{\cap}|$} & {$|E_{\cup}|$}
& {Edge D. (\%)} & {$\#E$} & {Acc. (\%)} & {Comp. (\%)} \\
\midrule
0  & 517 & 517  & 1.59\% & 517 & 100.00 & 57.00 \\
1  & 113 & 924  & 1.60\% & 520 & 100.00 & 54.00 \\
2  & 62  & 1242 & 1.54\% & 499 & 99.00  & 55.00 \\
3  & 43  & 1522 & 1.55\% & 505 & 99.00  & 56.00 \\
4  & 31  & 1762 & 1.37\% & 446 & 93.00  & 53.00 \\
5  & 29  & 1869 & 0.98\% & 320 & 98.00  & 56.00 \\
6  & 20  & 2091 & 1.43\% & 466 & 95.00  & 56.00 \\
7  & 19  & 2302 & 1.59\% & 516 & 94.00  & 55.00 \\
8  & 19  & 2449 & 1.30\% & 423 & 100.00 & 56.00 \\
9  & 15  & 2552 & 1.11\% & 361 & 100.00 & 57.00 \\
10 & 13  & 2677 & 1.18\% & 385 & 91.00  & 56.00 \\
11 & 13  & 2754 & 0.86\% & 280 & 99.00  & 55.00 \\
12 & 11  & 2861 & 1.18\% & 384 & 99.00  & 46.00 \\
13 & 10  & 2997 & 1.28\% & 415 & 99.00  & 52.00 \\
14 & 10  & 3029 & 0.73\% & 238 & 95.00  & 53.00 \\
15 & 9   & 3069 & 0.85\% & 277 & 92.00  & 57.00 \\
16 & 9   & 3097 & 0.73\% & 237 & 98.00  & 56.00 \\
17 & 9   & 3126 & 0.86\% & 280 & 98.00  & 57.00 \\
18 & 9   & 3209 & 1.03\% & 335 & 94.00  & 50.00 \\
19 & 9   & 3252 & 0.87\% & 284 & 92.00  & 56.00 \\
\bottomrule
\end{tabular}

\end{table*}
\clearpage

\begin{table*}[t]
\centering
\small
\caption{DNA a: Per-sheaf cumulative overlap and sheaf quality (Random Init and OASR).}

\begin{tabular}{
    c cc
    c ccc
}
\toprule
\multicolumn{7}{c}{\textbf{DNA a -- Random Init}} \\
\midrule
\multirow{2.5}{*}{\textbf{ID}}
& \multicolumn{2}{c}{\textbf{Cumulative Statistics}}
& \multicolumn{4}{c}{\textbf{Sheaf Quality}} \\
\cmidrule(lr){2-3} \cmidrule(lr){4-7}
& {$|E_{\cap}|$} & {$|E_{\cup}|$}
& {Edge D. (\%)} & {$\#E$} & {Acc. (\%)} & {Comp. (\%)} \\
\midrule
0  & 619 & 619  & 1.91\% & 619 & 99.00 & 50.00 \\
1  & 207 & 1081 & 2.06\% & 669 & 96.00 & 54.00 \\
2  & 129 & 1464 & 2.25\% & 731 & 96.00 & 53.00 \\
3  & 93  & 1796 & 2.18\% & 707 & 96.00 & 52.00 \\
4  & 81  & 1978 & 1.87\% & 608 & 98.00 & 52.00 \\
5  & 66  & 2155 & 1.96\% & 636 & 97.00 & 50.00 \\
6  & 57  & 2328 & 1.86\% & 603 & 96.00 & 54.00 \\
7  & 41  & 2560 & 2.14\% & 694 & 96.00 & 54.00 \\
8  & 34  & 2668 & 1.51\% & 491 & 95.00 & 55.00 \\
9  & 31  & 2824 & 2.33\% & 756 & 97.00 & 55.00 \\
10 & 31  & 3016 & 2.43\% & 791 & 96.00 & 54.00 \\
11 & 29  & 3194 & 2.44\% & 792 & 98.00 & 54.00 \\
12 & 29  & 3343 & 2.27\% & 739 & 92.00 & 54.00 \\
13 & 29  & 3426 & 2.16\% & 701 & 97.00 & 54.00 \\
14 & 25  & 3562 & 2.27\% & 737 & 94.00 & 54.00 \\
15 & 23  & 3676 & 2.51\% & 816 & 93.00 & 54.00 \\
16 & 23  & 3769 & 2.32\% & 753 & 97.00 & 53.00 \\
17 & 23  & 3850 & 2.43\% & 791 & 99.00 & 52.00 \\
18 & 23  & 3915 & 2.15\% & 699 & 98.00 & 55.00 \\
19 & 23  & 3982 & 2.13\% & 692 & 97.00 & 53.00 \\
\bottomrule
\end{tabular}

\vspace{1em}

\begin{tabular}{
    c cc
    c ccc
}
\toprule
\multicolumn{7}{c}{\textbf{DNA a -- OASR}} \\
\midrule
\multirow{2.5}{*}{\textbf{ID}}
& \multicolumn{2}{c}{\textbf{Cumulative Statistics}}
& \multicolumn{4}{c}{\textbf{Sheaf Quality}} \\
\cmidrule(lr){2-3} \cmidrule(lr){4-7}
& {$|E_{\cap}|$} & {$|E_{\cup}|$}
& {Edge D. (\%)} & {$\#E$} & {Acc. (\%)} & {Comp. (\%)} \\
\midrule
0  & 663 & 663  & 2.04\% & 663 & 97.00 & 54.00 \\
1  & 190 & 1139 & 2.05\% & 666 & 99.00 & 50.00 \\
2  & 94  & 1690 & 2.46\% & 800 & 98.00 & 58.00 \\
3  & 69  & 2099 & 2.44\% & 794 & 91.00 & 54.00 \\
4  & 57  & 2338 & 2.06\% & 669 & 100.00 & 54.00 \\
5  & 47  & 2595 & 2.20\% & 714 & 97.00 & 54.00 \\
6  & 42  & 2795 & 1.83\% & 593 & 96.00 & 52.00 \\
7  & 36  & 2935 & 1.55\% & 505 & 87.00 & 54.00 \\
8  & 33  & 3231 & 2.53\% & 822 & 97.00 & 54.00 \\
9  & 32  & 3402 & 2.02\% & 656 & 96.00 & 56.00 \\
10 & 30  & 3506 & 1.83\% & 593 & 96.00 & 54.00 \\
11 & 29  & 3656 & 2.10\% & 681 & 96.00 & 55.00 \\
12 & 28  & 3772 & 2.10\% & 682 & 97.00 & 56.00 \\
13 & 28  & 3897 & 2.12\% & 690 & 93.00 & 52.00 \\
14 & 26  & 3982 & 1.67\% & 542 & 93.00 & 58.00 \\
15 & 25  & 4054 & 1.78\% & 579 & 95.00 & 55.00 \\
16 & 22  & 4118 & 1.59\% & 516 & 95.00 & 54.00 \\
17 & 22  & 4165 & 1.67\% & 543 & 95.00 & 55.00 \\
18 & 22  & 4206 & 1.85\% & 601 & 98.00 & 55.00 \\
19 & 22  & 4469 & 4.26\% & 1384 & 93.00 & 55.00 \\
\bottomrule
\end{tabular}

\end{table*}
\clearpage

\begin{table*}[t]
\centering
\small
\caption{DNA ia: Per-sheaf cumulative overlap and sheaf quality (Random Init and OASR).}
\label{tab:dna_ia_all_sheaves}
\begin{tabular}{
    c cc
    c ccc
}
\toprule
\multicolumn{7}{c}{\textbf{DNA ia -- Random Init}} \\
\midrule
\multirow{2.5}{*}{\textbf{ID}}
& \multicolumn{2}{c}{\textbf{Cumulative Statistics}}
& \multicolumn{4}{c}{\textbf{Sheaf Quality}} \\
\cmidrule(lr){2-3} \cmidrule(lr){4-7}
& {$|E_{\cap}|$} & {$|E_{\cup}|$}
& {Edge D. (\%)} & {$\#E$} & {Acc. (\%)} & {Comp. (\%)} \\
\midrule
0  & 677 & 677  & 2.08\% & 677 & 84.00 & 51.00 \\
1  & 247 & 1205 & 2.39\% & 775 & 86.00 & 49.00 \\
2  & 147 & 1537 & 2.21\% & 718 & 93.00 & 50.00 \\
3  & 111 & 1815 & 2.44\% & 794 & 97.00 & 50.00 \\
4  & 94  & 2153 & 2.69\% & 873 & 93.00 & 52.00 \\
5  & 82  & 2529 & 3.02\% & 980 & 98.00 & 50.00 \\
6  & 75  & 2815 & 2.70\% & 878 & 98.00 & 50.00 \\
7  & 69  & 3022 & 2.52\% & 819 & 95.00 & 50.00 \\
8  & 67  & 3252 & 2.64\% & 858 & 97.00 & 50.00 \\
9  & 60  & 3546 & 3.01\% & 977 & 99.00 & 48.00 \\
10 & 57  & 3701 & 2.39\% & 775 & 92.00 & 49.00 \\
11 & 54  & 3796 & 2.18\% & 709 & 99.00 & 56.00 \\
12 & 48  & 4002 & 2.88\% & 936 & 95.00 & 50.00 \\
13 & 44  & 4168 & 2.58\% & 838 & 88.00 & 47.00 \\
14 & 42  & 4313 & 2.82\% & 917 & 99.00 & 55.00 \\
15 & 40  & 4389 & 2.30\% & 747 & 98.00 & 55.00 \\
16 & 38  & 4488 & 2.71\% & 879 & 97.00 & 50.00 \\
17 & 38  & 4588 & 2.70\% & 877 & 99.00 & 50.00 \\
18 & 37  & 4728 & 2.84\% & 924 & 97.00 & 51.00 \\
19 & 33  & 4813 & 2.74\% & 891 & 96.00 & 53.00 \\
\bottomrule
\end{tabular}

\vspace{1em}

\begin{tabular}{
    c cc
    c ccc
}
\toprule
\multicolumn{7}{c}{\textbf{DNA ia -- OASR}} \\
\midrule
\multirow{2.5}{*}{\textbf{ID}}
& \multicolumn{2}{c}{\textbf{Cumulative Statistics}}
& \multicolumn{4}{c}{\textbf{Sheaf Quality}} \\
\cmidrule(lr){2-3} \cmidrule(lr){4-7}
& {$|E_{\cap}|$} & {$|E_{\cup}|$}
& {Edge D. (\%)} & {$\#E$} & {Acc. (\%)} & {Comp. (\%)} \\
\midrule
0  & 785 & 785  & 2.42\% & 785 & 99.00 & 41.00 \\
1  & 244 & 1445 & 2.78\% & 904 & 96.00 & 51.00 \\
2  & 143 & 1860 & 2.43\% & 790 & 96.00 & 55.00 \\
3  & 80  & 2091 & 1.67\% & 541 & 86.00 & 51.00 \\
4  & 62  & 2487 & 2.60\% & 844 & 87.00 & 56.00 \\
5  & 53  & 2869 & 3.03\% & 985 & 96.00 & 51.00 \\
6  & 48  & 3053 & 2.21\% & 718 & 95.00 & 55.00 \\
7  & 46  & 3229 & 2.39\% & 775 & 100.00 & 53.00 \\
8  & 44  & 3411 & 2.25\% & 731 & 88.00 & 55.00 \\
9  & 43  & 3612 & 2.56\% & 833 & 97.00 & 53.00 \\
10 & 41  & 3807 & 2.58\% & 838 & 100.00 & 55.00 \\
11 & 37  & 4007 & 2.48\% & 807 & 87.00 & 45.00 \\
12 & 31  & 4296 & 2.96\% & 963 & 82.00 & 50.00 \\
13 & 31  & 4409 & 2.18\% & 707 & 95.00 & 50.00 \\
14 & 30  & 4527 & 2.43\% & 791 & 98.00 & 50.00 \\
15 & 29  & 4602 & 2.08\% & 676 & 99.00 & 55.00 \\
16 & 27  & 4700 & 1.96\% & 636 & 96.00 & 51.00 \\
17 & 27  & 4859 & 2.59\% & 842 & 97.00 & 61.00 \\
18 & 25  & 4954 & 2.19\% & 711 & 87.00 & 55.00 \\
19 & 25  & 5063 & 2.60\% & 844 & 97.00 & 55.00 \\
\bottomrule
\end{tabular}

\end{table*}
\clearpage

\begin{table*}[t]
\centering
\small
\caption{Docstring: Per-sheaf cumulative overlap and sheaf quality (Random Init and OASR).}
\label{tab:docstring_all_sheaves}
\begin{tabular}{
    c cc
    c ccc
}
\toprule
\multicolumn{7}{c}{\textbf{Docstring -- Random Init}} \\
\midrule
\multirow{2.5}{*}{\textbf{ID}}
& \multicolumn{2}{c}{\textbf{Cumulative Statistics}}
& \multicolumn{4}{c}{\textbf{Sheaf Quality}} \\
\cmidrule(lr){2-3} \cmidrule(lr){4-7}
& {$|E_{\cap}|$} & {$|E_{\cup}|$}
& {Edge D. (\%)} & {$\#E$} & {Acc. (\%)} & {Comp. (\%)} \\
\midrule
0  & 1081 & 1081 & 3.33\% & 1081 & 100.00 & 51.00 \\
1  & 356  & 1930 & 3.71\% & 1205 & 98.40  & 54.00 \\
2  & 220  & 2466 & 3.41\% & 1107 & 100.00 & 50.90 \\
3  & 149  & 3087 & 3.92\% & 1275 & 97.10  & 51.30 \\
4  & 122  & 3543 & 3.59\% & 1166 & 99.70  & 51.90 \\
5  & 105  & 3868 & 3.27\% & 1064 & 100.00 & 49.40 \\
6  & 92   & 4105 & 2.82\% & 916  & 99.80  & 51.70 \\
7  & 81   & 4413 & 3.40\% & 1105 & 99.80  & 52.60 \\
8  & 67   & 4767 & 3.83\% & 1244 & 99.40  & 50.90 \\
9  & 64   & 4980 & 3.38\% & 1099 & 99.60  & 52.00 \\
10 & 59   & 5162 & 2.91\% & 944  & 100.00 & 52.60 \\
11 & 53   & 5262 & 2.39\% & 775  & 100.00 & 49.40 \\
12 & 51   & 5488 & 3.59\% & 1166 & 99.80  & 50.70 \\
13 & 49   & 5674 & 3.51\% & 1139 & 100.00 & 51.80 \\
14 & 47   & 5854 & 3.15\% & 1025 & 100.00 & 51.10 \\
15 & 46   & 5986 & 3.04\% & 988  & 100.00 & 50.30 \\
16 & 40   & 6209 & 3.55\% & 1154 & 98.60  & 53.60 \\
17 & 40   & 6397 & 3.62\% & 1175 & 100.00 & 52.30 \\
18 & 40   & 6568 & 3.66\% & 1188 & 100.00 & 53.20 \\
19 & 39   & 6719 & 3.70\% & 1202 & 100.00 & 52.40 \\
\bottomrule
\end{tabular}

\vspace{1em}

\begin{tabular}{
    c cc
    c ccc
}
\toprule
\multicolumn{7}{c}{\textbf{Docstring -- OASR}} \\
\midrule
\multirow{2.5}{*}{\textbf{ID}}
& \multicolumn{2}{c}{\textbf{Cumulative Statistics}}
& \multicolumn{4}{c}{\textbf{Sheaf Quality}} \\
\cmidrule(lr){2-3} \cmidrule(lr){4-7}
& {$|E_{\cap}|$} & {$|E_{\cup}|$}
& {Edge D. (\%)} & {$\#E$} & {Acc. (\%)} & {Comp. (\%)} \\
\midrule
0  & 1081 & 1081 & 3.33\% & 1081 & 100.00 & 51.00 \\
1  & 236  & 1842 & 3.07\% & 997  & 100.00 & 51.80 \\
2  & 140  & 2601 & 3.64\% & 1182 & 99.60  & 52.80 \\
3  & 92   & 3125 & 3.06\% & 993  & 96.30  & 51.40 \\
4  & 58   & 3729 & 3.56\% & 1156 & 98.90  & 51.60 \\
5  & 48   & 4106 & 2.77\% & 899  & 99.90  & 52.50 \\
6  & 44   & 4418 & 2.66\% & 863  & 100.00 & 50.80 \\
7  & 39   & 4788 & 3.50\% & 1137 & 95.60  & 53.00 \\
8  & 37   & 5077 & 3.28\% & 1065 & 99.70  & 53.30 \\
9  & 33   & 5338 & 3.00\% & 974  & 100.00 & 51.80 \\
10 & 32   & 5627 & 3.19\% & 1035 & 100.00 & 52.30 \\
11 & 22   & 5778 & 1.87\% & 607  & 100.00 & 52.00 \\
12 & 22   & 6049 & 3.59\% & 1167 & 98.20  & 50.20 \\
13 & 21   & 6373 & 3.67\% & 1193 & 100.00 & 51.90 \\
14 & 21   & 6542 & 2.90\% & 941  & 99.30  & 51.10 \\
15 & 19   & 6742 & 2.96\% & 962  & 99.10  & 50.70 \\
16 & 19   & 6873 & 2.65\% & 861  & 96.30  & 54.30 \\
17 & 19   & 7017 & 2.97\% & 966  & 100.00 & 52.00 \\
18 & 19   & 7132 & 2.65\% & 862  & 100.00 & 52.20 \\
19 & 18   & 7314 & 3.25\% & 1055 & 97.80  & 48.40 \\
\bottomrule
\end{tabular}
\end{table*}

\begin{figure}[t]
    \centering
        \includegraphics[width=1.0\linewidth]{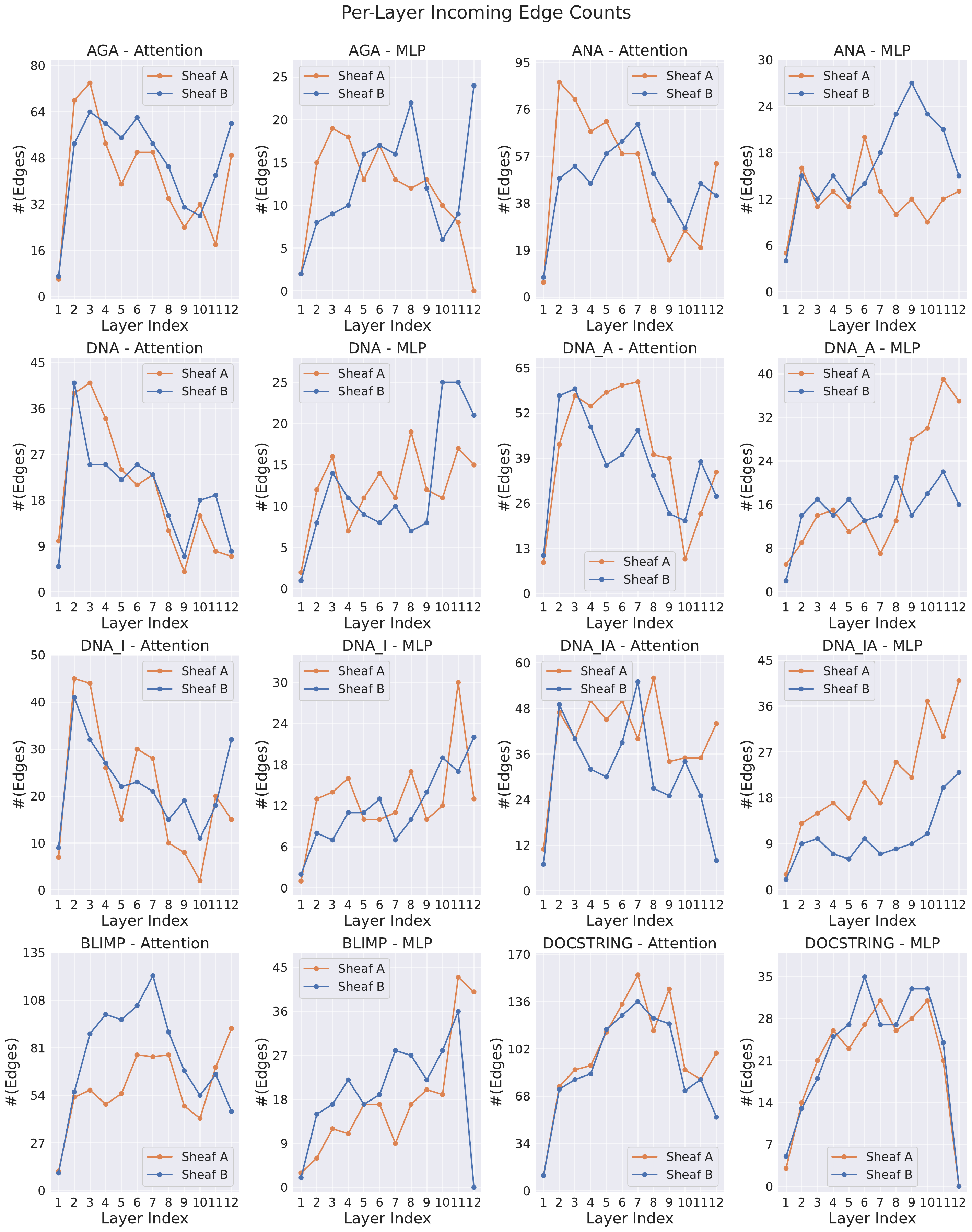}
        \caption{The distributions of incoming edge count across layers and different types of components in pairs of sheaves discovered.}
        \label{fig:combined_per_layer_edge_count}
\end{figure}

\clearpage
\newpage

\begin{figure}[t]
    \centering
        \includegraphics[width=1.0\linewidth]{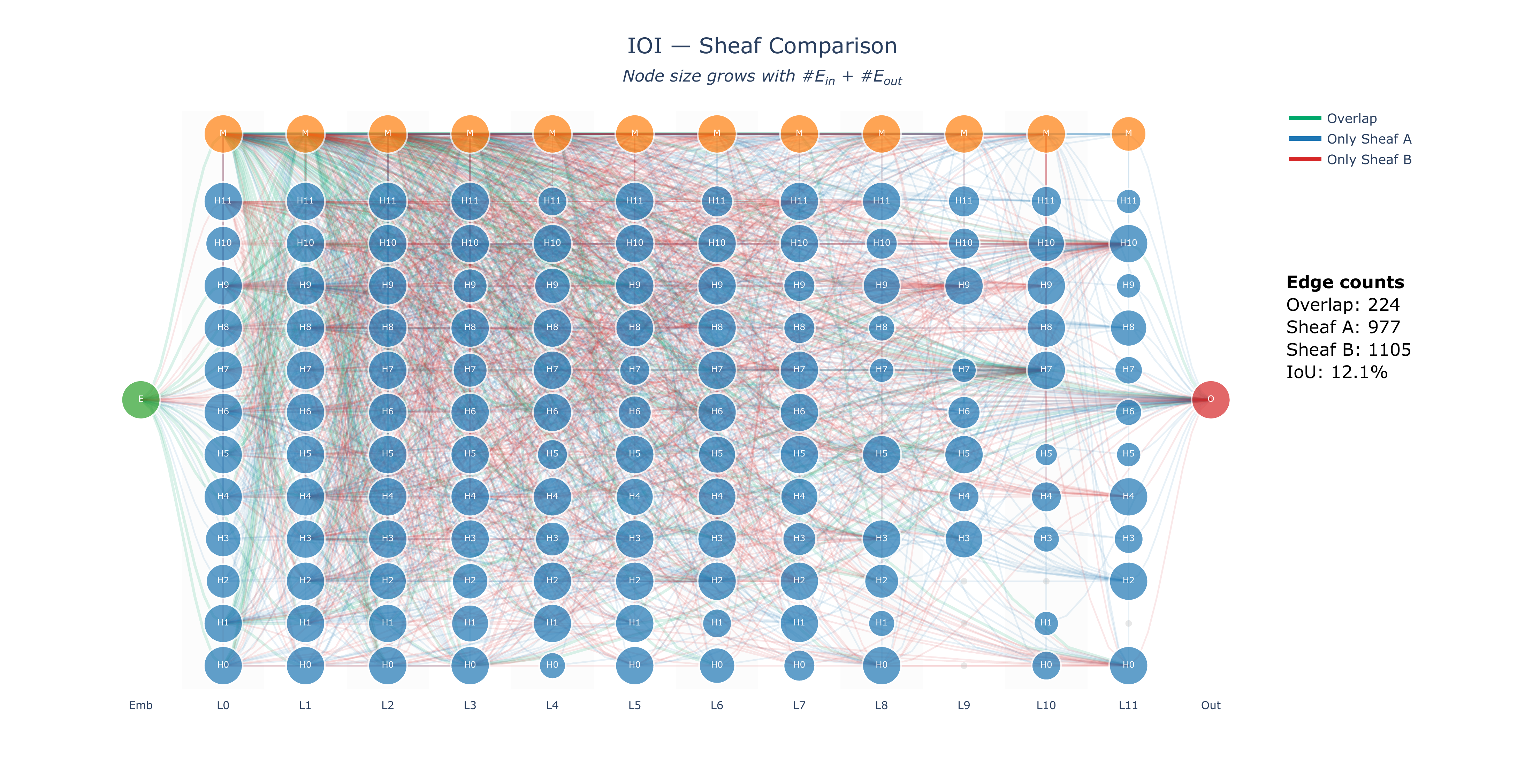}
        \includegraphics[width=1.0\linewidth]{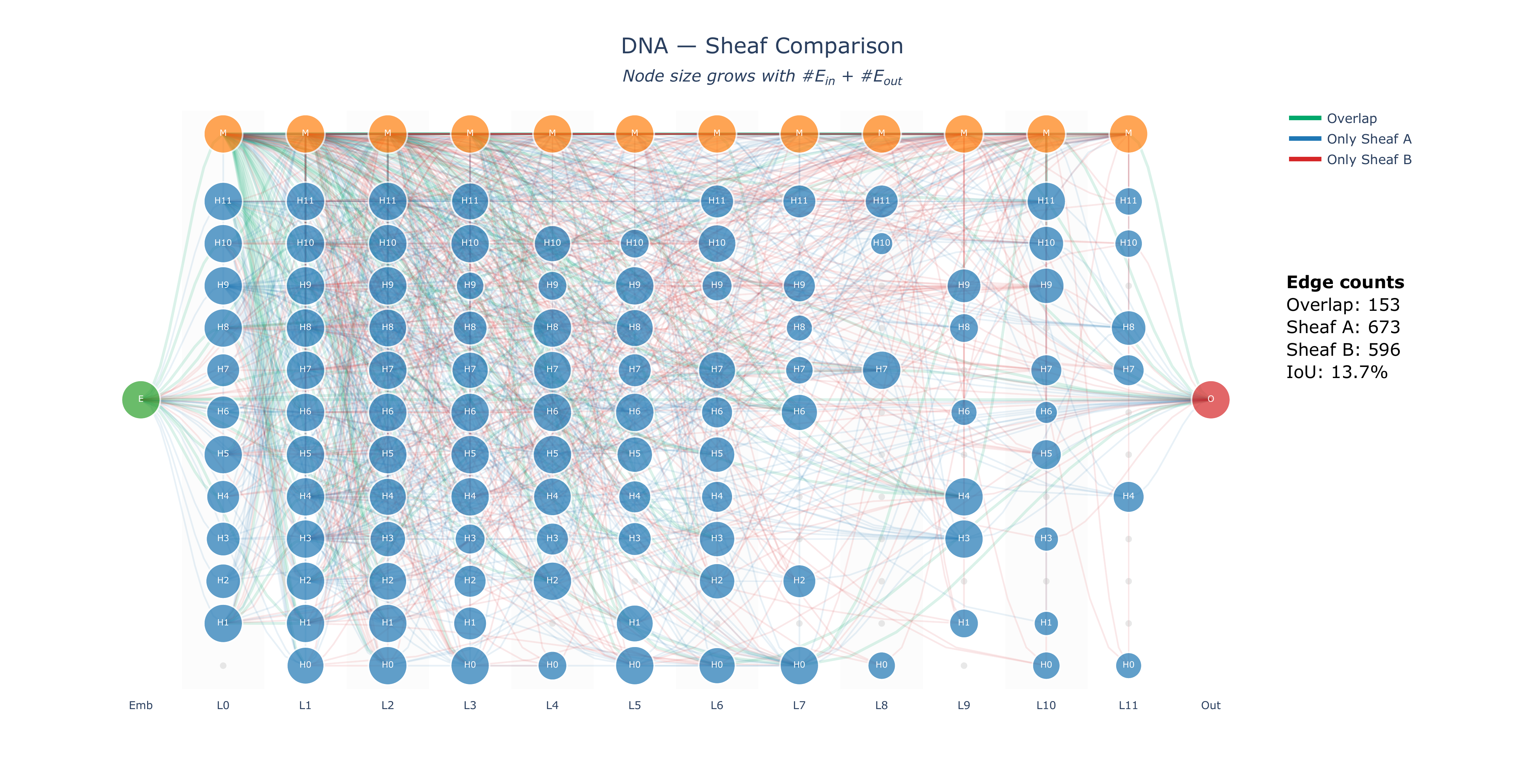}
        \caption{We visualize the computational graphs of the sheaves discovered for the IOI and DNA tasks. When the computation graph is rendered at a very fine granularity (e.g., with explicit Q/K/V/O decomposition), the resulting edge density leads to severe visual clutter that obscures the global structure. Note that this also results in fewer edge counts by re-merging the Q/K/V/O nodes. In contrast, representing the model at the level of attention heads and MLP blocks yields a more compact and interpretable graph that preserves the salient connectivity patterns. These visualizations further illustrate that substantial structural differences can nonetheless give rise to functionally equivalent computational substructures.}
        \label{fig:viz_circuit_comparison_ioi_dna}
\end{figure}

\begin{figure}[t]
    \centering
        \includegraphics[width=1.0\linewidth]{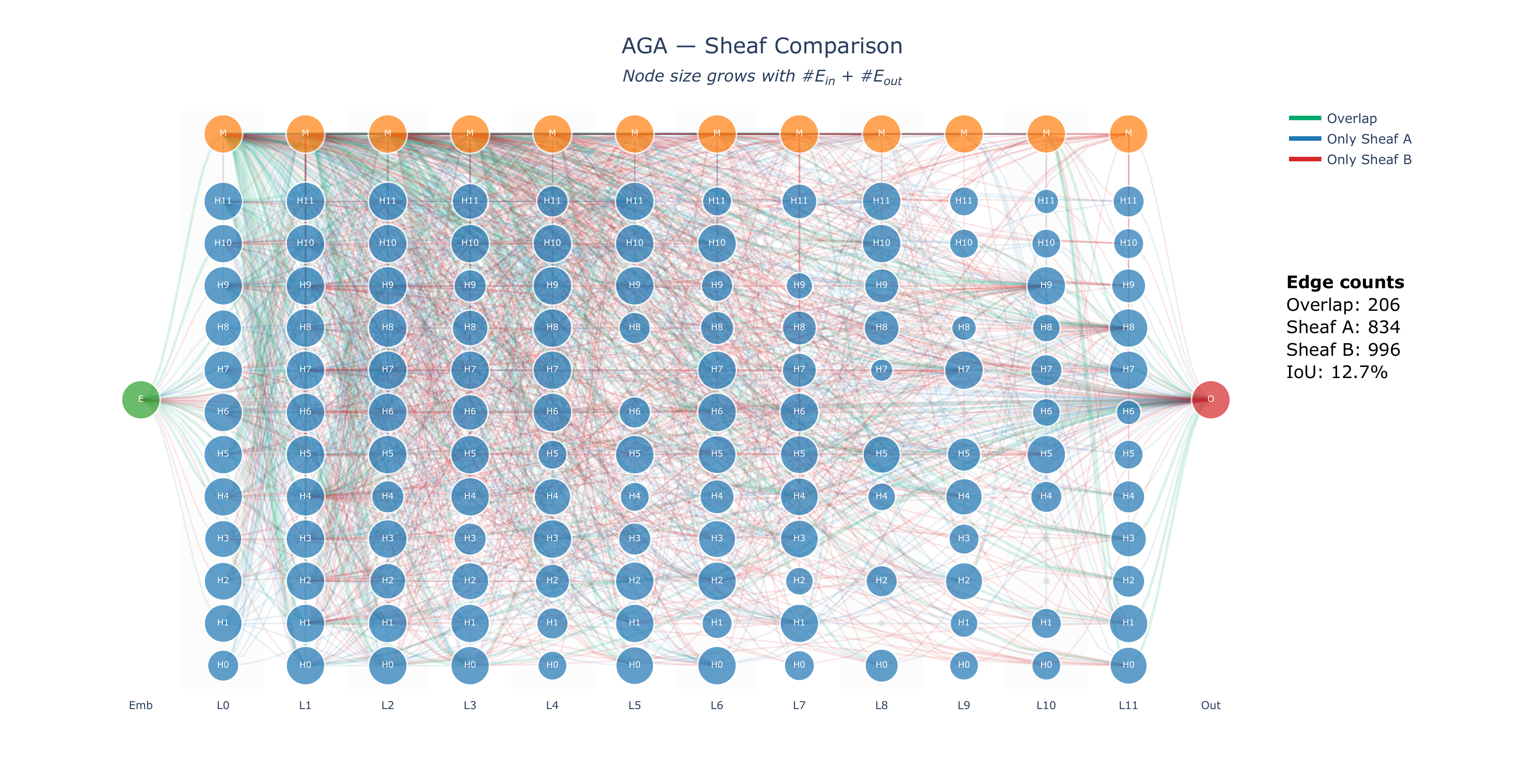}
        \includegraphics[width=1.0\linewidth]{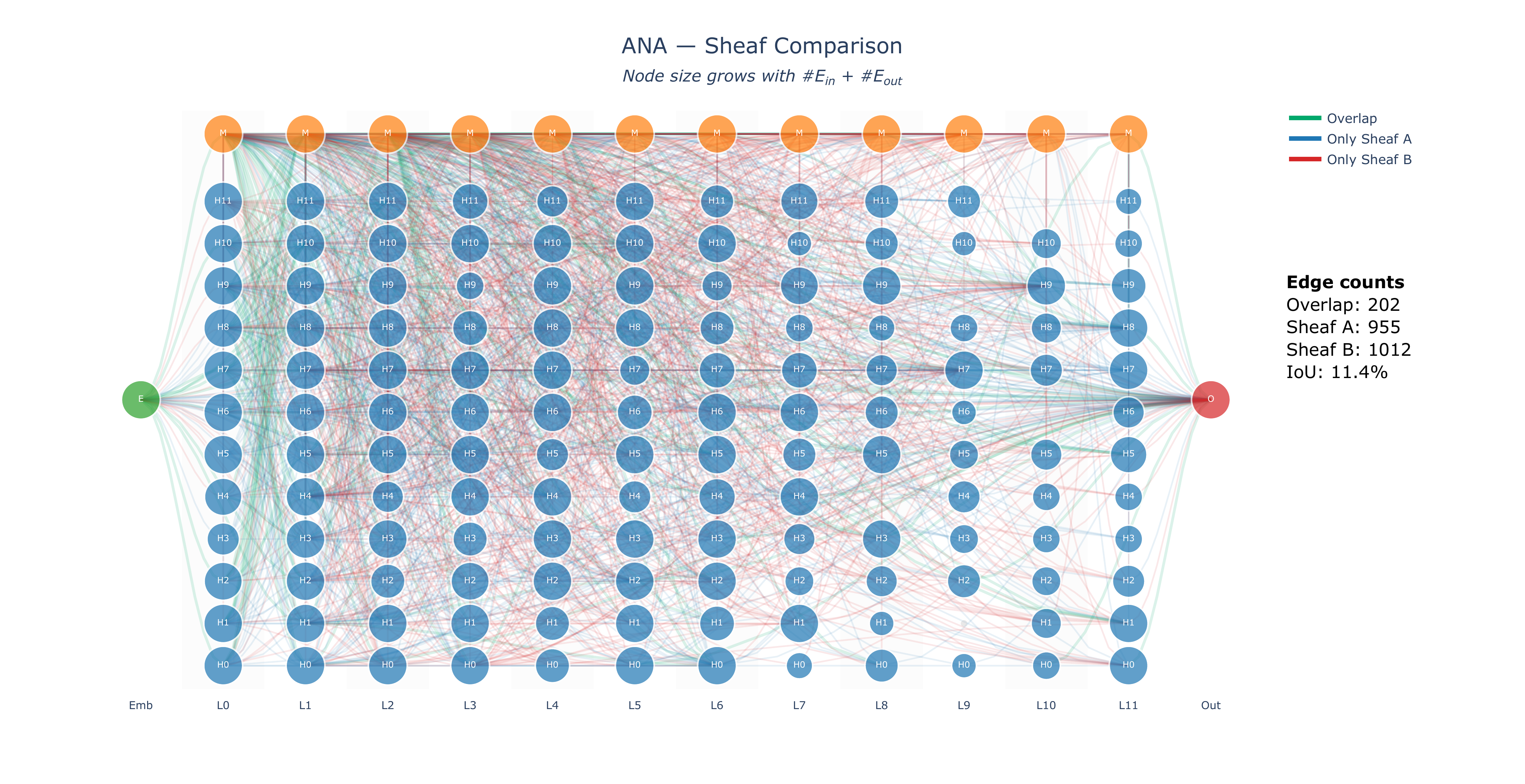}
        \caption{We visualize the computational graphs of the sheaves discovered for the AGA and ANA tasks. The sparseness of the overlapping edges supports the similar argument as given above for Figure \ref{fig:viz_circuit_comparison_ioi_dna}.}
        \label{fig:viz_circuit_comparison_aga_ana}
\end{figure}

\begin{figure}[t]
    \centering
        \includegraphics[width=1.0\linewidth]{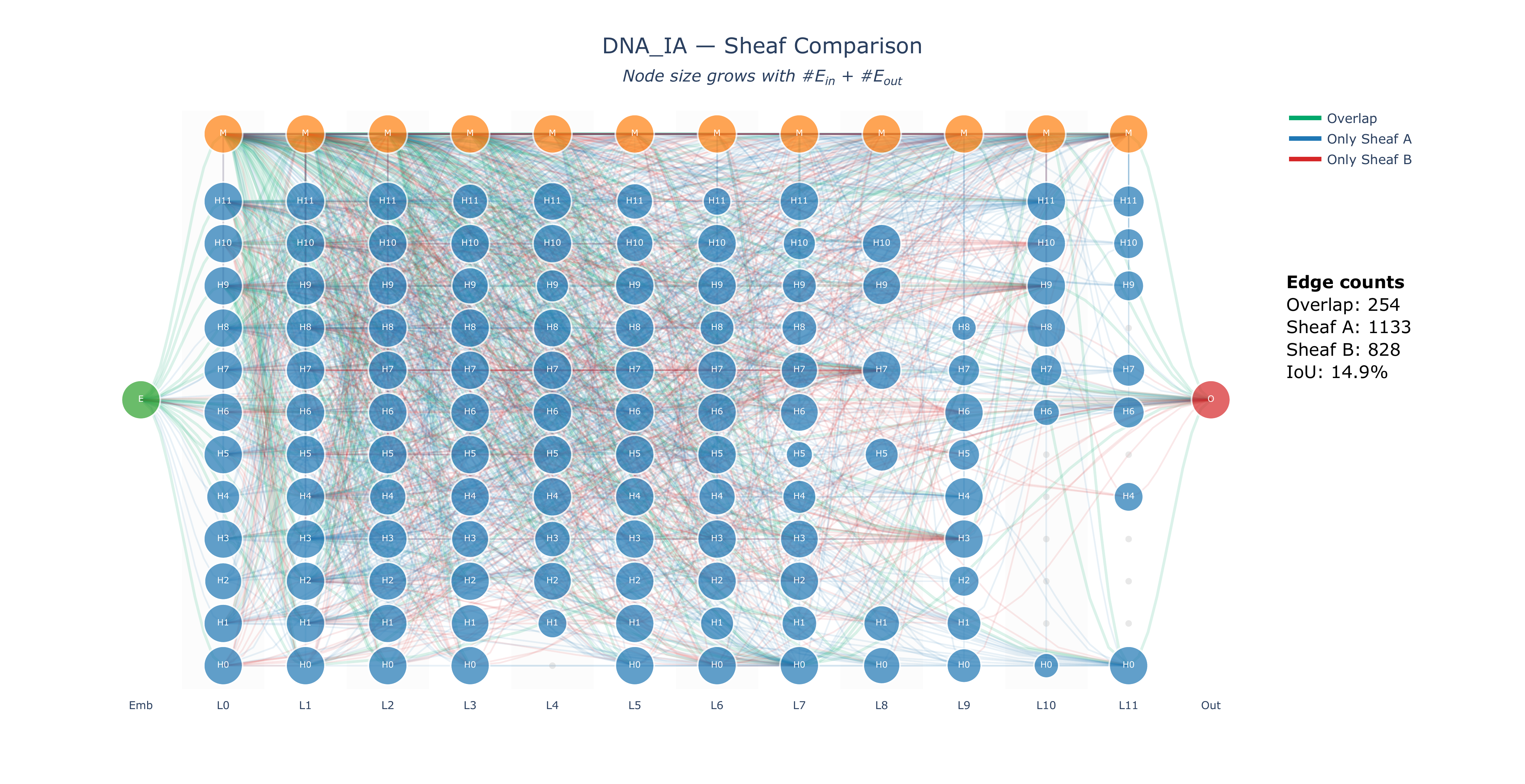}
        \includegraphics[width=1.0\linewidth]{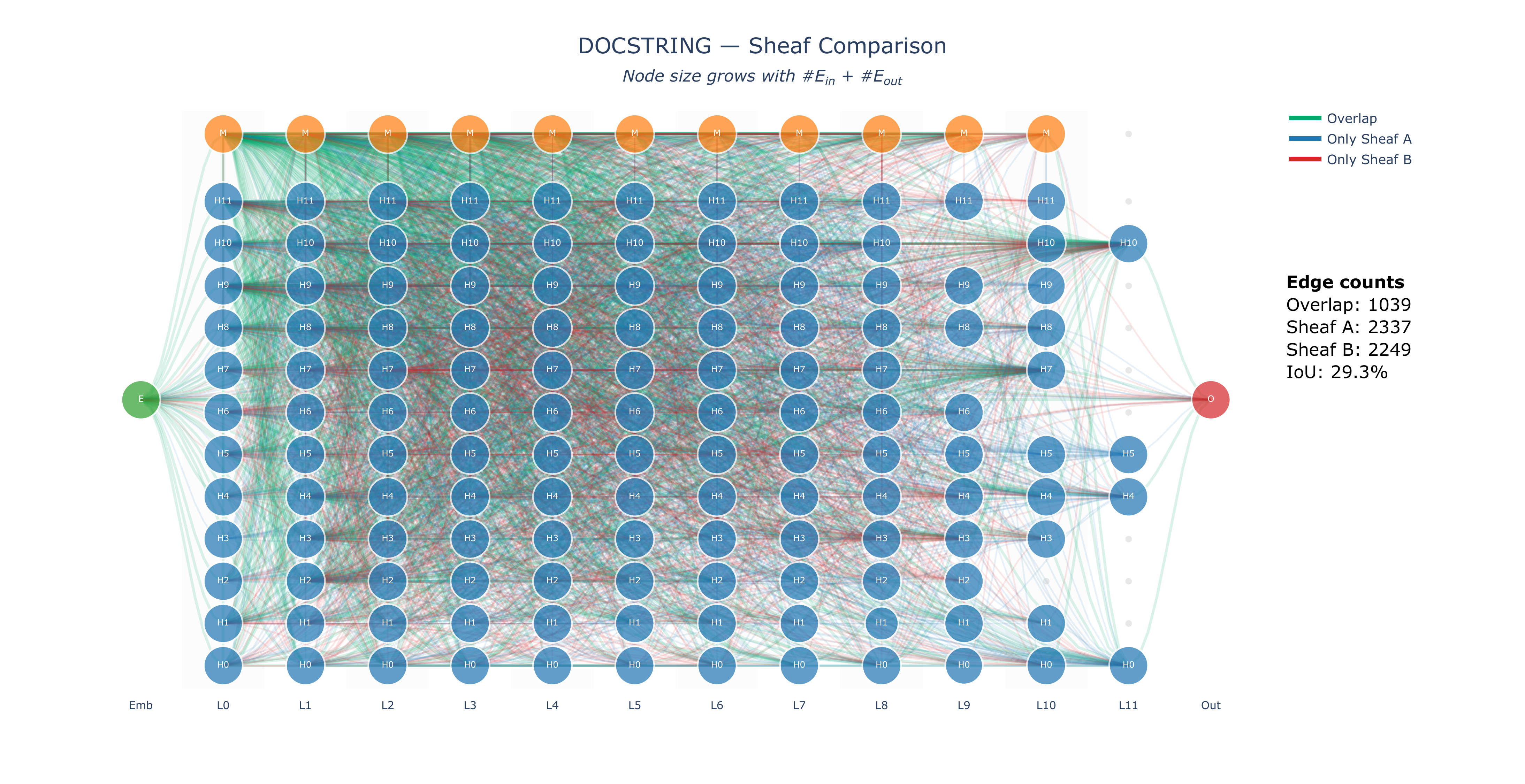}
        \caption{We visualize the computational graphs of the sheaves discovered for the DNA ia and Docstring tasks. The same conclusion still holds for these two tasks given the visual similarity of the connectivity patterns across all sheaf visualizations.} \label{fig:viz_circuit_comparison_dna_ia_docstring}
\end{figure}

\clearpage
\newpage

\section{Extended Related Work}
\subsection{Explainability of Language Models}
With the remarkable performance of LLMs in natural language processing tasks, understanding the mechanisms behind their decisions (i.e., ``interpretability'') has become a key area of research~\cite{leeInterpretationMeetsSafety2025, du2019techniques, zhao2024explainability, zhao2025towards, shu-etal-2025-survey}. Existing research on LLM interpretability can be broadly categorised into four main directions: feature attribution methods, representation analysis and probing methods, mechanistic interpretability, and natural language-based interpretation.

\paragraph{Feature Attribution Methods} These methods aim to quantify how much each input token contributes to a model’s final prediction, thereby producing token-level evidence distributions~\cite{gao2024going}. Early work on Transformer interpretability often relied on attention-based analyses, treating attention weights as direct proxies for feature importance~\cite{serrano2019attention}. However, subsequent studies argue that attention is not inherently faithful: substantially different attention distributions can yield nearly the same outputs, and attention weights may correlate weakly with more intervention-relevant importance measures, calling into question their use as explanations in a strict causal sense~\cite{jain2019attention, bai2021attentions}. Motivated by these limitations, the literature increasingly adopts gradient-based explanations (e.g., saliency maps) that measure local sensitivity of the output to the input representation~\cite{simonyan2013deep, ancona2018towards}. To improve stability and satisfy principled axioms such as sensitivity and implementation invariance, Integrated Gradients computes path-integrated attributions from a baseline to the input and has become a standard tool for token-level interpretation in NLP~\cite{sundararajan2017axiomatic}. Complementary smoothing techniques such as SmoothGrad reduce visual and numerical noise in gradient-based maps~\cite{smilkov2017smoothgrad}. Beyond gradients, perturbation/erasure-based approaches evaluate token importance by removing or masking parts of the input (or intermediate representations) and measuring the induced change in prediction, offering a more direct intervention test of attribution faithfulness \cite{li2016visualizing}. Finally, game-theoretic attribution frameworks grounded in the Shapley value formalism motivate additive feature-credit decompositions; modern instantiations such as LIME and SHAP provide practical approximations for local explanations and unify several additive attribution families~\cite{lundberg2017unified, ribeiro2016should, shapley1953value}.

\paragraph{Representation Analysis and Probing Methods} These methods aim to understand what linguistic knowledge is encoded in the internal hidden states of neural language models. A dominant paradigm in this line of work is the use of probing classifiers, which train lightweight supervised models---typically linear classifiers or shallow neural networks---on frozen representations to predict specific linguistic properties~\cite{conneau2018you, adi2017fine, belinkov2022probing}. These properties range from low-level features such as part-of-speech tags and morphological attributes to higher-level structures including syntactic dependency depth, constituency structure, semantic roles, and entity relations. Successful probe performance is commonly interpreted as evidence that the probed linguistic information is linearly or simply recoverable from the representations at a given layer~\cite{hewitt-manning-2019-structural, conneau2018you, hewitt2019designing, tenney2019bert}. Beyond standard probes, specialised techniques such as structural probes explicitly test whether syntactic structures are embedded in representation geometry, for example by learning a low-rank transformation under which Euclidean distances correspond to dependency tree distances~\cite{hewitt-manning-2019-structural}. However, the probing paradigm has also raised methodological concerns: high probe accuracy may reflect the expressive power of the probe rather than the information genuinely encoded in the representation. To address this issue, prior work introduces control tasks, selectivity measures, and probe-capacity constraints to disentangle representational content from probe learning effects~\cite{tenney2019bert}. Complementary perspectives frame probing as an instance of information estimation, arguing that probes should be interpreted as bounds on mutual information between representations and linguistic variables rather than definitive evidence of functional usage~\cite{pimentel2020information}. At a broader level, extensive layer-wise analyses reveal a consistent hierarchical organisation in Transformer-based language models. Empirical studies show that lower layers tend to encode surface-level and morphological features, intermediate layers capture syntactic relations and structural regularities, and higher layers increasingly represent semantic, contextual, and discourse-level information~\cite{tenney2019bert}. This progression has often been summarised as pretrained language models “rediscovering” a classical NLP processing pipeline. Comprehensive surveys further systematise these findings and discuss the limitations of probing-based analysis, particularly regarding faithfulness, causality, and cross-task generalisation~\cite{rogers2020primer, belinkov2022probing, niu2022does, jin2025exploring, jin2026farther}.

\paragraph{Mechanistic Interpretability} This line of work seeks to understand neural networks by reverse engineering their internal computations. Rather than treating models as black boxes or focusing solely on input–output correlations, it aims to identify the functional roles of concrete model components, such as individual neurons, attention heads, and MLP channels, to explain model behaviour in terms of explicit, human-interpretable computational mechanisms~\cite{sharkey2025open, zhao2025towards, dreyer2025mechanistic}. A central methodology in this paradigm is circuit analysis, which decomposes complex model behaviours into structured interaction patterns among a small set of components that collectively implement an algorithmic function~\cite{olahZoomIntroductionCircuits2020}. Subsequent mechanistic studies on Transformers demonstrated that specific attention heads, most notably induction heads, play a critical role in tasks involving in-context pattern matching and sequence continuation by attending to repeated token pairs and copying information across positions~\cite{olsson2022context}. These findings provide concrete evidence that Transformer models internally instantiate algorithmic primitives, such as prefix matching and copy mechanisms, through localised and reusable circuits~\cite{elhageMathematicalFrameworkTransformer2021}. By grounding explanations in identifiable modules and causal interventions, mechanistic interpretability offers a path toward faithful, causal accounts of model behaviour beyond descriptive attribution or representational probing~\cite{wangInterpretabilityWildCircuit2022, jin2025massive}.

In particular, a series of studies leverage causal tracing and activation patching to localise where and how specific factual, syntactic, or reasoning-related information is represented and propagated through the model, enabling fine-grained intervention at the level of attention heads, MLP weights and neurons, and residual streams~\cite{mengLocatingEditingFactual2022, gevaTransformerFeedForwardLayers2021, wangInterpretabilityWildCircuit2022, niuWhatDoesKnowledge2024, niuLlamaSeeLlama2025, tiblias2026hypothesisdriven}. Parallel to this line, feature-based decompositions such as sparse autoencoders (SAEs) have been introduced to disentangle superposed representations into more interpretable latent features, facilitating neuron- and subspace-level analysis at scale and revealing structured computational motifs underlying abstract concepts and behaviours~\cite{bricken2023towards, huben2023sparse, han2026sage, he2025sae}. These approaches have enabled the discovery of higher-level circuits responsible for phenomena such as indirect object identification, multi-token copying, and semantic role tracking, moving mechanistic interpretability beyond single-head analyses toward multi-component, distributed circuits~\cite{elhage2022toy}. At the same time, recent studies emphasise the non-uniqueness and redundancy of mechanistic explanations: multiple distinct circuits or feature combinations may implement functionally equivalent behaviours, challenging the notion of a single canonical decomposition~\cite{elhage2022toy, dreyer2025mechanistic, meloux2025everything}. This observation has motivated a shift from identifying minimal circuits toward characterising families of functionally valid mechanisms and understanding their stability across model scales, training checkpoints, and distribution shifts. Collectively, these post-2022 advances position mechanistic interpretability as a unifying, causal framework that bridges attribution, representation analysis, and intervention-based model editing, while also highlighting the need for principled abstractions that remain faithful yet scalable for modern large language models.

\paragraph{Natural Language-Based Interpretation} This approach takes a different route from attribution, probing, or circuit analysis: instead of explaining a model by inspecting internal signals, it asks the model (or an auxiliary model) to produce human-readable explanations e.g., free-text rationales, step-by-step chain-of-thought (CoT), or self-critiques alongside predictions~\cite{weiChainofthoughtPromptingElicits2022}. Early work operationalised this idea by learning to generate rationales as textual justifications, either by extracting input fragments as “sufficient” explanations or by producing free-form explanations supervised from human-written rationales~\cite{lei2016rationalizing}. With the rise of instruction-tuned LLMs, prompting-based approaches such as CoT have become a widely used interface for eliciting intermediate reasoning and improving performance on multi-step tasks~\cite{jacovi2020towards}. However, a growing body of work questions whether such generated rationales faithfully reflect the model's underlying computation, showing that CoT traces can be post-hoc, inconsistent with intervention-based evidence, or sensitive to prompt-level perturbations.
Recent work therefore emphasises that natural language explanations should not be treated as self-authenticating evidence of “how the model reasoned,” and instead should be validated via consistency checks, counterfactual tests, or complementary causal analyses; accordingly, some approaches explicitly incorporate counterfactual or faithfulness-oriented objectives to improve the reliability of rationale-based reasoning pipelines~\cite{wangPINTOFaithfulLanguage2022}.

\subsection{Circuit Discovery History}
There have been multiple variants of the definition of a circuit. \citet{olahZoomIntroductionCircuits2020} defined it as a subgraph of a neural network consisting of a set of features and the weighted connections between them. The term has since been used more broadly to refer to a collection of network components and a subset of their connections \citep{wangInterpretabilityWildCircuit2022, conmyAutomatedCircuitDiscovery2023}. However, what constitutes a network component remains debated in the community: common choices include attention heads and MLPs, while edges typically refer to the weighted contributions of these components to the residual stream. A major breakthrough in circuit theory was moving away from seeing the model as a sequence of layers stacked together and instead viewing it as a residual stream. In this view, the residual stream is a shared communication channel. Circuits are then defined by which components read from the stream and which write to it.
\citet{elhageMathematicalFrameworkTransformer2021} formalised the idea that attention heads can be understood as independent circuits. By analysing the product of weight matrices ($W_OW_V$ circuit for content and $W_Q^\top W_K$ circuit for allocation), one can identify ``copying'' and ``composition'' model behavioural mechanism purely from weights.

Early work relied on manual search through interventions \citep{mengLocatingEditingFactual2022} and logit lens–style readouts of intermediate representations \citep{gevaTransformerFeedForwardLayers2022} to identify circuits for simple tasks in the pursuit of mechanistic explanations of model behaviour. More recently, ACDC \citep{conmyAutomatedCircuitDiscovery2023} automated the process of circuit discovery by evaluating, in reverse topological order, the local importance of component connections (edges) using KL divergence, and pruning those below a predefined threshold. \citet{syedAttributionPatchingOutperforms2024} improved upon ACDC by introducing efficient first-order estimation of edge attribution. There has been follow up work on attribution patching with improved approximation and incorporation of the positional information \citep{zhangEAPGPMitigatingSaturation2025, hannaHaveFaithFaithfulness2024}. Gradient-based methods further enable efficient global assessment of component importance under sparsity regularisation \citep{bhaskarFindingTransformerCircuits2024, yuSheafDiscoveryJoint2025}, empirically uncovering more compact and functional circuits without relying on local approximations.

\section{The Use of Large Language Models (LLMs)}
Large Language Models (LLMs), such as ChatGPT, were used solely as general-purpose tools for \textbf{(i)} language editing and stylistic polishing of manuscript drafts and \textbf{(ii)} limited coding assistance for minor boilerplate components (e.g., plotting scripts and small utility functions). All LLM-generated outputs were carefully reviewed, revised, and validated by the authors prior to use. All research ideas, algorithmic designs, experimental setups, datasets, analyses, and conclusions were independently conceived, executed, and verified by the authors. LLMs were \textbf{not} used to generate experimental results, annotations, ground-truth labels, or to make methodological or scientific decisions. The authors take full responsibility for the content of this paper.

\end{document}